\documentclass{article}
\usepackage[accepted]{icml2024}

\usepackage{microtype}
\usepackage{graphicx}
\usepackage{subfigure}
\usepackage{booktabs} 

\usepackage{hyperref}

\usepackage{amsmath}
\usepackage{amssymb}
\usepackage{mathtools}
\usepackage{amsthm}
\usepackage{mathrsfs}
\usepackage{nccmath}
\usepackage[textsize=tiny]{todonotes}
\usepackage[capitalize,noabbrev]{cleveref}

\DeclareMathOperator{\mint}{\medmath\int}
\DeclareMathOperator{\miint}{\medmath\int}

\theoremstyle{plain}
\newtheorem{theorem}{Theorem}[section]
\newtheorem{proposition}[theorem]{Proposition}
\newtheorem{lemma}[theorem]{Lemma}
\newtheorem*{lemma*}{Lemma}

\newtheorem{definition}[theorem]{Definition}
\theoremstyle{definition}

\theoremstyle{remark}

\icmltitlerunning{Kernel-Based Evaluation of Conditional Biological Sequence Models}

\begin{document}

\twocolumn[
\icmltitle{Kernel-Based Evaluation of Conditional Biological Sequence Models}

\begin{icmlauthorlist}
\icmlauthor{Pierre Glaser}{gatsby}
\icmlauthor{Steffanie Paul}{hmss}
\icmlauthor{Alissa M. Hummer}{hmss,oxford}
\icmlauthor{Charlotte M. Deane}{oxford}
\icmlauthor{Debora S. Marks}{hmsb}
\icmlauthor{Alan N. Amin}{nyu}
\end{icmlauthorlist}

\icmlaffiliation{gatsby}{Gatsby Computational Neuroscience Unit, London, UK}
\icmlaffiliation{hmss}{Systems Biology, Harvard Medical School, Boston, USA}
\icmlaffiliation{hmsb}{Harvard Medical School, Broad Institute, Boston, USA}
\icmlaffiliation{oxford}{Department of Statistics, University of Oxford, Oxford, UK}
\icmlaffiliation{nyu}{Courant Institute, New York University, New York, USA}

\icmlcorrespondingauthor{Pierre Glaser}{pierreglaser@gmail.com}

\icmlkeywords{Machine Learning, ICML}

\vskip 0.3in
]

\printAffiliationsAndNotice{}  

\begin{abstract}
    We propose a set of kernel-based tools to evaluate the designs and tune
    the hyperparameters of conditional sequence models, with a focus on problems
    in computational biology. The backbone of our tools is a new measure of
    discrepancy between the true conditional distribution and the model's estimate,
    called the Augmented Conditional Maximum Mean Discrepancy (ACMMD). Provided that
    the model can be sampled from, the ACMMD can be estimated unbiasedly from data
    to quantify absolute model fit, integrated within hypothesis tests, and 
    used to evaluate model reliability. We demonstrate the utility of our approach by analyzing a popular protein design model, ProteinMPNN.
    We are able to reject the hypothesis that ProteinMPNN fits its data for various protein families,
    and tune the model's temperature hyperparameter to achieve a better fit.
\end{abstract}

\vspace{-2em}

\section{Introduction} \label{sec:introduction}

Conditional sequence models constitute one of the most prominent model classes
of modern machine learning. Such models have allowed progress in  longstanding
problems in fields ranging from natural language generation to biomedical
applications such as genomics and protein design. Abstracting away the precise
nature of the data, the objective common to many of these problems can be
summarized as the prediction of high-dimensional discrete-valued sequences,
given some possibly high-dimensional input information about the sequence.
For example, in protein design, inverse folding models \cite{Dauparas2022pmpnn}
seek to learn the conditional distribution of amino acid sequences (proteins)
that are likely to fold to a given input protein backbone 3D geometry, or \emph{structure}.

In such problems, it is crucial to evaluate the properties of the trained model.
Model evaluation can help assess the risk of using the model's predictions
in the real world, such as performing in-vitro experiments (a time-intensive process),
guide hyperparameter searches, and deepen one's understanding of
the model's behavior. 
Two properties are particularly important to measure: the first one
is model \emph{accuracy}, which describes how well the model
approximates the true conditional distribution of the target variable given the input.
Models with high accuracy have learned the underlying structure of the data, suggesting
a high potential value in deploying them in real-world applications.
However, in practice, it is likely that models will not be perfectly accurate.
Inaccurate models can still be useful as long as they fall back to conservative guesses (in the extreme case, the prior distribution) when they are uncertain.
From a statistical perspective,
this property is known as \emph{reliability}
\cite{brocker2008some,vaicenavicius2019evaluating,widmann2022calibration},
and will be the second property of interest in this work.

Given a set of real samples, the standard approach to
evaluate models in protein design consists in using log-likelihoods or sequence recovery
\citep{Dauparas2022pmpnn, Hsu2022esmif1, Gao2022PiFold}. However, log-likelihoods
cannot be used to evaluate reliability, and are only \emph{relative} measures of accuracy:
these methods can only be used to compare
models and would not alert the practitioner for example if all models make very poor
predictions.
Instead, to assess how far a model is from being optimally accurate and consistent ---
and thus the potential value in improving it, by for example
collecting more data or increasing its complexity, one should consider \emph{absolute}
rather than \emph{relative} metrics, that is, metrics that not only allow one to compare models
to each other, but also to evaluate a single model's performance without any other point of comparison.
For these metrics to have
practical value, they should come with estimators computable from data samples.
These estimators should be efficiently computable, recover the true
metric as in the large sample size limit (i.e. be \emph{consistent}), and preferably
be centered around the true value of the metric (i.e. be \emph{unbiased}). Finally,
to factor out the statistical error coming from estimating these metrics using a finite
number of samples, these metrics should be integrable into hypothesis tests
built to detect \emph{statistically significant} mismatches between the model and the data.

{\bfseries Contributions}
In this work, we introduce a set of absolute evaluation
metrics for measuring the accuracy and the reliability of
conditional sequence models.
Both our metrics are grounded in a new measure of divergence between
conditional probability distributions, which we call the Augmented Conditional
Maximum Mean Discrepancy (ACMMD), which extends the kernel-based
conditional goodness-of-fit framework of \citet{jitkrittum2020testing, Glaser2023-sj, widmann2022calibration}
to the case of sequence-valued variables.
We analyze the statistical properties of our proposed metrics, which can be
estimated using samples from the data and the model. Under certain
conditions, we show that the ACMMD is able to detect any mismatch
between the model and the data. In addition, we integrate the ACMMD into
hypothesis tests to detect such mismatches from
the model and the data samples.  We showcase
the utility of our methods by using them in an in-depth analysis of
a popular inverse folding model - ProteinMPNN \citep{Dauparas2022pmpnn}. Our
results demonstrate the theoretical properties of our methods, while also
providing insight as to how to gauge the certainty and applicability of
ProteinMPNN for designing proteins of varying topologies and evolutionary
families.

\section{Problem Setting}\label{sec:problem-setting}
We consider the problem of predicting a discrete sequence-valued variable we are designing $Y \in \mathcal  Y$,
for example a biological sequence, conditionally on a variable $X \in \mathcal  X$ at our disposal.
The predicted sequence $ Y $ 
is allowed to have an arbitrary length, e.g. $ \mathcal  Y = \cup_{\ell=1}^{\infty} \mathcal A^{\ell} $,
where $ \mathcal  A $ is a finite set.
In protein design, $ X $ could be the 3D structure of a protein
(e.g. $ \mathcal  X = \cup_{\ell =1 }^{\infty} \mathbb{R}^{3\ell} $)
and $ Y $ the sequence of amino acids making up the protein, in which case $ \mathcal  A $
is the set of amino acids.
Given a large number of i.i.d measurements
 $\{X_i, Y_i\}_{i=1}^{N_T}$ from a distribution $\mathbb P(X, Y)$, for example pairs of sequences and structures from
the Protein Data Bank  \cite{ingraham2019generative}, we train a \emph{predictive} model
$ Q_{|}: x  \longmapsto Q_{|x} $ that takes in a 
value $x$ and outputs a distribution on $Y$, $Q_{|x}(Y)$ that attempts to match the true
conditional $\mathbb P(Y |X=x)$, denoted $\mathbb P_{|x}(Y)$ in this work.
After training,
we are interested in quantifying how accurately $Q_{|}$ approximates
$\mathbb P_{|}$ on average across all values of $x$ after training, using a held-out set of samples
$\{X_i, Y_i\}_{i=1}^{N} \sim \mathbb P(X, Y)$.
Quantifying the accuracy of $ Q_{|} $ is known as the \emph{conditional goodness-of-fit}
problem, and we address it in \cref{sec:augmented_conditional_mmd}.
Furthermore, we will also be interested in quantifying the reliability of $ Q_{|} $,
a task which we address in \cref{sec:reliability}.

\section{Conditional Goodness--of--Fit with ACMMD}\label{sec:augmented_conditional_mmd}
In this section, we propose a metric that quantifies the accuracy of a predictive sequence model.
We will show that this metric satisfies many desirable properties:
first, it is absolute and able to detect any differences between conditional distributions.
Second, it can be unbiasedly and efficiently estimated using samples from the model and the data distribution.
Third, it can be used in hypothesis tests to detect statistically significant mismatches from such samples.

\subsection{The Augmented Conditional MMD}
We now propose a method to quantitatively
evaluate the conditional goodness--of--fit of $ Q_{|} $ to $ \mathbb P_{|} $.
Our approach
consists in constructing a \emph{divergence} $\operatorname{D}(\mathbb P_{|}, Q_{|})$,
between the conditional distribution of $ Y $ given $ X $ and the model $ Q_{|} $.
By definition, this divergence should satisfy:
\begin{equation}\label{eq:cgof-metric}
\begin{aligned}
(i) \,\, &\operatorname{D}(\mathbb P_{|}, Q_{|} ) \geq 0 \\
(ii) \,\,& \operatorname{D}(\mathbb P_{|}, Q_{|})  = 0 \iff \mathbb P_{|x}=Q_{|x} ,\,\, \mathbb P(X)\text{--a.e.}
\end{aligned}
\end{equation}
Combined, these two properties ensure that $\operatorname{D}(\mathbb P_{|}, Q_{|})$
is \emph{absolute}, e.g. assigns the known value lowest value 0 to the best possible model,
and is able to distinguish any mismatch between the model and the data, which is
crucial to prevent blind spots in our evaluation.
We borrow the idea of comparing $Q_{\mid}$ with $\mathbb P_\mid$ by comparing the joint $\mathbb P(X, Y)$
with a joint that keeps the same marginal $\mathbb P(X)$ but swaps $\mathbb P_\mid$ with $ Q_{|} $.
These two joint distributions are equal if and only if $Q_{\mid}$ and $\mathbb P_\mid$ match almost everywhere.
To compare these two distributions, we will use the Maximum Mean Discrepancy (MMD) \cite{gretton2012kernel} given by:
\begin{equation}\label{eq:mmd}
\begin{aligned}
    \mathrm{MMD}(\mathbb{ Q }_1, \mathbb{ Q }_2) = \sup_{ \substack{f \in \mathcal  H_{\mathcal Z} \\ \Vert f \Vert_{\mathcal  H_{\mathcal Z}} \leq 1} } \mathbb E_{{\mathbb Q}_1}[f(Z)] - \mathbb E_{{\mathbb Q}_2}[f(Z)].
\end{aligned}
\end{equation}
Here, $ \mathcal  Z $ is some measurable space, $ \mathbb{ Q }_1 $ and $ \mathbb{ Q }_2 $ are probability measures
on $ \mathcal  Z $, and $ \mathcal  H_{\mathcal  Z} $ is a reproducing kernel Hilbert space
(RKHS) of functions from $\mathcal  Z$ to $\mathbb  R$
with kernel $ k_{\mathcal  Z} $ \cite{berlinet2011reproducing}.
Applying this general definition to the case at hand,
we obtain a measure of accuracy for $ Q_{|} $, defined below.

\begin{definition}[Augmented Conditional MMD]\label{def:mmd_cgof}
    Let $ (X, Y) \in \mathcal X \times \mathcal Y $
    with law $ \mathbb{ P }_{X} \otimes \mathbb{ P }_{|} $.
    Let  $ Q_{|} $ be a conditional probability from $\mathcal X$ to $\mathcal Y$.
    We define the Augmented Conditional $ \operatorname{MMD}
    $ (ACMMD) between $ \mathbb{ P }_{|} $ and $ Q_{|}$ as:
    \begin{equation}\label{eq:mmd_cgof}
    \begin{aligned}
    \operatorname{ACMMD}(\mathbb P_{|}, Q_{|}) \coloneqq  \mathrm{MMD}(\mathbb P_X \otimes \mathbb P_{|}, \mathbb P_X \otimes Q_{|})
    \end{aligned}
    \end{equation}
    where the $ \operatorname{MMD} $ is evaluated with a user-specified kernel
    $ k_{\mathcal  X \times \mathcal  Y} $ on $\mathcal  X \times \mathcal  Y$.
    Here, $ \mathbb{ P }_{X }\otimes \mathbb{ P }_{|} $ 
    is defined by $ (X, Y) \sim \mathbb{ P }_{X
    }\otimes \mathbb{ P }_{|} \iff X \sim \mathbb{ P }_X $, $ (Y|X=x) \sim
    \mathbb{ P }_{|x} $, and similarly for $ \mathbb{ P }_{X }\otimes Q_{|}$.

\end{definition}

{\bfseries Choice of kernel for ACMMD}
The ACMMD requires specifying a kernel on the joint space $\mathcal X \times \mathcal Y$.
In this work, we will focus on the case where $ k_{\mathcal  X \times \mathcal  Y} $ is the
\emph{tensor product} kernel $ k_{\mathcal  X} \otimes k_{\mathcal  Y} $ of two kernels
$ k_{\mathcal  X} $ and $ k_{\mathcal Y} $ on $\mathcal  X$ and $\mathcal  Y$ respectively:
\begin{equation} \label{eq:tensor-product-kernel}
\begin{aligned}
k_{\mathcal  X \times \mathcal  Y}((x, y), (x', y')) = k_{\mathcal  X}(x, x')k_{\mathcal  Y}(y, y')
\end{aligned}
\end{equation}
This choice is popular in practice, and the resulting ACMMD retains
its desirable properties, as we show next.

\paragraph{The ACMMD is a divergence between conditional probabilities}
The ACMMD writes as divergence (which is symmetric, e.g. a distance)
between joint distributions,
while we seek to use it to compare conditional distributions.
The following lemma shows that the same ACMMD can be formulated
in alternative manner that highlights its purpose as a conditional
distribution comparator.

\begin{lemma}\label{lemma:cgof_mmd}
    Under mild integrability conditions, we have:
    \begin{equation*} \label{}
    \begin{aligned}
    \operatorname{ACMMD}(\mathbb P_{|}, Q_{|}) =  \left \| T_{K_{\mathcal
    X}}(\mu_{\mathbb{ P }_{|}} - \mu_{Q_{|}}) \right \|_{\mathcal
    H_{\mathcal  X, \mathcal  H_\mathcal  Y}}
    \end{aligned}
    \end{equation*}
    Where $ \mu_{\mathbb{  P}_{|}} $  and $ \mu_{Q_{|}} $ are the conditional mean
    embeddings \cite{park2020measure} of $ \mathbb P_{|} $ and $ Q_{|} $, $
    K_{\mathcal  X}(x, x') \coloneqq k_{\mathcal X}(x, x') I_{\mathcal
    H_{\mathcal  Y}} $ (here, $ I_{\mathcal  H_{\mathcal  Y}} $ the
    identity operator) is an operator-valued kernel with associated vector-valued
    RKHS $ \mathcal H_{\mathcal X, \mathcal  H_{\mathcal  Y}} \subset L^{2}_{\mathbb{ P
    }_X}(\mathcal  X, \mathcal H_{\mathcal Y}) $, and $ T_{K_{\mathcal  X}} $
    is its associated integral operator from $ L^{2}_{\mathbb{ P }_X}(\mathcal
    X, \mathcal H_{\mathcal Y}) $ to $ \mathcal  H_{\mathcal  X, \mathcal  H_\mathcal Y} $.
    Moreover, if $ k_{\mathcal  X} $ and $
    k_{\mathcal  Y} $ are $ C_0 $-universal \footnote{A kernel $ k $ is $ C_0
        $-universal if the associated RKHS $ \mathcal  H_k $ is dense in $
    C_0(\mathcal  X) $, the space of continuous functions on $ \mathcal  X $
    vanishing at infinity \cite{sriperumbudur2010relation} }, then it holds that:
    \begin{equation*} \label{}
    \begin{aligned}
        \operatorname{ACMMD}(\mathbb{ P }_{|}, Q_{|}) = 0 \iff \mathbb{ P }_{|x} = Q_{|x}, \quad  \mathbb{ P }_X \text{-a.e.}
    \end{aligned}
    \end{equation*}
\end{lemma}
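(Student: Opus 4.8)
The plan is to prove the two claims separately, starting with the reformulation as a norm of an integral operator applied to the difference of conditional mean embeddings. First I would invoke the standard fact that the MMD equals the RKHS distance between mean embeddings, so that $\operatorname{ACMMD}(\mathbb P_{|}, Q_{|}) = \Vert \mu_{\mathbb P_X \otimes \mathbb P_{|}} - \mu_{\mathbb P_X \otimes Q_{|}}\Vert_{\mathcal H_{\mathcal X \times \mathcal Y}}$, where $\mu_{\mathbb P_X \otimes \mathbb P_{|}} = \mathbb E_{(X,Y)}[k_{\mathcal X}(X,\cdot)\otimes k_{\mathcal Y}(Y,\cdot)]$ lives in $\mathcal H_{\mathcal X \times \mathcal Y} = \mathcal H_{\mathcal X}\otimes\mathcal H_{\mathcal Y}$ by the tensor-product choice \eqref{eq:tensor-product-kernel}. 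The key computation is to condition on $X$ (tower property / Fubini, justified by the mild integrability conditions): $\mu_{\mathbb P_X \otimes \mathbb P_{|}} = \mathbb E_X[k_{\mathcal X}(X,\cdot)\otimes \mu_{\mathbb P_{|X}}]$ and similarly for $Q_{|}$, so their difference equals $\mathbb E_X[k_{\mathcal X}(X,\cdot)\otimes(\mu_{\mathbb P_{|X}}-\mu_{Q_{|X}})]$. Viewing $\mu_{\mathbb P_{|}}-\mu_{Q_{|}}$ as an element of $L^2_{\mathbb P_X}(\mathcal X,\mathcal H_{\mathcal Y})$ and comparing with the definition of $T_{K_{\mathcal X}}$ associated to $K_{\mathcal X}=k_{\mathcal X}I_{\mathcal H_{\mathcal Y}}$, this difference is exactly $T_{K_{\mathcal X}}(\mu_{\mathbb P_{|}}-\mu_{Q_{|}})$ under the isometric identification $\mathcal H_{\mathcal X,\mathcal H_{\mathcal Y}}\cong\mathcal H_{\mathcal X}\otimes\mathcal H_{\mathcal Y}$, giving the first display. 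The integrability conditions are precisely those ensuring all embeddings exist and the expectation commutes with the inner product, e.g. $\mathbb E_X[k_{\mathcal X}(X,X)]<\infty$ and finite conditional second moments of $k_{\mathcal Y}$.

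For the characterization of the zeros, the $\Leftarrow$ direction is immediate: if $\mathbb P_{|x}=Q_{|x}$ $\mathbb P_X$-a.e. then $\mu_{\mathbb P_{|x}}=\mu_{Q_{|x}}$ $\mathbb P_X$-a.e., so $\mu_{\mathbb P_{|}}-\mu_{Q_{|}}$ vanishes in $L^2_{\mathbb P_X}(\mathcal X,\mathcal H_{\mathcal Y})$ and applying $T_{K_{\mathcal X}}$ yields $0$. For $\Rightarrow$, the reformulation shows $\operatorname{ACMMD}=0$ forces $T_{K_{\mathcal X}}(\mu_{\mathbb P_{|}}-\mu_{Q_{|}})=0$, and I would then run two injectivity arguments in sequence: first conclude $\mu_{\mathbb P_{|}}=\mu_{Q_{|}}$ in $L^2_{\mathbb P_X}$, and then pointwise (a.e.) conclude $\mathbb P_{|x}=Q_{|x}$.

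The first injectivity step requires $T_{K_{\mathcal X}}$ to be injective on $L^2_{\mathbb P_X}(\mathcal X,\mathcal H_{\mathcal Y})$. Since $K_{\mathcal X}$ is the scalar kernel $k_{\mathcal X}$ tensored with $I_{\mathcal H_{\mathcal Y}}$, under $L^2_{\mathbb P_X}(\mathcal X,\mathcal H_{\mathcal Y})\cong L^2_{\mathbb P_X}(\mathcal X)\otimes\mathcal H_{\mathcal Y}$ the operator factors as $T_{K_{\mathcal X}}=S_{k_{\mathcal X}}\otimes I_{\mathcal H_{\mathcal Y}}$, where $S_{k_{\mathcal X}}\colon f\mapsto\mathbb E_X[f(X)k_{\mathcal X}(X,\cdot)]$ is the adjoint of the inclusion $\mathcal H_{\mathcal X}\hookrightarrow L^2_{\mathbb P_X}$; and $A\otimes I$ is injective whenever $A$ is (pair a vanishing $\sum_i (Au_i)\otimes v_i$ with an orthonormal $(v_i)$). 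Injectivity of $S_{k_{\mathcal X}}$ is equivalent to density of $\mathcal H_{\mathcal X}$ in $L^2_{\mathbb P_X}(\mathcal X)$, which I would derive from $C_0$-universality of $k_{\mathcal X}$: $\mathcal H_{\mathcal X}$ is dense in $C_0(\mathcal X)$, and $C_0(\mathcal X)$ is dense in $L^2_{\mathbb P_X}$ for a finite Radon measure on a locally compact Hausdorff space. The second step is the characteristic property: $C_0$-universality of $k_{\mathcal Y}$ makes $\mathbb Q\mapsto\mu_{\mathbb Q}$ injective on probability measures \citep{sriperumbudur2010relation}, so $\mu_{\mathbb P_{|x}}=\mu_{Q_{|x}}$ gives $\mathbb P_{|x}=Q_{|x}$ for $\mathbb P_X$-a.e. $x$. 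I expect the main obstacle to be this first step: rigorously transferring the $C_0$-density hypothesis to injectivity of the operator-valued integral operator on the vector-valued $L^2$ space, since it requires both the $C_0$-to-$L^2$ density passage (needing regularity of $\mathbb P_X$ and of $\mathcal X$) and the tensor-product factorization of $T_{K_{\mathcal X}}$, where the care is genuinely needed.
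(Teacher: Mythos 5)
Your proposal is correct and follows the same overall architecture as the paper's proof: write the ACMMD as the RKHS distance between the two joint mean embeddings, condition on $X$ and pass through the isometric ``currying'' identification $\mathcal H_{\mathcal X}\otimes\mathcal H_{\mathcal Y}\cong\mathcal H_{\mathcal X,\mathcal H_{\mathcal Y}}$ to obtain $T_{K_{\mathcal X}}(\mu_{\mathbb P_{|}}-\mu_{Q_{|}})$, then chain two injectivity arguments for the characterization of the zeros. Where you genuinely diverge is in how those two injectivity steps are justified. For the injectivity of $T_{K_{\mathcal X}}$, the paper simply cites \citet[Theorem~2]{carmeli2010vector}, which asserts that $K_{\mathcal X}=k_{\mathcal X}I_{\mathcal H_{\mathcal Y}}$ is a $C_0$-universal operator-valued kernel whenever $k_{\mathcal X}$ is $C_0$-universal; you instead re-derive this in the special diagonal case via the factorization $T_{K_{\mathcal X}}=S_{k_{\mathcal X}}\otimes I_{\mathcal H_{\mathcal Y}}$ on $L^2_{\mathbb P_X}(\mathcal X)\otimes\mathcal H_{\mathcal Y}$, the adjoint-of-dense-inclusion characterization of injectivity of $S_{k_{\mathcal X}}$, and the $C_0\to L^2_{\mathbb P_X}$ density passage (which, as you note, needs $\mathcal X$ locally compact and second countable and $\mathbb P_X$ Radon --- exactly the hypotheses the paper's full statement imposes). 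This is a self-contained and correct unpacking of the cited result. For the final step, the paper invokes \citet[Theorem~5.2]{park2020measure} with equal marginals and $C_0$-universality of the product kernel $k_{\mathcal X}\otimes k_{\mathcal Y}$, whereas you use the characteristic property of $k_{\mathcal Y}$ alone to conclude $\mu_{\mathbb P_{|x}}=\mu_{Q_{|x}}\Rightarrow\mathbb P_{|x}=Q_{|x}$ pointwise for $\mathbb P_X$-a.e.\ $x$; your route is more direct and uses a strictly weaker hypothesis at that step. Both approaches are valid; yours trades two external citations for short elementary arguments, at the cost of having to be careful with the $L^2(\mathcal X;\mathcal H_{\mathcal Y})\cong L^2(\mathcal X)\otimes\mathcal H_{\mathcal Y}$ identification, which you handle correctly.
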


The complete statement (with the full set of assumptions, and the definition of
integral operators) and its proof can be found in \cref{sec:proof_cgof_mmd}.
\cref{lemma:cgof_mmd} shows that
the $ \operatorname{ACMMD} $ can be understood as
the result of a two-step procedure, given by (1) computing the conditional mean
embedding $ \mu_{\mathbb{ P}_{|}}: x  \longmapsto \mathbb{ E  }_{ \mathbb{ P }_{|x}} \left \lbrack k_{\mathcal  Y}(y, \cdot) | X=x
\right \rbrack $ of $ \mathbb{ P }_{|} $ (resp. of $ Q_{|} $), which is a
function from $ \mathcal  X $ to $ \mathcal H_{\mathcal  Y} $, and (2) embed
the difference of these conditional mean embeddings into the
\emph{vector-valued} RKHS $ \mathcal  H_{\mathcal  X, \mathcal  H_{\mathcal
Y}} \subset L^{2}_{\mathbb{ P }_X}(\mathcal  X, \mathcal H_{\mathcal  Y}) $
with kernel $ K_{\mathcal X} $, before returning its associated RKHS norm. The
second part of the lemma gives sufficient conditions for the $ \mathrm{ACMMD} $
to discriminate between any non ($ \mathbb{ P }_{X} $--a.e) equal conditional
distributions, fulfilling the requirements specified in \cref{eq:cgof-metric}:
these conditions are to use \emph{universal} kernels $
k_{\mathcal  X} $ and $ k_{\mathcal  Y} $.  Regarding $ k_{\mathcal  Y} $,
this requirement is not very restrictive, as many universal kernels on sequences
have been shown to be universal \cite{Amin2023-er}.
The difficulty in finding a universal $ k_{\mathcal  X}  $ will depend on the
space $ \mathcal  X $ (unspecified in this work) for the problem at hand.

\paragraph{Estimating the ACMMD from data}

Crucial to this work is the fact that if the model $ Q_{|} $ can be sampled from
for any $ x \in \mathcal  X $, $ \operatorname{ACMMD}\vphantom{a}^{2} $ will admit tractable unbiased
estimators. To see this, we first rewrite $ \operatorname{ACMMD}\vphantom{a}^{2} $ in a
form that will make this property apparent.

\begin{lemma}\label{lemma:cgof_mmd_double_expectation}
    Let $ Z \coloneqq (X, Y, \tilde{Y}) $ the triplet of random variables with law
    \footnote{Identifying $ Q_{|} $ with its
    analogue Markov kernel $ \tilde{Q}_{|} $ from $ (\mathcal  X \times \mathcal  Y,
    \mathscr X \otimes \mathscr  Y)$ such that $ \tilde{Q}_{|(x, y)}(dy') \coloneqq
    Q_{|x}(dy') $.
    }
    $ \mathbb{ P }_X \otimes \mathbb{ P }_{|} \otimes Q_{|} $. Then, under the
    integrability assumptions of \cref{lemma:cgof_mmd}, we have that:
    \begin{equation*} \label{}
    \begin{aligned}
    \operatorname{ACMMD}^{2}(\mathbb{ P }_{|}, Q_{|}) &= \mathbb{ E  }_{Z_1, Z_2} \left \lbrack
    h(Z_1, Z_2) \right \rbrack
    \end{aligned}
    \end{equation*}
    where $ Z_1, Z_2 $ are two independent copies of $ Z $ and
    $ h $ is a symmetric function given by:
    \addtolength{\jot}{0.5em}
    \begin{equation*} \label{}
    \begin{aligned}
    &h(Z_1, Z_2) \coloneqq k_{\mathcal  X}(X_1, X_2) g((Y_1, \tilde{Y}_1), (Y_2, \tilde{Y}_2)) \\
    &g((Y_1, \tilde{Y}_1), (Y_2, \tilde{Y}_2)) \coloneqq k_{\mathcal  Y}(\tilde{Y}_1, \tilde{Y}_2) + k_{\mathcal  Y}(Y_1, Y_2) \\ 
    & \quad - k_{\mathcal  Y}(\tilde{Y}_1, Y_2) - k_{\mathcal  Y}(Y_1, \tilde{Y}_2)
    \end{aligned}
    \end{equation*}
\end{lemma}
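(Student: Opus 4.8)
The plan is to reduce the claim to the textbook closed form of the squared MMD and then repackage the resulting expectations through the triplet $Z$. Recall from \cite{gretton2012kernel} that for probability measures $\mathbb{Q}_1,\mathbb{Q}_2$ on a space $\mathcal{Z}$ with kernel $k_{\mathcal Z}$, whenever the kernel mean embeddings $\mu_{\mathbb{Q}_i}=\mathbb{E}_{\mathbb{Q}_i}[k_{\mathcal Z}(\cdot,Z)]$ exist in the RKHS, the supremum defining the MMD in \cref{eq:mmd} is attained at $f=(\mu_{\mathbb{Q}_1}-\mu_{\mathbb{Q}_2})/\|\mu_{\mathbb{Q}_1}-\mu_{\mathbb{Q}_2}\|$ and hence
\[
\mathrm{MMD}^2(\mathbb{Q}_1,\mathbb{Q}_2)=\mathbb{E}[k_{\mathcal Z}(Z,Z')]+\mathbb{E}[k_{\mathcal Z}(W,W')]-2\,\mathbb{E}[k_{\mathcal Z}(Z,W)],
\]
where $Z,Z'\sim\mathbb{Q}_1$ and $W,W'\sim\mathbb{Q}_2$ are mutually independent. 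First I would verify that the integrability conditions imported from \cref{lemma:cgof_mmd} are precisely those ensuring the embeddings of $\mathbb{Q}_1=\mathbb{P}_X\otimes\mathbb{P}_{|}$ and $\mathbb{Q}_2=\mathbb{P}_X\otimes Q_{|}$ are well defined (Bochner integrable), so that this expansion is valid and Fubini may be used to pull expectations through inner products.

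Next I would specialize this identity to $\operatorname{ACMMD}^2$ as defined in \cref{eq:mmd_cgof}, taking $\mathbb{Q}_1=\mathbb{P}_X\otimes\mathbb{P}_{|}$, $\mathbb{Q}_2=\mathbb{P}_X\otimes Q_{|}$, and the tensor-product kernel of \cref{eq:tensor-product-kernel}. The key observation is that a draw $(X,Y)\sim\mathbb{Q}_1$ and a draw $(X,\tilde Y)\sim\mathbb{Q}_2$ are exactly the two pair-marginals of the triplet $Z=(X,Y,\tilde Y)\sim\mathbb{P}_X\otimes\mathbb{P}_{|}\otimes Q_{|}$. Hence, drawing two independent triplets $Z_1,Z_2$, the pairs $(X_1,Y_1),(X_2,Y_2)$ are i.i.d.\ $\mathbb{Q}_1$; the pairs $(X_1,\tilde Y_1),(X_2,\tilde Y_2)$ are i.i.d.\ $\mathbb{Q}_2$; and $(X_1,Y_1)\sim\mathbb{Q}_1$ is independent of $(X_2,\tilde Y_2)\sim\mathbb{Q}_2$. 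Substituting these into the three terms above and factoring the kernel as in \cref{eq:tensor-product-kernel} rewrites them as $\mathbb{E}[k_{\mathcal X}(X_1,X_2)k_{\mathcal Y}(Y_1,Y_2)]$, $\mathbb{E}[k_{\mathcal X}(X_1,X_2)k_{\mathcal Y}(\tilde Y_1,\tilde Y_2)]$, and $\mathbb{E}[k_{\mathcal X}(X_1,X_2)k_{\mathcal Y}(Y_1,\tilde Y_2)]$ respectively.

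Finally I would symmetrize the cross term: since $(Z_1,Z_2)$ is exchangeable, $\mathbb{E}[k_{\mathcal X}(X_1,X_2)k_{\mathcal Y}(Y_1,\tilde Y_2)]=\mathbb{E}[k_{\mathcal X}(X_1,X_2)k_{\mathcal Y}(\tilde Y_1,Y_2)]$, so twice the cross term equals $\mathbb{E}[k_{\mathcal X}(X_1,X_2)(k_{\mathcal Y}(Y_1,\tilde Y_2)+k_{\mathcal Y}(\tilde Y_1,Y_2))]$. Collecting all four summands under the common factor $k_{\mathcal X}(X_1,X_2)$ reproduces exactly the kernel $g$ and therefore $h$ from the statement, giving $\operatorname{ACMMD}^2=\mathbb{E}_{Z_1,Z_2}[h(Z_1,Z_2)]$ and completing the argument.

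The hard part will be the first step rather than the algebra: one must confirm that the integrability hypotheses of \cref{lemma:cgof_mmd} are strong enough to justify existence of the two mean embeddings and the interchange of expectation and inner product underlying the squared-MMD expansion. Once that is in place, the repackaging is routine bookkeeping, the only subtlety being to keep $X_1$ and $X_2$ drawn from two independent triplets in every term; the within-triplet coupling of $Y_i$ and $\tilde Y_i$ through the shared $X_i$ is immaterial to the identity itself, because each summand of $h$ uses at most one of $Y_i,\tilde Y_i$ per triplet, though it is exactly this packaging that later enables a single set of joint samples to furnish an unbiased estimator.
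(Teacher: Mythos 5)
Your proof is correct, and it computes the same quantity the paper does --- the squared RKHS norm of $\mu_{\mathbb P_X\otimes\mathbb P_{|}}-\mu_{\mathbb P_X\otimes Q_{|}}$ --- but it organizes the computation differently. The paper performs the coupling \emph{first}: it writes the difference of mean embeddings as a single Bochner integral over the triplet,
\begin{equation*}
\mu_{\mathbb P_X\otimes\mathbb P_{|}}-\mu_{\mathbb P_X\otimes Q_{|}}
=\mathbb E_{(X,Y,\tilde Y)}\bigl[k((X,Y),\cdot)-k((X,\tilde Y),\cdot)\bigr],
\end{equation*}
shows this object is well defined in $\mathcal H_{\mathcal X}\otimes\mathcal H_{\mathcal Y}$ under the stated integrability conditions, and then expands the squared norm as a double expectation of an inner product; this directly produces $h(Z_1,Z_2)=\langle k_{(X_1,Y_1)}-k_{(X_1,\tilde Y_1)},\,k_{(X_2,Y_2)}-k_{(X_2,\tilde Y_2)}\rangle$, making it transparent that $h$ is itself a positive-semidefinite kernel on triplets (a fact the paper reuses later for the wild-bootstrap analysis). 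You instead quote the classical three-term expansion of $\mathrm{MMD}^2$ with four mutually independent samples and couple \emph{afterwards}, observing that the pair-marginals of two independent triplets supply all the required independent draws, then symmetrize the cross term. Both routes rest on exactly the same analytic justification (existence of the embeddings and the interchange of expectation with the inner product, which you correctly flag as the substantive step), and your handling of the one genuine pitfall --- that the within-triplet pairs $(X_1,Y_1)$ and $(X_1,\tilde Y_1)$ are \emph{not} independent, so the cross term must pair components across the two triplets --- is right. The only thing your route gives up is the immediate visibility of the inner-product (hence p.s.d.) structure of $h$; what it buys is that the algebra reduces to citing a standard identity from \cite{gretton2012kernel} rather than reproving the norm expansion from scratch.
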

\cref{lemma:cgof_mmd_double_expectation}, proved in
\cref{lemma:proof_cgof_mmd_estimator}, expresses $ \operatorname{ACMMD}\vphantom{a}^{2} $ as a double
expectation given two independent \emph{samples} of $ (X, Y, \tilde{Y})\sim \mathbb{ P
}_{X} \otimes \mathbb{ P }_{|} \otimes Q_{|}$. Leveraging this fact, we can
derive an unbiased and consistent estimator for $ \operatorname{ACMMD}\vphantom{a}^{2} $.

\begin{lemma}\label{lemma:cgof_mmd_estimator}
    Let $ \{ X_{i}, Y_{i}, \tilde{Y}_{i} \}_{i=1}^{N}
    \overset{\text{i.i.d}}{\sim} \mathbb{ P }_{X} \otimes \mathbb{ P }_{|} \otimes Q_{|}$
    be samples from the data and the model. Then an unbiased
    estimator
    $\textstyle \widehat{ \operatorname{ACMMD} }{\vphantom{\operatorname{ACMMD}}}^{2}(\mathbb{ P }_{|}, Q_{|})$
    of
    $ \operatorname{ACMMD}^{2}(\mathbb{ P }_{|}, Q_{|}) $ 
    is given by:
    \begin{equation} \label{eq:cgof_mmd_estimator}
    \begin{aligned}
        \frac{2}{N(N-1)} \sum\limits_{ 1 \leq i < j \leq N }
        h((X_{i}, Y_{i}, \tilde{Y}_{i}), (X_{j}, Y_{j}, \tilde{Y}_{j}))
    \end{aligned}
    \end{equation}
\end{lemma}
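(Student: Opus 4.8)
The plan is to recognize the estimator in \eqref{eq:cgof_mmd_estimator} as a second-order $U$-statistic with symmetric kernel $h$, and to prove unbiasedness by the standard argument that such a $U$-statistic is unbiased for the expectation $\mathbb{E}_{Z_1, Z_2}[h(Z_1, Z_2)]$ of its kernel. By \cref{lemma:cgof_mmd_double_expectation}, this expectation equals $\operatorname{ACMMD}^2(\mathbb{P}_{|}, Q_{|})$, so the entire argument reduces to identifying these two quantities.

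Concretely, write $Z_i \coloneqq (X_i, Y_i, \tilde{Y}_i)$ for $i = 1, \dots, N$. By hypothesis the $Z_i$ are i.i.d.\ with common law $\mathbb{P}_X \otimes \mathbb{P}_{|} \otimes Q_{|}$, which is exactly the law of the variable $Z$ appearing in \cref{lemma:cgof_mmd_double_expectation}. First I would invoke the integrability assumptions inherited from \cref{lemma:cgof_mmd} to guarantee that $\mathbb{E}|h(Z_1, Z_2)| < \infty$; this is what licenses the interchange of expectation with the finite sum below. Then, applying linearity of expectation to \eqref{eq:cgof_mmd_estimator},
\begin{equation*}
\begin{aligned}
\mathbb{E}\big[\widehat{\operatorname{ACMMD}}^{2}\big] = \frac{2}{N(N-1)} \sum_{1 \le i < j \le N} \mathbb{E}\big[h(Z_i, Z_j)\big].
\end{aligned}
\end{equation*}

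The crux is then that for any fixed pair $i \neq j$, the variables $Z_i$ and $Z_j$ form two independent copies of $Z$, so $\mathbb{E}[h(Z_i, Z_j)] = \mathbb{E}_{Z_1, Z_2}[h(Z_1, Z_2)] = \operatorname{ACMMD}^2(\mathbb{P}_{|}, Q_{|})$ by \cref{lemma:cgof_mmd_double_expectation}, independently of the particular indices. Since the sum ranges over the $\binom{N}{2} = N(N-1)/2$ unordered pairs, the normalizing factor $2/(N(N-1))$ cancels this count exactly, yielding $\mathbb{E}[\widehat{\operatorname{ACMMD}}^{2}] = \operatorname{ACMMD}^2(\mathbb{P}_{|}, Q_{|})$ and hence unbiasedness.

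I do not anticipate a genuine obstacle here: this is the textbook unbiasedness argument for $U$-statistics, and the only point requiring care is verifying the integrability of $h$ so that linearity of expectation applies — which is precisely the integrability condition already carried over from \cref{lemma:cgof_mmd}. The symmetry of $h$, stated in \cref{lemma:cgof_mmd_double_expectation}, is what makes summing over $i < j$ (rather than over ordered pairs) legitimate. Consistency, asserted in the surrounding text, is not part of the formal claim, but would follow from the law of large numbers for $U$-statistics under the additional second-moment condition $\mathbb{E}[h(Z_1, Z_2)^2] < \infty$.
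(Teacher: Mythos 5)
Your proposal is correct and follows essentially the same route as the paper's own (very brief) proof: linearity of expectation combined with the fact that each term $h(Z_i, Z_j)$ is, by \cref{lemma:cgof_mmd_double_expectation} and the i.i.d.\ assumption, an unbiased estimator of $\operatorname{ACMMD}^2(\mathbb{P}_{|}, Q_{|})$. Your additional remarks on integrability and on the cancellation of the $\binom{N}{2}$ count merely spell out details the paper leaves implicit.
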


\vspace{-1em}

\cref{lemma:cgof_mmd_estimator},  proved in \cref{lemma:proof_cgof_mmd_estimator},
shows that it is possible to unbiasedly estimate $ \operatorname{ACMMD}^{2} $ even
when the analytical model expectations are intractable, provided that one can
sample from the model. This estimator takes the form of a U-statistics
\citep[Chapter 5]{serfling2009approximation} with symmetric probability kernel
$ h $, which are well-studied in the statistics literature. In particular, they
provide a generic framework to obtain minimal-variance analogues of unbiased
estimators \citep[Chapter 5, p. 176]{serfling2009approximation}.
$ \textstyle \widehat{ \operatorname{ACMMD} }{\vphantom{\operatorname{ACMMD}}}^{2} $ is a \emph{consistent}
estimator of $
\operatorname{ACMMD}^{2} $: under the integrability assumptions of
\cref{lemma:cgof_mmd} the strong law of large numbers applies
\citep[Section~5.4,~Theorem~A]{serfling2009approximation}, and we have:
$\textstyle \widehat{ \operatorname{ACMMD} }{\vphantom{\operatorname{ACMMD}}}^{2}(\mathbb{ P }_{|}, Q_{|}) \xrightarrow[N  \to \infty]{ a.s. } \operatorname{ACMMD}^{2}(\mathbb{ P }_{|}, Q_{|})$.
We provide a more detailed characterization of the asymptotic distribution of 
$\textstyle \widehat{ \operatorname{ACMMD} }{\vphantom{\operatorname{ACMMD}}}^{2}$
in
\cref{app-sec:cgof_mmd_estimator_asymptotic_distribution}.

\subsection{Testing Conditional Goodness--of--Fit with ACMMD}\label{sec:testing}

In the limit of infinitely many samples, a positive ACMMD means that the model and the data differ. However,
in practice, when only a finite number of samples are available, our estimate
$\textstyle \widehat{ \operatorname{ACMMD} }{\vphantom{\operatorname{ACMMD}}}^{2}$
is only a noisy version of the true $ \operatorname{ACMMD}^2 $, meaning we cannot conclude whether the model fits the data
by directly inspecting its value. Instead, we need a procedure that accounts for the estimation
noise; we achieve this by using the ACMMD as part of a hypothesis test
deciding between two different hypotheses:
\begin{equation*} \label{}
\begin{aligned}
    \begin{cases}
    H_0: \operatorname{ACMMD}(\mathbb{ P }_{|}, Q_{|}) = 0 \\
    H_1: \operatorname{ACMMD}(\mathbb{ P }_{|}, Q_{|}) > 0
    \end{cases} 
\end{aligned}
\end{equation*}
In particular, we construct a test that takes as input a sample 
$ \{ X_{i}, Y_{i}, \tilde{Y}_{i} \}_{i=1}^N $ from the data and the model
and outputs a (binary) decision to reject (or not) the null hypothesis $ H_0 $
based on whether $ \textstyle \widehat{ \operatorname{ACMMD} }{\vphantom{\operatorname{ACMMD}}}^{2}(\mathbb{ P }_{|}, Q_{|}) $ exceeds a certain
threshold. Because of the estimation noise arising from the use of finitely
many samples, such a test cannot systematically output the right decision.
Nonetheless, we build our test to ensure a \emph{false rejection}
(e.g. reject $ H_0 $ while $ \mathbb{ P }_{|} = Q_{|}\,\, \mathrm{a.e} $)
rate of $ \alpha \in (0, 1) $, a common
practice in statistical testing \cite{gretton2012kernel}. To do so, we would like to
set the rejection threshold $ q_{1-\alpha} $ to be an estimate of the
$ 1-\alpha $ quantile of the distribution of
$ \textstyle \widehat{ \operatorname{ACMMD} }{\vphantom{\operatorname{ACMMD}}}^{2}(\mathbb{ P }_{|}, Q_{|}) $
under $ H_{0} $.
However, since $ q_{1-\alpha} $ is not available in closed form, we instead compute an
estimate $ \widehat{q}_{1-\alpha} $ using the wild bootstrap procedure
\cite{arcones1992bootstrap}. This procedure draws $ B $ samples $
\{\widetilde{\operatorname{ACMMD}}{\vphantom{\operatorname{ACMMD}}}^{2}_{b}\}_{b=1}^{B} $
of the form:

\vspace*{-1.5em}

\begin{equation} \label{eq:wild-bootstrap}
\begin{aligned}
    \widetilde{\operatorname{ACMMD}}{\vphantom{\operatorname{ACMMD}}}^{2}_{b} \coloneqq \frac{2}{N(N-1)} \hspace{-0.4em} \sum\limits_{
    1 \leq i < j \leq N }^{ N } \hspace{-0.4em} W^{b}_{i} W^{b}_j h(Z_{i}, Z_{j})
\end{aligned}
\end{equation}

\vspace*{-1.5em}
where $ \{ W^{b}_i \}_{\substack{i=1\dots N}}^{b=1\dots B} $ are i.i.d.
Rademacher random variables independent of the data,
from which we compute a quantile estimate $ \widehat{ q }_{1-\alpha} $ of this distribution
of samples (see  \cref{app-sec:type_I_error_control} for a precise definition of $ \widehat{ q }_{1-\alpha} $).
Importantly, this procedure guarantees an exact control of the false rejection rate
at level $ \alpha $.
We prove this fact in \cref{sec:level_acmmd}, where we cast the wild bootstrap
procedure as a Monte-Carlo estimation of the distribution of
$\widehat{\operatorname{ACMMD}}\vphantom{a}^{2}$ when $ \mathbb{ P }_{|} = Q_{|} $, which is valid
non-asymptotically. Our test, which we call the
ACMMD test, is summarized in Algorithm \ref{alg:mmd_cgof_test}. To the best of
our knowledge, this is the first conditional goodness-of-fit test that is
applicable to sequence models.
\begin{algorithm}[tb]
   \caption{ACMMD Conditional Goodness--of--fit Test}
   \label{alg:mmd_cgof_test}
\begin{algorithmic}
   \STATE {\bfseries Input:} $\{X_{i}, Y_{i},
   \tilde{Y}_{i}\}_{i=1}^N \stackrel{\text{i.i.d.}}{\sim} \mathbb P_{X} \otimes
   \mathbb{ P }_{|} \otimes Q_{|}$
   \STATE \textbf{Parameters:} Level $\alpha$,
   kernel $k_{\mathcal  X}$, kernel $ k_{\mathcal  Y}$
    \STATE \texttt{// Estimate ACMMD using \cref{eq:cgof_mmd_estimator}}
    \STATE $\widehat{\operatorname{ACMMD}}\vphantom{a}^{2}\hspace{-0.2em} \leftarrow
    \frac{2}{N(N-1)}\hspace{-0.5em} \sum\limits_{ \substack{ i, j = 1 \\ i < j } }^{ N }
    \hspace{-0.5em}  h((X_{i}, Y_{i}, \tilde{Y}_{i}), (X_{j}, Y_{j}, \tilde{Y}_{j}))$
    \STATE Sample $\{ \widetilde{\operatorname{ACMMD}}\vphantom{a}_{b}^{2} \}_{b=1}^{B}$ using \cref{eq:wild-bootstrap}
    \STATE $\widehat{q}_{1-\alpha} \leftarrow$ approx. $(1- \alpha)$-quantile 
    of $\{ \widetilde{\operatorname{ACMMD}}\vphantom{a}_{b}^{2} \}_{b=1}^{B}$
    \IF {$ \widehat{ \operatorname{ACMMD} }\vphantom{a}^{2}\leq \widehat{q}_{1-\alpha}$}
        \STATE {Fail to reject $H_0$}
    \ELSE
        \STATE {Reject $H_0$}
    \ENDIF
\end{algorithmic}
\end{algorithm}

\section{Assessing Reliability with ACMMD}\label{sec:reliability}
In practice, our model $ Q_{|} $ may not fit the data perfectly, and it is important
to distinguish (at a given level of inaccuracy) models that remain
consistent with their training data from ones that fail more drastically.
In this section, we show how the ACMMD can be used to evaluate model reliability,
a statistical property capturing model and data consistency.

{\bfseries Problem Setting}
A model $Q_{|}$ is said to be reliable 
\citep{brocker2008some,vaicenavicius2019evaluating,widmann2022calibration} if
the distribution of the target $ Y $ given that the model made a specific prediction $ q $
\emph{is} this prediction $ q $ itself, e.g. if:
\begin{equation}\label{eq:calibration} 
\begin{aligned}
    q = \mathbb{P}\left(Y \in \cdot \mid Q_{|X} = q \right)\qquad \mathbb{ P }(Q_{|X})\text{--a.e.}
\end{aligned}
\end{equation}
Here, $ Q_{|X} \in \mathcal  P(\mathcal  Y) $ (the space of probability distributions on $ \mathcal  Y $)
is the random variable obtained by evaluating the model $Q_{|}$ at a random value of the input variable $X$.
Reliability differs from accuracy in that it does not require
the model to learn all the information between $ X $ and $ Y $,
but only to make truthful predictions on average --- thus, by assessing reliability,
one may be able to detect models that hallucinate non-realistic sequences
(such as repeats of the same token) in regions of the input space where
they are inaccurate, instead of making a conservative guess, such as 
falling back to the prior disitribution.
In particular, reliability can be used as an additional criterion
to discriminate between models that are equally accurate.
From a theoretical perspective, reliability and accuracy can be handled in 
a unified manner: indeed, Equation \ref{eq:calibration} shows
that reliability is defined as an equality between
the conditional distribution of $ Y $ given a model prediction $ q $,
$ \mathbb{ P }^{Q}_{|q} \coloneqq \mathbb{ P }(Y=\cdot|Q_{|X} = q) $
and a ``model'' of this conditional distribution mapping $ q \in \mathcal  P(\mathcal  Y) $ 
to itself, e.g. $ Q^{\mathrm{Rel}}_{|}: q  \longmapsto Q^{\mathrm{Rel}}_{|q} = q $.
We thus propose to measure reliability using 
the ACMMD (a distance between conditional distributions)
between $ Q^{\mathrm{Rel}}_{|} $ and $ \mathbb{ P }^{Q}_{|} $.
\begin{definition}[ACMMD for Reliability]\label{def:acmmd}
    The Augmented Conditional $ \operatorname{MMD} $
    for reliability ($\operatorname{ACMMD--Rel}$) between $ \mathbb{ P }_{|} $ and $ Q_{|}$ as:
    \begin{equation}\label{eq:acmmd-rel}
    \begin{aligned}
        \operatorname{ACMMD--Rel}(\mathbb P_{|}, Q_{|})\hspace{-0.3em} &\coloneqq \mathrm{ACMMD}(\mathbb{ P }^{Q}_{|}, Q^{\mathrm{Rel}}_{|}) \\
                                                        &= \mathrm{MMD}(\mathbb P_{|Q_{|X}} \otimes \mathbb P^{Q}_{|}, \mathbb P_{|Q_{|X}} \otimes Q_{|})
    \end{aligned}
    \end{equation}
    where the $\operatorname{ACMMD}$ is evaluated with a user-specified kernel
    $ k_{\mathcal  P(\mathcal  Y) \times \mathcal  Y} $ on $ \mathcal  P(\mathcal  Y) \times \mathcal  Y $.
\end{definition}
As for the ACMMD, we will restrict our attention to the case where 
$ k_{\mathcal  P(\mathcal  Y) \times \mathcal  Y} $ is a tensor product kernel
$ k_{\mathcal  P(\mathcal  Y)} \otimes k_{\mathcal  Y} $ between a kernel on
$ \mathcal  P(\mathcal  Y) $ and a kernel on $ \mathcal  Y $.
Comparing the ACMMD--Rel with the ACMMD, we see that the former requires specifying
a kernel \emph{on the space of probability measures} on sequences $ \mathcal P(\mathcal  Y) $
instead of a kernel on $ \mathcal  X $.
Two important points must be addressed when working with such kernels.
First, in order to have $ \operatorname{ACMMD--Rel}(\mathbb P_{|}, Q_{|}) = 0$
if and only if $ Q_{|} $ is reliable, we must find universal kernels defined on $ \mathcal P(\mathcal  Y) $.
Second, as many kernels on probabilities are intractable, we must design an
approximation strategy to estimate the ACMMD--Rel from data.
\paragraph{ACMMD--Rel can detect any pattern of unreliability}
Our first goal is to ensure that ACMMD--Rel can detect any pattern of unreliability.
As ACMMD--Rel is a specific instance of the ACMMD, we can apply \cref{lemma:cgof_mmd},
stating that if the kernels
$ k_{\mathcal  P(\mathcal  Y)}$ and $ k_{\mathcal  Y} $ are universal, then
\begin{equation} \label{}
\begin{aligned}
\operatorname{ACMMD--Rel}(\mathbb P_{|}, Q_{|}) = 0 \iff Q_{|} \text{ is reliable}.
\end{aligned}
\end{equation}
The task of finding a universal kernel $ k_{\mathcal  Y} $ on $ \mathcal Y $
was addressed in \cref{sec:augmented_conditional_mmd}; thus,
it remains to  find a universal kernel $ k_{\mathcal  P(\mathcal  Y)} $ 
on $ \mathcal P(\mathcal  Y) $.
However, to the best of our knowledge, none of the existing kernels defined on probability distributions
\cite{carmeli2010vector, szabo2015two, szaboSPG16, meunier2022pc, Glaser2023-sj}
have been shown to be universal when $ \mathcal  Y $ is the space of arbitrary-length sequences.
In the next proposition, we show that many such kernels can be
constructed by following a simple recipe.
\begin{proposition}\label{prop:universal_kernel_on_PYS}
    Let $k_{\mathcal  Y}$ be a kernel on $ \mathcal  Y$
    vanishing at infinity (on $ \mathcal  Y \times \mathcal  Y $).
    Suppose that $ k_{\mathcal  Y} $ has discrete
    masses, i.e. that $\delta_y \in \mathcal H_{\mathcal  Y}$ for all sequences $y \in
    \mathcal  Y$, where $\delta_y$ is the Dirac function at $y$, and let $ \sigma > 0 $.
    Then the kernel $ k_{\mathcal P(\mathcal  Y)} $ on  $ \mathcal P(\mathcal  Y) $ defined as
    \begin{equation} \label{eq:kernels-on-dists}
    \begin{aligned}
      &k_{\mathcal  P(\mathcal  Y)}(q, q') \coloneqq e^{-\frac 1 {2\sigma^2} \operatorname{MMD}^2(q, q')}, \\
    \end{aligned}
    \end{equation}
    (where the MMD is computed in $\mathcal H_{\mathcal  Y}$) 
    is a $ C_{0} $--universal kernel on the space of probability
    distributions $\mathcal P(\mathcal  Y)$ (under the topology of
    convergence in distribution or Total Variation, which are identical, see
    \cite{amin2021generative}).
\end{proposition}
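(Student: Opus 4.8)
The plan is to read $k_{\mathcal P(\mathcal Y)}$ as a Gaussian kernel on the Hilbert space $\mathcal H_{\mathcal Y}$ precomposed with the kernel mean embedding. Writing $\mu_q \coloneqq \mathbb E_{Y \sim q}[k_{\mathcal Y}(Y, \cdot)] \in \mathcal H_{\mathcal Y}$, we have $\operatorname{MMD}^2(q, q') = \| \mu_q - \mu_{q'} \|_{\mathcal H_{\mathcal Y}}^2$, so that $k_{\mathcal P(\mathcal Y)}(q, q') = G(\mu_q, \mu_{q'})$ with $G(u, v) \coloneqq \exp(-\| u - v \|^2 / 2\sigma^2)$ the Gaussian kernel on $\mathcal H_{\mathcal Y}$. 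Since the RKHS of a kernel precomposed with a map is the pullback of the base RKHS, one has $\mathcal H_{k_{\mathcal P(\mathcal Y)}} = \{ f \circ \mu : f \in \mathcal H_G \}$, and proving $C_0$-universality reduces to transporting a density statement for $\mathcal H_G$ back to $\mathcal P(\mathcal Y)$ through $\mu$.

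I would then isolate the distinct roles of the two hypotheses on $k_{\mathcal Y}$. First, because $\delta_y \in \mathcal H_{\mathcal Y}$, masses are recovered from the embedding via $q(\{y\}) = \langle \mu_q, \delta_y \rangle_{\mathcal H_{\mathcal Y}}$; this makes $\mu$ injective (so $k_{\mathcal Y}$ is characteristic) and, together with the boundedness of $k_{\mathcal Y}$, shows that $\operatorname{MMD}$ induces exactly the topology of convergence in distribution / total variation on $\mathcal P(\mathcal Y)$ (as in \citet{amin2021generative}): boundedness gives $\operatorname{MMD}(q, q') \lesssim \operatorname{TV}(q, q')$, while the coordinate bound $|q(\{y\}) - q'(\{y\})| \le \| \delta_y \|_{\mathcal H_{\mathcal Y}}\, \operatorname{MMD}(q, q')$ combined with Scheff\'e's lemma yields the converse along convergent sequences. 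Hence $\mu$ is a homeomorphism onto its image $M \coloneqq \mu(\mathcal P(\mathcal Y))$. Second, the vanishing at infinity of $k_{\mathcal Y}$ controls the behaviour at infinity: the set $\{ k_{\mathcal Y}(y, \cdot) : y \in \mathcal Y \} \cup \{ 0 \}$ is the continuous image of the one-point compactification of $\mathcal Y$ (sending $\infty$ to $0$, continuity there being exactly $k_{\mathcal Y}(y, y) \to 0$), hence compact, so its closed convex hull is compact as well (Mazur); since each $\mu_q$ is a barycenter of a measure supported on this set, $M$ lies in this hull and is therefore relatively compact, with compact closure $\overline M$ whose limit points in $\overline M \setminus M$ are precisely those produced by mass escaping to infinity.

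With these facts in place I would invoke the universality of the Gaussian kernel on compact subsets of a separable Hilbert space (Christmann and Steinwart), which gives that $\mathcal H_G$ restricts to a dense subspace of $C(\overline M)$. Given a target $g \in C_0(\mathcal P(\mathcal Y))$, I transport it to $\widetilde g \coloneqq g \circ \mu^{-1}$ on $M$; the requirement that $g$ vanishes at infinity translates, through the identification of the previous paragraph, into $\widetilde g$ extending continuously to $\overline M$ by $0$ on the boundary $\overline M \setminus M$. Approximating this extension uniformly on $\overline M$ by some $f \in \mathcal H_G$ and pulling back, $f \circ \mu \in \mathcal H_{k_{\mathcal P(\mathcal Y)}}$ approximates $g$ uniformly, establishing density in $C_0(\mathcal P(\mathcal Y))$ and hence $C_0$-universality in the sense of \citet{sriperumbudur2010relation}.

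The hard part is the last identification: one must argue rigorously that the topological boundary $\overline M \setminus M$ is exactly the image of the points at infinity of $\mathcal P(\mathcal Y)$, so that functions vanishing at infinity correspond to continuous functions on the compactification $\overline M$ vanishing on that boundary. This is where vanishing at infinity of $k_{\mathcal Y}$ is essential a second time --- it is what forces mass-escaping sequences to be sent by $\mu$ into $\overline M \setminus M$ rather than to accumulate inside $M$, which is in turn what makes $M$ relatively compact and endows $\mathcal P(\mathcal Y)$ with the locally compact structure needed for the $C_0$ calculus. The remaining ingredients --- injectivity and continuity of $\mu$, compactness of $\overline M$, and density of $\mathcal H_G$ in $C(\overline M)$ --- are comparatively routine given the cited results.
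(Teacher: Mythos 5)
Your proposal follows the same overall strategy as the paper's proof: read $k_{\mathcal P(\mathcal Y)}$ as the Gaussian kernel on $\mathcal H_{\mathcal Y}$ precomposed with the mean embedding $\mu$, show that the image of $\mu$ is relatively compact in $\mathcal H_{\mathcal Y}$, invoke \citet{Christmann2010-ox} for universality of the Gaussian kernel on that compact set, and use the discrete-mass property to identify the resulting topology with TV/weak convergence on $\mathcal P(\mathcal Y)$. The genuinely different ingredient is your compactness argument. The paper embeds $\mathcal P(\mathcal Y)$ into the TV unit ball of $\mathcal M(\mathcal Y)$, which is weak-$\star$ compact by Banach--Alaoglu, and argues that $\mu$ is weak-$\star$-to-norm continuous there (continuity at $0$ via a neighborhood built from $y \mapsto \sqrt{k_{\mathcal Y}(y,y)} \in C_0(\mathcal Y)$, then linearity). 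You instead note that $\{k_{\mathcal Y}(y,\cdot)\}_{y} \cup \{0\}$ is the continuous image of the one-point compactification of $\mathcal Y$ --- which is where vanishing at infinity enters for you, playing exactly the role that $\sqrt{k_{\mathcal Y}(\cdot,\cdot)} \in C_0(\mathcal Y)$ plays for the paper --- and that every $\mu_q$ lies in its closed convex hull, compact by Mazur's theorem. Your route is arguably the more robust of the two: it never leaves the probability simplex, so it avoids the weak-$\star$-to-norm continuity of a linear map on signed measures, a step the paper's write-up handles somewhat delicately (the stated bound on $\int k_{\mathcal Y}\,\mathrm{d}(q\otimes q)$ really requires controlling $\int \sqrt{k_{\mathcal Y}(y,y)}\,\mathrm{d}|q|(y)$ rather than the signed integral). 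Finally, the step you flag as the hard part --- identifying $\overline{M}\setminus M$ (for $M = \mu(\mathcal P(\mathcal Y))$) with the points at infinity of $\mathcal P(\mathcal Y)$ so that $C_0$-density can be read off from density in $C(\overline{M})$ --- is precisely the step the paper compresses into one sentence, asserting that universality in the norm topology of $\mathcal H_{\mathcal Y}$ transfers to $\mathcal P(\mathcal Y)$ because the discrete-mass property makes the MMD topology coincide with TV \cite{Amin2023-er}; neither your sketch nor the paper's proof spells this identification out in full, so you are not missing anything the paper actually supplies.
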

The proof, provided in \cref{proof:universal_kernel_on_PYS},
relies on an argument similar to prior work for universal kernels on
probability measures \cite{carmeli2010vector},
but tailored to the special case of sequences.
\cref{prop:universal_kernel_on_PYS} guarantees that any kernel on $ \mathcal Y
$ vanishing at infinity with the discrete mass property \cite{Amin2023-er} can be used to construct a
universal kernel on $ \mathcal  P(\mathcal  Y) $.
Kernels with discrete masses are studied in detail in \cite{Amin2023-er}.
In particular, the tilted Exponentiated Hamming
Kernel $\frac{1}{|y||y'|} e^{-\lambda d_H(y, y')}$ (where $ |y| $ is the length of the sequence $ y $)
is a kernel with discrete masses vanishing at infinity on $ \mathcal  Y \times \mathcal  Y $,
and can thus be used to construct a universal kernel on $ \mathcal  P(\mathcal  Y) $.

\paragraph{Estimating ACMMD--Rel from data}
To estimate ACMMD--Rel from the data $ \{ X_{i}, Y_{i} \}_{i=1}^{N}  $
and samples from the model $ \{ \tilde{Y}_{i} \sim Q_{|X_{i}} \}_{i=1}^{N} $,
one may try to use the general ACMMD estimator
proposed in Lemma \ref{lemma:cgof_mmd_estimator}, which, specialized 
to the reliability setting, is given by:
\begin{equation*}
\begin{aligned}
    & \quad \quad \quad
    \frac{2}{N(N-1)}\hspace{-0.5em} \sum\limits_{ \substack{ 1 \leq i < j \leq  N } } \hspace{-1em}
    h(Z_{i}, Z_{j}) \\
    &h(Z_{i}, Z_{j}) \coloneqq k_{\mathcal  P(\mathcal  Y)}(Q_{|X_{i}}, Q_{|X_{j}}) g((Y_{i}, \tilde{Y}_{i}), (Y_{j}, \tilde{Y}_{j}))
\end{aligned}
\end{equation*}
This estimator requires evaluating
$ k_{\mathcal  P(\mathcal  Y)}(Q_{|X_i}, Q_{|X_j}) $
for pairs $ i, j $.
Unfortunately, 
exact evaluation of these quantities for the universal kernels proposed in 
\cref{prop:universal_kernel_on_PYS} is in general impossible,
as $ \operatorname{MMD}^{2}(Q_{|X_i}, Q_{|X_j}) $
contains intractable expectations under $ Q_{|X_i} $ and $ Q_{|X_j}$.
However, MMDs can be unbiasedly estimated using samples from
$ Q_{|X_i} $ and $ Q_{|X_j} $ \cite{gretton2012kernel, schrab2022efficient}.
Inspired by this fact, we propose the following estimator:
\begin{equation} \label{eq:acmmd-rel-estimator}
\begin{aligned}
    \widehat{\operatorname{ACMMD}}\operatorname{--Rel}\vphantom{a}^{2} \coloneqq \frac{2}{N(N-1)}\hspace{-0.3em} \sum\limits_{ \substack{ 1 \leq i < j \leq  N } } \hspace{-1em}
    \widehat{h}(Z_{i}, Z_{j}) \\
    \widehat{h}(Z_{i}, Z_{j}) \coloneqq \widehat{k}_{ij} \times g((Y_{i}, \tilde{Y}_{i}), (Y_{j}, \tilde{Y}_{j}))
\end{aligned}
\end{equation}
Here, $ \widehat{  k}_{ij} $ 
is an approximation of 
$ k_{\mathcal  P(\mathcal  Y)}(Q_{|X_{i}}, Q_{|X_{j}}) $
obtained by drawing $ R $ samples
$ \{\tilde{Y}^{r}_{i} \}_{r=1}^{R}$ and $ \{\tilde{Y}^{r}_{j} \}_{r=1}^{R} $ from $ Q_{|X_{i}} $
and $ Q_{|X_{j}} $, and replacing the $\operatorname{MMD}^{2}(Q_{|X_{i}}, Q_{|X_{j}}) $
term in $ k_{\mathcal  P} $ 
by an unbiased 
estimate $ \widehat{ \operatorname{MMD} }\vphantom{a}^2 _{ij}$ computed from these samples.
The full estimation procedure is provided in Algorithm
\ref{alg:ACMMD-Rel}.
This additional approximation step has several implications: first, unlike
$ \widehat{ \operatorname{ACMMD }}\vphantom{a}^2  $, $ \widehat{ \operatorname{ACMMD }}\operatorname{--Rel}\vphantom{a}^2  $
is not unbiased. However, the bias of this estimator can be controlled by increasing the number of samples
$ R $ used to estimate the MMD. Moreover, we show in the next proposition that
the estimator $ \widehat{ \operatorname{ACMMD }}\operatorname{--Rel}\vphantom{a}^2  $ is still consistent
provided that $ R $ is chosen appropriately.
\begin{proposition}\label{prop:acmmd-rel-estimator-consistency}
    Assume that $ k_{\mathcal  Y} $ is bounded.
    Then, if $ R \equiv R(N) $, with $ \lim\limits_{ N  \to \infty }R(N) = +\infty $,
    $ \widehat{\operatorname{ACMMD}}\operatorname{--Rel}\vphantom{a}^2 $
    converges in probability to $
    \operatorname{ACMMD--Rel}\vphantom{a}^2 $ as  $ N  \to \infty $.
\end{proposition}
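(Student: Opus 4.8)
The plan is to compare $\widehat{\operatorname{ACMMD}}\operatorname{--Rel}\vphantom{a}^{2}$ to the \emph{oracle} U-statistic one would obtain with \emph{exact} kernel evaluations $k_{ij} \coloneqq k_{\mathcal P(\mathcal Y)}(Q_{|X_i}, Q_{|X_j})$. Writing $\widehat{k}_{ij} = \exp(-\widehat{\operatorname{MMD}}\vphantom{a}^{2}_{ij}/2\sigma^2)$ and $g_{ij} \coloneqq g((Y_i, \tilde Y_i), (Y_j, \tilde Y_j))$, I would use the decomposition
\begin{equation*}
\widehat{\operatorname{ACMMD}}\operatorname{--Rel}\vphantom{a}^{2} = U_N + \Delta_N, \quad U_N \coloneqq \frac{2}{N(N-1)} \sum_{1 \le i < j \le N} k_{ij}\, g_{ij}, \quad \Delta_N \coloneqq \frac{2}{N(N-1)} \sum_{1 \le i < j \le N} (\widehat{k}_{ij} - k_{ij})\, g_{ij}.
\end{equation*}
First I would handle $U_N$: since $k_{\mathcal P(\mathcal Y)} \in (0,1]$ and $k_{\mathcal Y}$ is bounded, the symmetric kernel $h(Z_i, Z_j) = k_{ij} g_{ij}$ is bounded and hence integrable, so the strong law of large numbers for U-statistics (the same argument used to establish consistency of $\widehat{\operatorname{ACMMD}}\vphantom{a}^{2}$ after \cref{lemma:cgof_mmd_estimator}) gives $U_N \to \operatorname{ACMMD--Rel}\vphantom{a}^{2}$ almost surely, hence in probability. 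It then suffices to show $\Delta_N \to 0$ in probability.

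To control $\Delta_N$, I would pass to absolute values inside the sum --- which conveniently sidesteps the correlations created by reusing the per-index sample sets $\{\tilde Y^r_i\}_{r=1}^R$ across different pairs --- and bound each factor. Because $k_{\mathcal Y}$ is bounded by some $M$, $g$ is a sum of four kernel evaluations and so $|g_{ij}| \le 4M$; likewise the unbiased two-sample estimator $\widehat{\operatorname{MMD}}\vphantom{a}^{2}_{ij}$ is a bounded combination of kernel values, so both $\widehat{\operatorname{MMD}}\vphantom{a}^{2}_{ij}$ and the true $\operatorname{MMD}^2(Q_{|X_i},Q_{|X_j})$ lie in a fixed compact interval $I$. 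On $I$ the map $t \mapsto e^{-t/2\sigma^2}$ is $L$-Lipschitz, whence $|\widehat k_{ij} - k_{ij}| \le L\,|\widehat{\operatorname{MMD}}\vphantom{a}^{2}_{ij} - \operatorname{MMD}^2(Q_{|X_i},Q_{|X_j})|$. Combining these bounds,
\begin{equation*}
\mathbb E[|\Delta_N|] \le \frac{8 M L}{N(N-1)} \sum_{1 \le i < j \le N} \mathbb E\big[\,|\widehat{\operatorname{MMD}}\vphantom{a}^{2}_{ij} - \operatorname{MMD}^2(Q_{|X_i}, Q_{|X_j})|\,\big].
\end{equation*}

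The crux is then a per-pair bound. Conditioning on $(X_i, X_j)$, the estimator $\widehat{\operatorname{MMD}}\vphantom{a}^{2}_{ij}$ is an \emph{unbiased} two-sample MMD U-statistic built from $R$ samples from each of $Q_{|X_i}, Q_{|X_j}$, so by Jensen its conditional $L^1$ error is at most the conditional standard deviation, and standard U-statistic variance bounds for a bounded kernel give $\operatorname{Var}(\widehat{\operatorname{MMD}}\vphantom{a}^{2}_{ij} \mid X_i, X_j) \le C/R$ with $C$ depending only on $M$ and \emph{uniform} in $(X_i, X_j)$. Taking the outer expectation preserves this, so every summand is at most $\sqrt{C/R}$ and $\mathbb E[|\Delta_N|] \le 8ML\sqrt{C/R(N)} \to 0$ as $N \to \infty$ by the assumption $R(N) \to \infty$. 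Thus $\Delta_N \to 0$ in $L^1$, hence in probability, and combined with $U_N \to \operatorname{ACMMD--Rel}\vphantom{a}^{2}$ this yields the claim. I expect the main obstacle to be the uniform (in the conditioning variables) $O(1/R)$ variance bound for $\widehat{\operatorname{MMD}}\vphantom{a}^{2}_{ij}$ together with verifying that reusing samples across pairs does no harm --- both are resolved by the boundedness of $k_{\mathcal Y}$ and by keeping the absolute value inside the double sum.
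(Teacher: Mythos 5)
Your proof is correct, and it rests on exactly the same three ingredients as the paper's: boundedness of $k_{\mathcal Y}$ (hence of $g$ and of the whole summand), Lipschitz continuity of $t \mapsto e^{-t/2\sigma^2}$ to transfer the MMD estimation error to the kernel values, and Jensen's inequality combined with the uniform $O(1/R)$ variance bound for the unbiased two-sample MMD U-statistic, conditioned on $(X_i, X_j)$. The only difference is organizational. The paper treats the full approximate estimator as a single U-statistic $h_a$ on augmented samples $U_i = (Q_{|X_i}, \tilde Y_i^1, \dots, \tilde Y_i^R, \tilde Y_i, Y_i)$, applies Hoeffding's concentration inequality for bounded U-statistics to control the deviation from its own (R-dependent) population value $\operatorname{ACMMD}^2_a$, and then bounds the deterministic bias $|\operatorname{ACMMD}^2_a - \operatorname{ACMMD--Rel}^2|$ by $O(1/\sqrt{R})$. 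You instead split the estimator into the oracle U-statistic $U_N$ (handled by the SLLN, as for $\widehat{\operatorname{ACMMD}}\vphantom{a}^2$) plus a perturbation $\Delta_N$ controlled in $L^1$; your $\mathbb E|\Delta_N|$ bound is a slightly cruder (triangle-inequality) version of the paper's bias bound, but it converges at the same $O(1/\sqrt{R})$ rate and, as you note, putting the absolute value inside the double sum correctly neutralizes the dependence induced by reusing the sample sets $\{\tilde Y^r_i\}$ across pairs. The paper's formulation yields an explicit finite-sample deviation bound (quantifying the trade-off between $N$ and $R$), whereas yours is marginally more elementary and avoids introducing the intermediate population quantity $\operatorname{ACMMD}^2_a$; for the stated convergence-in-probability claim the two are interchangeable.
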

\begin{algorithm}
    \caption{Estimating $\operatorname{ACMMD--Rel}$}
   \label{alg:ACMMD-Rel}
\begin{algorithmic}
   \STATE {\bfseries Input:} $\{X_{i}, Y_{i}, \tilde{Y}_{i}\}_{i=1}^N \stackrel{\text{i.i.d.}}{\sim} \mathbb{ P }_{X} \otimes \mathbb{ P }_{|} \otimes Q_{|}$, model $ Q_{|} $
   \STATE \textbf{Parameters:} kernel $k_{\mathcal  Y}$
   \FOR{$i$ \textbf{in}  1 to $N$}
   \STATE \lbrack $\tilde{Y}^{r}_{i} \sim Q_{|X_{i}}$ \textbf{for} $r$ \textbf{in} 1 to $R$ \rbrack 
   \ENDFOR
   \FOR{$i, j $ \textbf{in}  1 to $N$}
   \STATE \texttt{\%Use, e.g.} \citet[Equation 4]{gretton2012kernel}
   \STATE $ \widehat{ \operatorname{MMD} }^2_{ij} \coloneqq  $ \texttt{estimate\_mmd}$(\{\tilde{Y}^{r}_{i}\}_{r=1}^{R}, \{\tilde{Y}^{r}_{j}\}_{r=1}^{R})$
   \STATE $ \widehat{h}_{ij} \coloneqq e^{-\frac{1}{2 \sigma^2} \widehat{ \operatorname{MMD} }^2_{ij}}
   \times g((Y_{i}, \tilde{Y}_{i}), (Y_{j}, \tilde{Y}_{j}))$
   \ENDFOR
   \vspace{-2em}
   \STATE 
    \begin{flalign*}
    \textbf{return }
      &\frac{2}{N(N-1)}\hspace{-0.2em} \sum\limits_{ \substack{ 1 \leq i < j \leq  N } } \hspace{-1em} \widehat{h}_{ij} &\\
    \end{flalign*}
    \vspace{-3em}
\end{algorithmic}
\end{algorithm}

\paragraph{Testing for reliability with ACMMD--Rel}
As an ACMMD, ACMMD--Rel has the potential to be used to test whether a model
is reliable given some available data: to do so, one can use Algorithm
\ref{alg:mmd_cgof_test}, replacing $ \widehat{ \operatorname{ACMMD} }\vphantom{a}^2  $ by $
\widehat{ \operatorname{ACMMD} }\operatorname{--Rel}\vphantom{a}^2  $, and performing quantile estimation
using the $ \widehat{ h }(Z_{i}, Z_{j}) $ instead of the $ h(Z_{i}, Z_{j}) $.
A full description of the algorithm is provided in Appendix \ref{sec:proof_approx-test-validity}.
An important question to answer is whether
the approximation of using $ \widehat{ h } $ instead of $ h $
affects the false-rejection rate
of the test. We show in the next proposition that this is not the case.

\begin{proposition}\label{prop:acmmd-rel-test-validity}
    Assume that $ k_{\mathcal  Y} $ is bounded, and $ k_{\mathcal
    P(\mathcal  Y)} $ is a kernel of the form of Equation \ref{eq:kernels-on-dists}.
    Then a reliability test using $ \widehat{h}(Z_{i}, Z_{j}) $ instead of
    $ h(Z_{i}, Z_{j}) $ to estimate $ \operatorname{ACMMD--Rel} $ and its 
    $(1-\alpha)$--quantile under $ H_0 $ has a false-rejection rate of
    exactly $ \alpha $.
\end{proposition}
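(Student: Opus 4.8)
The plan is to reduce the claim to the exact randomization argument already established for the base ACMMD test in \cref{sec:level_acmmd}, and then to verify that replacing $h$ by $\widehat h$ preserves the single structural property that argument relies on. The starting observation is that $g$ is an inner product of \emph{difference features}: writing $\Delta_i \coloneqq k_{\mathcal Y}(\tilde Y_i, \cdot) - k_{\mathcal Y}(Y_i, \cdot) \in \mathcal H_{\mathcal Y}$, the reproducing property gives $g((Y_i,\tilde Y_i),(Y_j,\tilde Y_j)) = \langle \Delta_i, \Delta_j\rangle_{\mathcal H_{\mathcal Y}}$, so that $\widehat h(Z_i, Z_j) = \widehat k_{ij}\,\langle \Delta_i, \Delta_j\rangle_{\mathcal H_{\mathcal Y}}$ and $\widehat{\operatorname{ACMMD}}\operatorname{--Rel}^{2} = \tfrac{2}{N(N-1)}\sum_{i<j}\widehat k_{ij}\langle \Delta_i, \Delta_j\rangle_{\mathcal H_{\mathcal Y}}$. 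Both the test statistic and each wild-bootstrap replicate from \cref{eq:wild-bootstrap} are therefore quadratic forms in the $\Delta_i$, with the Rademacher flips acting as $\Delta_i \mapsto W_i \Delta_i$.

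Next I would isolate the group-invariance that holds under $H_0$. When $Q_{|}$ is reliable, $\mathbb P^{Q}_{|q}=q$, so conditionally on $X_i$ both the datum $Y_i$ and the model draw $\tilde Y_i$ are distributed as $Q_{|X_i}$ and are independent; hence $Y_i$ and $\tilde Y_i$ are exchangeable given $X_i$, and swapping them realizes the sign flip $\Delta_i \mapsto -\Delta_i$ while leaving the conditional law unchanged. The only new ingredient relative to the base test is the estimated weight $\widehat k_{ij} = e^{-\frac1{2\sigma^2}\widehat{\operatorname{MMD}}^2_{ij}}$. The key point is that, by the form of \cref{eq:kernels-on-dists} and Algorithm \ref{alg:ACMMD-Rel}, $\widehat{\operatorname{MMD}}^2_{ij}$ — and hence $\widehat k_{ij}$ — is computed \emph{only} from the inputs $X_{1:N}$ and the auxiliary samples $\{\tilde Y^{r}_i\}_{r=1}^R$ drawn from $Q_{|X_i}$ independently of the pair $(Y_i,\tilde Y_i)$ entering $g$. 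Consequently, letting $\mathcal G$ be the $\sigma$-algebra generated by $X_{1:N}$ together with all auxiliary samples, the matrix $(\widehat k_{ij})$ is $\mathcal G$-measurable, while the conditional law of $(\Delta_1,\dots,\Delta_N)$ given $\mathcal G$ remains invariant under arbitrary sign flips $\Delta_i \mapsto \epsilon_i\Delta_i$, $\epsilon\in\{-1,+1\}^N$.

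Given this, the conclusion follows from the exact randomization-test principle, exactly as in \cref{sec:level_acmmd}. Conditionally on $\mathcal G$, applying a uniformly random sign pattern $\epsilon\in\{-1,+1\}^N$ to the $\Delta_i$ leaves the law of $\widehat{\operatorname{ACMMD}}\operatorname{--Rel}^{2}$ unchanged, since $(\widehat k_{ij})$ is $\mathcal G$-measurable and $(\Delta_i)$ is sign-flip symmetric given $\mathcal G$. The observed statistic is the replicate with $\epsilon\equiv 1$, while each wild-bootstrap sample $\widetilde{\operatorname{ACMMD}}\operatorname{--Rel}^{2}_b$ (formed by inserting Rademacher variables as in \cref{eq:wild-bootstrap}, with $\widehat h$ in place of $h$) is the replicate with $\epsilon = W^{b}$; hence the statistic and the $B$ replicates are exchangeable. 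Comparing the statistic to the $(1-\alpha)$-quantile of the replicates therefore yields a conditional false-rejection probability of exactly $\alpha$ for every realization of $\mathcal G$ (under the quantile convention of \cref{sec:level_acmmd}), and averaging over $\mathcal G$ by the tower property gives the unconditional rate of exactly $\alpha$. Boundedness of $k_{\mathcal Y}$ guarantees that $\Delta_i\in\mathcal H_{\mathcal Y}$, $\widehat{\operatorname{MMD}}^2_{ij}$, and $\widehat k_{ij}\in(0,1]$ are all well defined and integrable, so each conditional distribution above is legitimate.

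The step I expect to be the crux is the conditional-invariance claim: one must choose $\mathcal G$ so that $(\widehat k_{ij})$ is measurable with respect to it \emph{and} the sign-flip symmetry of the $\Delta_i$ survives the conditioning. This works precisely because the MMD estimate is built from fresh draws from $Q_{|X_i}$ that are independent of $(Y_i,\tilde Y_i)$; were $\tilde Y_i$ instead reused inside $\widehat{\operatorname{MMD}}^2_{ij}$, the swap $Y_i\leftrightarrow\tilde Y_i$ would perturb $\widehat k_{ij}$, the two factors of $\widehat h$ would no longer decouple under the group action, and exactness would be lost. Verifying this independence at the level of the sampling scheme of Algorithm \ref{alg:ACMMD-Rel}, and confirming that it makes the randomization argument of \cref{sec:level_acmmd} go through verbatim with $\widehat k_{ij}$ in place of $k_{\mathcal P(\mathcal Y)}(Q_{|X_i},Q_{|X_j})$, is the heart of the proof.
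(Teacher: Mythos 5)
Your overall architecture matches the paper's: reduce the claim to the exact randomization argument of \cref{sec:level_acmmd} by observing that the Rademacher flip acts only on the pair $(Y_{i}, \tilde{Y}_{i})$, while the weight $\widehat{k}_{ij}$ is built from the auxiliary draws $\{\tilde{Y}^{r}_{i}\}_{r=1}^{R}$, which the flip does not touch. The paper formalizes precisely this by packaging everything into the augmented variable $U_{i} = (Q_{|X_{i}}, \tilde{Y}^{1}_{i}, \dots, \tilde{Y}^{R}_{i}, \tilde{Y}_{i}, Y_{i})$, showing the wild bootstrap equals the U-statistic evaluated on the swapped $U_{i}^{\sigma}$, and then invoking exchangeability plus the quantile argument of \cref{lemma:level_acmmd}; your difference-feature and $\mathcal G$-conditioning phrasing is a repackaging of the same idea, and your identification of the crux (that $\widehat{\operatorname{MMD}}^{2}_{ij}$ must not reuse $\tilde{Y}_{i}$) is exactly the point the paper's augmented formulation is designed to capture.

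The one step that does not hold as written is your justification of the exchangeability. You assert that under the reliability null, ``conditionally on $X_{i}$ both the datum $Y_{i}$ and the model draw $\tilde{Y}_{i}$ are distributed as $Q_{|X_{i}}$.'' Reliability is the statement $\mathbb{P}(Y \in \cdot \mid Q_{|X} = q) = q$: it constrains the law of $Y_{i}$ given the \emph{prediction} $Q_{|X_{i}}$, not given $X_{i}$. Conditionally on $X_{i} = x$ one has $(Y_{i}, \tilde{Y}_{i}) \sim \mathbb{P}_{|x} \otimes Q_{|x}$, which is not exchangeable unless $\mathbb{P}_{|x} = Q_{|x}$ --- a strictly stronger hypothesis than reliability. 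Since your $\mathcal G$ contains $X_{1:N}$, the claimed sign-flip invariance of $(\Delta_{i})$ given $\mathcal G$ fails under the reliability null as stated. There are two repairs: (i) take $\mathcal G$ to be generated by $(Q_{|X_{1}}, \dots, Q_{|X_{N}})$ and the auxiliary samples only --- this suffices because \cref{alg:ACMMD-Rel} computes $\widehat{k}_{ij}$ from the auxiliary samples alone, and under reliability the variables $(Y_{i}, \tilde{Y}_{i}, \tilde{Y}^{1}_{i}, \dots, \tilde{Y}^{R}_{i})$ are conditionally i.i.d.\ with law $q$ given $Q_{|X_{i}} = q$, so the pair $(Y_{i}, \tilde{Y}_{i})$ remains exchangeable given this coarser $\mathcal G$; or (ii) prove exactness under the stronger null $\mathbb{P}_{|x} = Q_{|x}$ $\mathbb{P}_{X}$--a.e., which is in fact what the paper's own proof does, and under which your conditioning on $X_{1:N}$ is harmless. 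With either repair the remainder of your argument is sound.
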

\section{Related Work}\label{sec:related-work}

{\bfseries Goodness-of-fit methods}
The \textit{goodness-of-fit} problem is a well-studied problem in
the statistics and machine learning literature, for which many methods
were developed \cite{chwialkowski2016kernel,
Gorham2017-rt,grathwohl2020learning, amin2023kernelized,baum2023kernel}.
Impressively, these methods can operate directly from the model's analytical
form, without requiring access to samples from the model -- which may be hard
to generate. In these works, goodness-of-fit is defined as
the problem of evaluating the fit of \emph{unconditional} models to their data,
which is unlike the conditional goodness-of-fit problem we consider here.
Evaluating conditional goodness-of-fit with kernels was recently studied in 
\citet{jitkrittum2020testing}. 
However, the proposed method requires the output space $ \mathcal  Y $ to be a subset of
$ \mathbb  R^{d} $, and is thus unsuitable for conditional sequence models.
The use of conditional goodness-of-fit metrics to evaluate reliability 
was also done in \citet{Glaser2023-sj}, in a method also limited to continuous output spaces.
Finally, we note that $ \operatorname{ACMMD--Rel}\vphantom{a}^2$
recovers an existing calibration metric, the Squared Kernel Calibration Error (SKCE) of
\citet{widmann2022calibration}. However, the latter did not study the problems of
universality, tractability and test validity in the case of sequence-valued outputs.

{\bfseries Deep Protein Design Models}
(Deep Learning--powered) conditional probability models have gained significant
momentum in computational biology during the last decade. In particular, such
models have revolutionized the protein design field
\citep{Johnson2023YangScoring, Bennett2023binderdesign}.
Inverse folding models are trained on protein
structures and sequences in the protein data bank (PDB)
\citep{ingraham2019generative, Hsu2022esmif1, Dauparas2022pmpnn}. They
condition a sequence distribution on an input protein structure --- thus learning
what sequences would likely fold into that structure. The designs from these
methods have been shown to be highly stable and retain function
\citep{Sumida2024improvePMPNN}. However, many of the leaps made using these
models have used small, simple structural scaffolds (like loop-helix-loop
motifs) \citep{Bennett2023binderdesign, Watson2023RFdiffusion}. Protein
engineers interested in leveraging these tools for novel scaffolds need to know
how accurate and reliable the model is on average. If the model is too imprecise, one
might wish to gather more data and train more bespoke models before using the
method to design experiments.

\section{Experiments}\label{sec:experiments}
We now investigate the behavior and utility of the ACMMD and ACMMD--Rel metrics and tests
in practice. We start with a synthetic example showing that ACMMD is a natural measure of model
distance. We then perform an extended analysis of a state-of-the-art inverse folding model, ProteinMPNN.
We show that ACMMD can detect small perturbations in the model, and that it can be used to tune
its temperature parameter. Finally, we analyze the absolute performance of ProteinMPNN.

\subsection{A toy synthetic setting} \label{sec:toy-synthetic}
We first study the behavior of ACMMD and ACMMD--Rel
in a synthetic setting where the data distribution and the model are simple
generative models on sequences.
We set the input variable $ X $ to be a single scalar $ p $
drawn from some distribution $ \mathbb{ P }_{X} $ with support in $ (0.3, 0.5) $.
$ Y $ is a sequence of arbitrary length with alphabet $ \mathcal  A = \left \{ A, B, \mathrm{STOP} \right \}  $.
We set the conditional distribution of $ Y $ given $ p $ to be:
\begin{equation*} \label{eq:toy-data}
\begin{aligned}
        p(y_n | y_{0:n-1}, x=p) =
        \begin{cases}
            A & \text{with probability } p \\
            B & \text{with probability } p \\
            \text{STOP} & \text{with probability } 1 - 2p
        \end{cases}
\end{aligned}
\end{equation*}
so long as $ y_{n-1} \neq \text{STOP} $. The model distribution
$ Q_{|} $ is the same as the data, except for the fact that
the first factor $ Q_{|p}(y_0) $ is perturbed by a parameter $ \Delta p $:

\begin{equation*} \label{eq:toy-model}
\begin{aligned}
	Q_{|p}(y_0)
        &= \begin{cases}
	A & \text{with probability } p - \Delta p  \\
        B & \text{with probability } p + \Delta p \\
        \text{STOP} & \text{with probability } 1 - 2p
	\end{cases}  \\
\end{aligned}
\end{equation*}

We set the kernel on $ \mathcal  Y $ to be the exponentiated
Hamming distance kernel $ k_{\mathcal  Y}(y, y') = e^{-d_H(y, y')} $,
where $ d_H(y, y') $ is the Hamming distance between $ y $ and $ y' $, and
$ k_{\mathcal  X}  $ to be the Gaussian kernel $ k_{\mathcal  X}(p, p') = e^{-\frac{1}{2}(p - p')^2} $.
With such choices, it is possible to show that:
\begin{equation*} \label{}
\begin{aligned}
    \operatorname{ACMMD}(\mathbb{ P }_{|}, Q_{|}) &= C \lvert \Delta p \rvert 
\end{aligned}
\end{equation*}
for some $ C > 0 $ that does not depend on $ \Delta p $, and is computable
in closed form for discrete priors on $ X $. The proof and expression of $ C $
are given in \cref{sec:acmmd-proofs-synthetic}. 
From this expression, we immediately see that ACMMD is 0 only if $ \Delta p = 0 $, a manifestation of 
\cref{lemma:cgof_mmd} which guarantees that the ACMMD can detect any mismatch
between the model and the data when $ k_{\mathcal  X} $ and $ k_{\mathcal  Y} $ are universal.
Moreover, in this case, the ACMMD depends monotonically on the shift $ |\Delta p| $.
Since $ |\Delta p| $ represents a natural measure of how different
the model is from the data, this fact suggests that the ACMMD is a natural,
well-behaved measure of model distance. Additionally, we plot the average rejection rate
of the ACMMD test for various number of samples and shifts in \cref{fig:acmmd-calibration-main}.
The results for $ \Delta p = 0 $ confirm that our test has the correct specified type-I error rate (0.05).
Moreover, we see that the power of the test increases with the number of samples, and the shift $ |\Delta p| $.

\begin{figure}[htbp]
    \includegraphics[width=.23\textwidth]{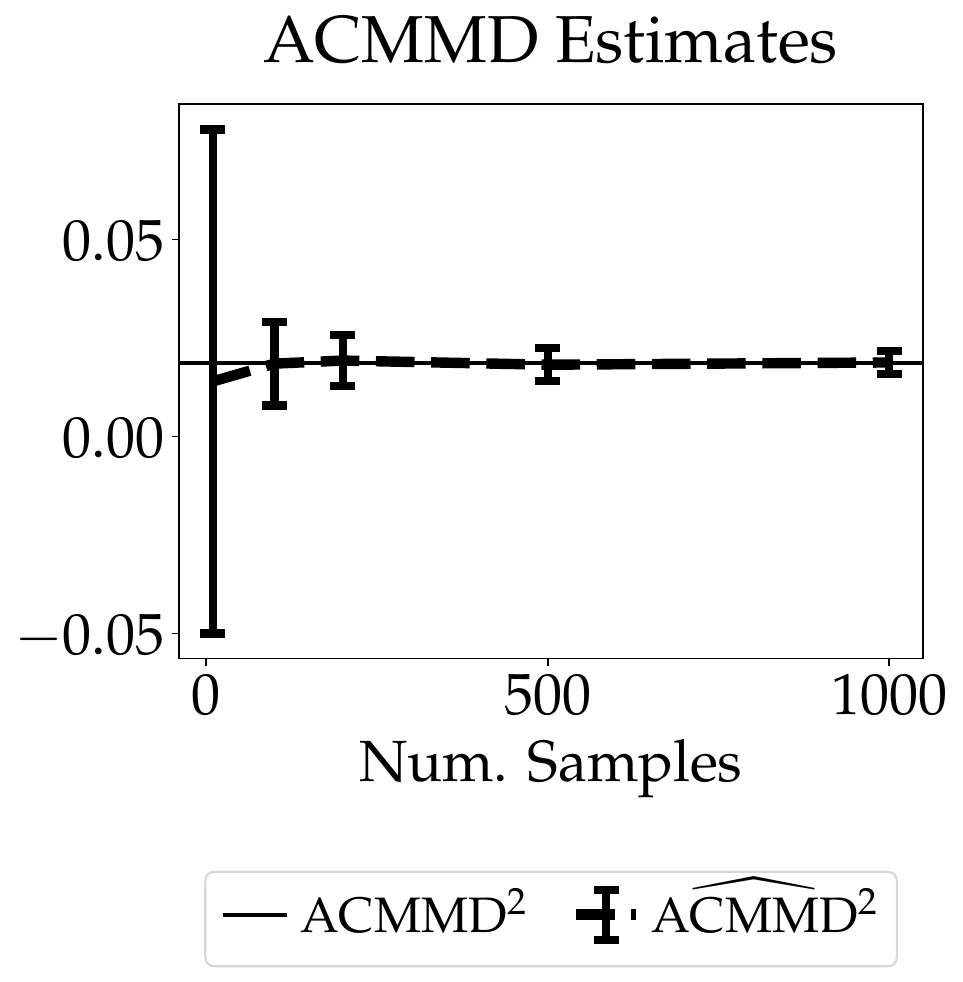}
    \includegraphics[width=.23\textwidth]{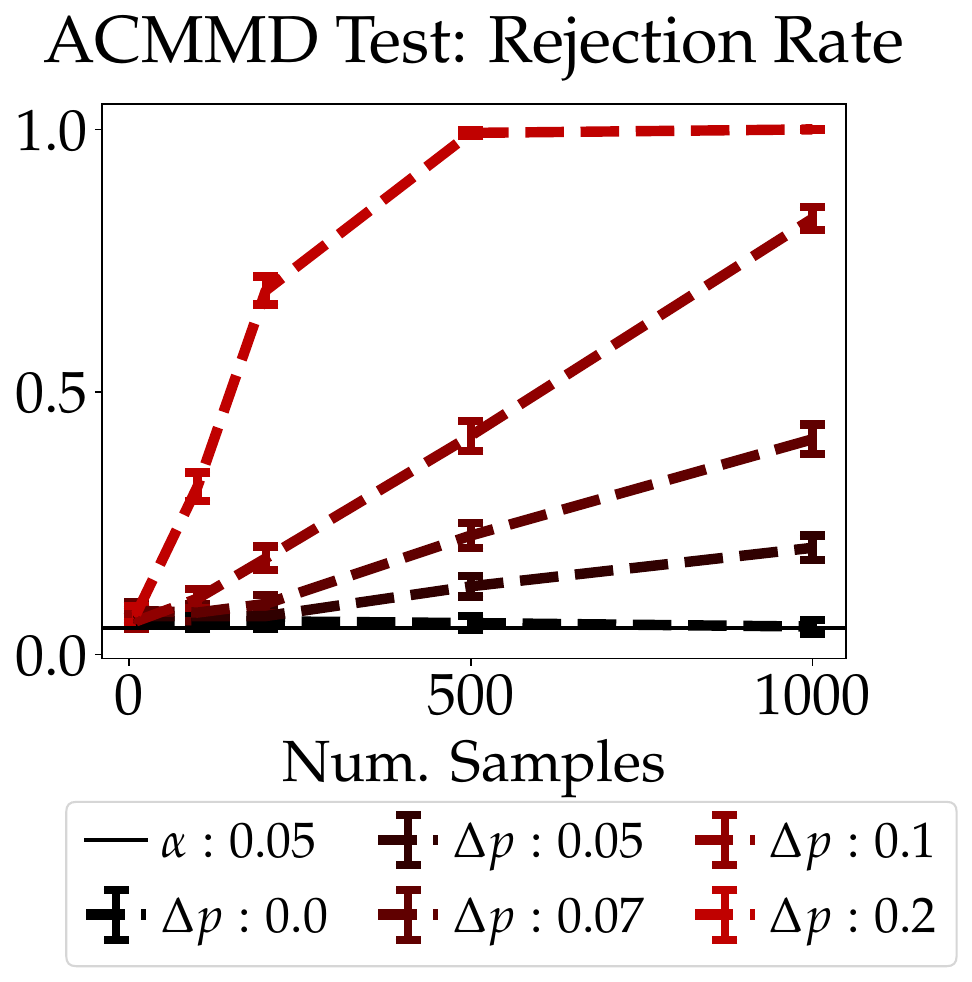}
    \caption{
    Left panel: ACMMD estimates for a fixed shift value $ \Delta p = 0.25 $
    and various number of samples in the synthetic example of \cref{sec:toy-synthetic}.
    The analytic ACMMD value is given by the horizontal line.
    Right panel: ACMMD test average rejection rate
    for various number of samples and shifts in the same setting.
}
    \label{fig:acmmd-calibration-main}
\end{figure}

\subsection{ACMMD Case Study: Inverse Folding Models}

To demonstrate the utility of the ACMMD measures and tests, we apply them to
evaluate inverse folding models, a popular model framework used in protein
design. Inverse folding models seek a distribution of amino acid sequences
that are likely to fold into a given input three-dimensional structure, as discussed
in Section \ref{sec:augmented_conditional_mmd}.
We focus our experiments on
evaluating ProteinMPNN \citep{Dauparas2022pmpnn}, a sampleable, commonly used
model in this class.
The sampling temperature $ T $ of ProteinMPNN can also be varied, letting the user
control the trade-off between accuracy and diversity of the generated
sequences.

\textbf{Data} \quad We leveraged the CATH taxonomy to select a set of diverse
(in sequence and structural topologies) protein structures to perform our ACMMD
test on. CATH is a taxonomy of protein structures that categorizes proteins
according to a hierarchy of structural organization \citep{Sillitoe2021CATH}.
We used the S60 redundancy filtered set which includes proteins that are at
least 60\% different in sequence identity from each other. Of these, we
selected all single domain monomers (proteins where only one topological domain
is found in the monomer), and removed any topologies that had fewer than 10
chains in its classification. This left us with 17,540 structures.

\textbf{Choice of kernel} \quad
Key to the performance of our metrics is the choice of the kernels
$ k_{\mathcal  X} $, $ k_{\mathcal  Y} $ and $ k_{\mathcal  P(\mathcal  Y)} $.
For $ k_{\mathcal  Y} $, we propose to use kernels that first embed each element
-- or \emph{residue} -- of a sequence $ y $ 
using an embedding function $ \phi_{\mathcal  Y}: \mathcal  A \times \mathcal  Y \longmapsto
\mathbb{R}^{d_{\mathcal  Y}} $, and evaluating a euclidean kernel on
$ \mathbb{R}^{d_{\mathcal  Y}} $ on the mean of the resulting embeddings,
yielding a kernel of the form:
\begin{equation*} \label{}
\begin{aligned}
k_{\mathcal  Y}(y, y') = k_{\mathbb{R}^{d_{\mathcal  Y}}}\!\!\left(\frac{1}{|y|}\sum\limits_{ i=1 }^{ |y| }\phi_{\mathcal  Y}(y_i, y), \frac{1}{|y'|}\sum\limits_{ i=1 }^{ |y'| }\phi_{\mathcal  Y}(y'_i, y')\right)
\end{aligned}
\end{equation*}
where we noted $ y = (y_1, \ldots, y_{|y|}) $. As the input space $ \mathcal  X $ is
also sequence-valued, we follow the same recipe to construct our a kernel $ k_{\mathcal  X} $,
using an embedding function 
$ \phi_{\mathcal  X}:  \mathbb{ R }^{3} \times \mathcal  X \longmapsto \mathbb{R}^{d_{\mathcal  X}} $.
Finally, for the kernel on $ \mathcal  P(\mathcal  Y) $, we will use a kernel of the form of 
\cref{eq:kernels-on-dists}, with kernel $ k_{\mathcal  Y} $ described above to compute the inner MMD.
We set our embedding functions $ \phi_{\mathcal X} $ and $ \phi_{\mathcal  Y} $ to
a pair of recent pre-trained neural networks that are commonly used for
representation learning of protein sequences and structures: Gearnet
\cite{zhang2023enhancing} for structures, and ESM-2 \cite{lin2023evolutionary}
for sequences.
Such two-step kernels allow us to
instill the complex structure present in the distribution of protein structures
and sequences within the ACMMD maximizing the performance and meaningfulness of
our evaluation pipeline. Whether \cref{prop:universal_kernel_on_PYS} holds
for these kernels is an open question, but we find that they perform well in
our experiments.

\begin{figure}[htbp]
    \includegraphics[width=0.46\textwidth]{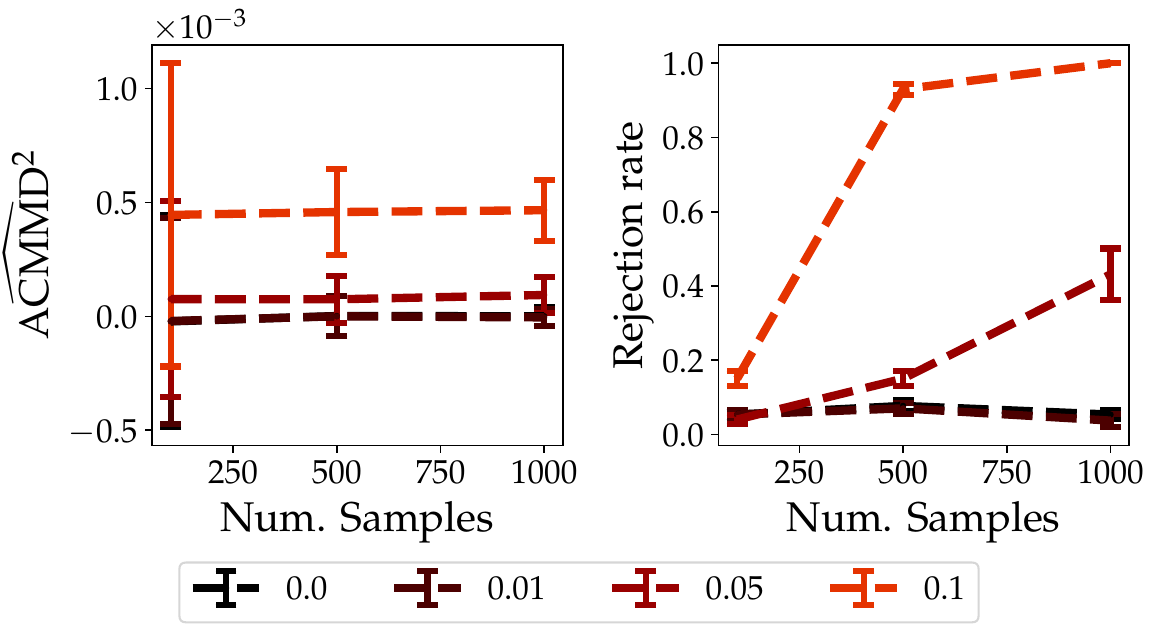}
    \caption{
        Values of $ \widehat{ \operatorname{ACMMD} }\vphantom{a}^2 $,
        (left) and of the average rejection rate of the ACMMD test (right) in the setting
        described in \ref{sec:discriminative}.
        Each line corresponds to a different value for $ \delta T $.
    } \label{fig:synthetic_exp_pmpnn_cgof}
\end{figure}

\subsubsection{The Discriminative power of \texorpdfstring{$ \operatorname{ACMMD}$}{ACMMD}}\label{sec:discriminative}

We first propose to evaluate the behavior of the ACMMD and its associated test
when comparing a known ground truth and a model distribution differing from the
ground truth in a controlled manner. To this end, we set the ground truth to be
a pre-trained ProteinMPNN model $ Q^{T}_{|} $ with temperature $ T $, and the model
to be the same model $ Q^{T+  \delta T}_{|} $ with temperature $ T + \delta T $. As ProteinMPNN's probability
distribution is a continuous function of $T$, small changes in $ T $ result in
small changes in the predicted distribution which will be hard to detect,
translating into ``low'' values for $ \widehat{ \operatorname{ACMMD} }\vphantom{a}^2 $
relative to larger temperature changes. Conversely, we posit
that large changes in $ T $ will result in large changes in the model
distribution, and will be simpler to detect by the ACMMD. To test these
hypotheses, we performed an estimation of $ \operatorname{ACMMD}\vphantom{a}^2 $
for a ground truth temperature $ T=0.1 $ (the default in the
ProteinMPNN documentation) and $ \delta T \in \left \{ 0, 0.01, 0.05,
0.1 \right \} $. We used the winged helix-like DNA binding domain superfamily
(CATH ID: 1.10.10.10), and performed bootstrap sampling to produce dataset
sizes ranging from $ 100 $ to  $ 1000 $, and $ 100 $ different random seeds in
order to obtain confidence intervals of our estimates.

\begin{figure}[htbp]
    \includegraphics[width=0.46\textwidth]{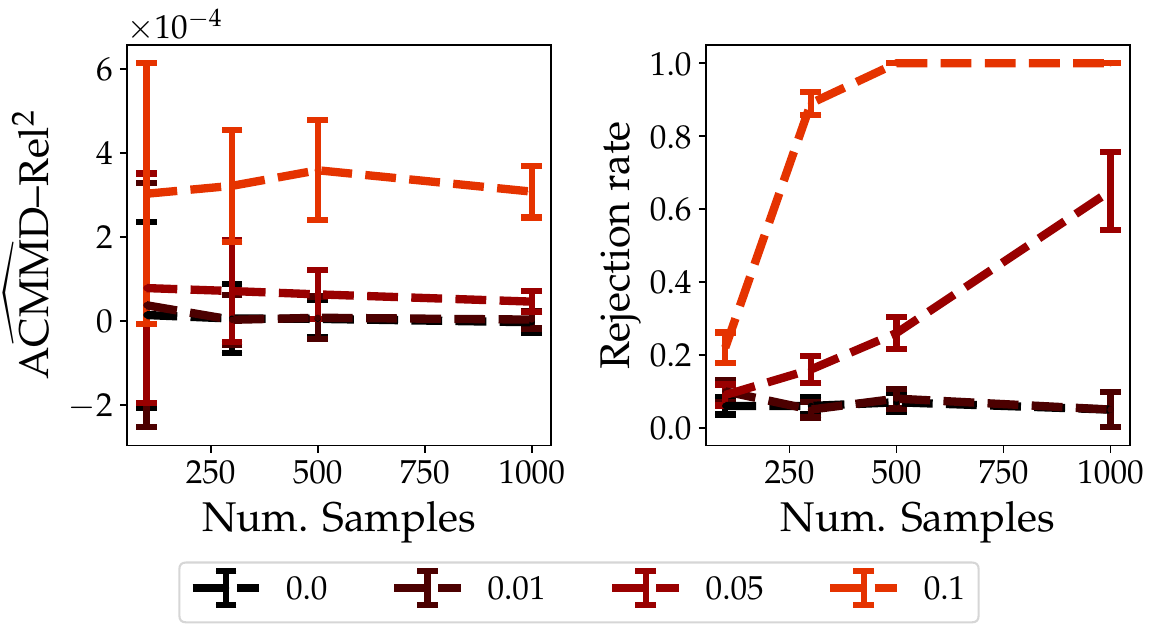}
    \caption{
        Values of $ \widehat{ \operatorname{ACMMD} }\operatorname{--Rel}\vphantom{a}^2 $,
        (left) and of the average rejection rate of the ACMMD--Rel test (right)
        in the setting described in \cref{sec:discriminative}.
        Each line corresponds to a different value for $ \delta T $.
    } \label{fig:synthetic_exp_pmpnn_calibration}
\end{figure}

The results are shown in Figure \ref{fig:synthetic_exp_pmpnn_cgof}. As expected, the value of $
\operatorname{ACMMD}\vphantom{a}^2(Q^{T}_{|}, Q^{T + \delta T}_{|}) $ robustly increases with
increasing values of $ \delta T $.
Additionally, we performed the ACMMD test of Section \ref{sec:testing}
with a target type-I error rate of $  \alpha=0.05 $, and $ 100 $ permutations
to estimate the $ 1-\alpha $ quantile of the null distribution for the same
values of $ N $ and $ \delta T $, and computed the average rejection
rate of the null hypothesis $ H_0: \mathrm{ACMMD}\vphantom{a}^2(Q^{T}_{|}, Q^{T + \delta T}_{|}) > 0 $,
which, if $k _{\mathcal  X}$ and $ k_{\mathcal  Y} $ are universal, is equivalent to $ H_0: \delta T = 0 $.
The results, shown in Figure
\ref{fig:synthetic_exp_pmpnn_cgof} (right), empirically confirm that the ACMMD
test controls its type-I error rate and is able to detect differences in
temperatures of an order relevant for ProteinMPNN. Similarly,
we evaluate the behavior of the ACMMD--Rel, which is used to assess
the (lack of) reliability of between model $ Q^{T+\delta T}_{|} $ w.r.t the data 
$ \mathbb{ P }_{|X} \otimes  Q^{T}_{|} $, the assumption being that $ Q^{T + \delta T}_{|} $ 
is not reliable when $ \delta T \ne 0 $. The results are shown in Figure
\ref{fig:synthetic_exp_pmpnn_calibration}, and exhibits similar behavior.

\subsubsection{Evaluation of ProteinMPNN on the CATH dataset}\label{sec:proteinmpnn-whole-data-evaluation}
Now that we have confirmed the discriminative power of the ACMMD 
on semi-synthetic data, we use our tests to evaluate ProteinMPNN
against real-world protein structures and sequences from the CATH dataset. We
perform a whole-data evaluation, using samples of 5000 proteins across all
families in the dataset. Then we perform a fine-grain evaluation on a
subset of CATH superfamilies.

\paragraph{Whole-data Evaluation}

We first study the deviation of ProteinMPNN from the true data by computing $
\widehat{ \operatorname{ACMMD} }\vphantom{a}^2  $ and estimating
its mean and variance by bootstrapping over 10 random seeds. We find that
ProteinMPNN with no temperature adjustment ($ T=1.0$) has an
$ \widehat{ \operatorname{ACMMD} }\vphantom{a}^2  $ value of 0.0916 (and a p-value  $ < 0.01 $).
Comparing this to the criterion values obtained on similar dataset sizes in the toy
data experiments demonstrates that the model does not fit the test data. This suggests that
there is still much room for improvement on solving the inverse folding
problem.

\vspace{-1.em}

\paragraph{On optimal temperature choices for ProteinMPNN}
Practitioners vary the sampling temperature as a heuristic method for sampling
more certain sequences from ProteinMPNN; lower temperature settings have been
found to generate sequences with fewer unrealistic artifacts (e.g. runs of
alanines) which fold to more stable structures \citep{Sumida2024improvePMPNN}.
However, the relationship between
sampling temperature, model reliability, and design accuracy has not been fully
established. To thoroughly evaluate this, designs from different sampling
temperatures conditioned on a diverse set of backbone structures would need to
be experimentally characterized, which is resourse intensive in practice.
We leverage the ACMMD to understand at what sampling temperature ProteinMPNN
best fits the data, which gives insight as to what temperature is optimum, by
computing $ \widehat{ \operatorname{ACMMD} }\vphantom{a}^2  $ and
$ \widehat{ \operatorname{ACMMD} }\operatorname{--Rel}\vphantom{a}^2 $ for varying temperature values
across $ 10 $ seeds for each
temperature value. The results are shown in Figure
\ref{fig:whole_dataset_cgof_calibration}.

First, we observe that reducing the temperature below $1.0$ improves both the
model's goodness-of-fit and its reliability. This corroborates the empirical
design success of lowering the sampling temperature, suggesting that greater
model fit may increase the quality of samples from the model.
The decrease in reliability at higher temperature shows that even though
increasing the temperature increases the diversity of the model's predictions,
this diversity does not necessarily capture the one of the data distribution,
as for instance the prior would.
The optimal temperature from the perspective of
goodness-of-fit is $ 0.4 $ (which lies outside the suggested temperature
range of 0.1-0.3 in the ProteinMPNN documentation \cite{Dauparas2022pmpnn}).
However, we notice that model reliability continues to improve with even lower sampling
temperatures while accuracy slightly increases, suggesting a trade-off between reliability and accuracy.
Further experiments will determine how this trade-off 
manifests in the quality of designs from low-temperature settings.

\begin{figure}[htbp]
    \begin{center}
    \includegraphics[width=0.45\textwidth]{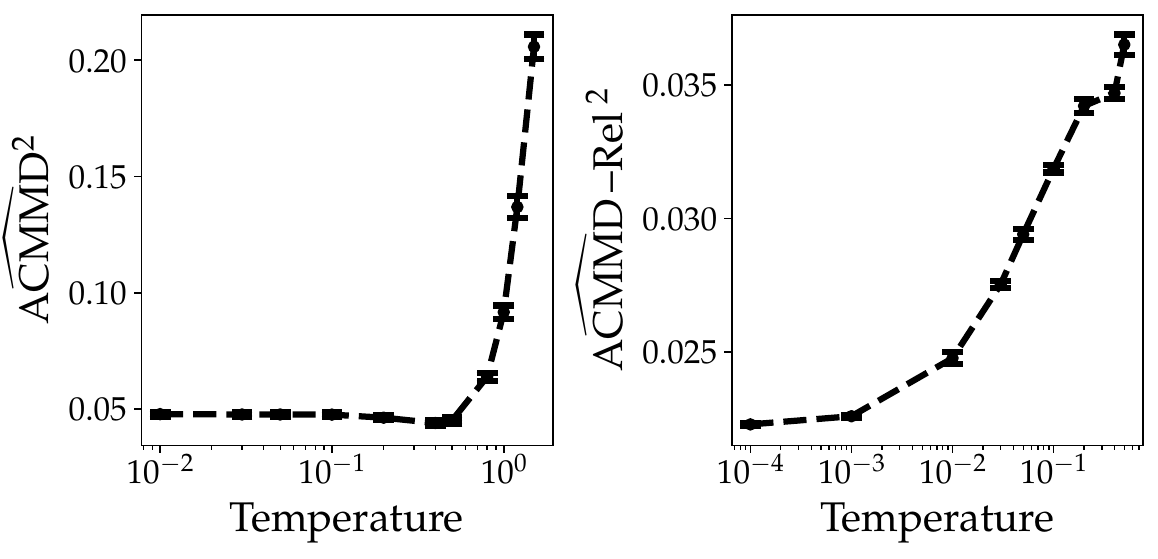}
    \end{center}
    
    \vspace{-2em}
    \caption{
   Evolution of $ \widehat{ \operatorname{ACMMD} }\vphantom{a}^2  $ (left)
   and $ \widehat{ \operatorname{ACMMD}}\operatorname{--Rel}\vphantom{a}^2  $ (right)
   between a pre-trained ProteinMPNN model and the CATH S60 reference dataset, for
   varying temperature values
 } \label{fig:whole_dataset_cgof_calibration}
\end{figure}

\vspace{-1em}
\paragraph{Structural superfamily evaluation}\label{sec:superfamily-evaluation}
The (H)omologous superfamily tier within the CAT(H) hierarchy groups proteins with the same
similar folds and sequence identity. While
ProteinMPNN has shown great performance in designing particular structural scaffolds, a
practitioner aiming to leverage this model on a yet untested structural family
may want some insight as to how well ProteinMPNN may fit the distribution of
proteins they are interested in. Thus, we performed ACMMD evaluation separately on
individual superfamilies contained in our dataset to gain insights on what types of structures
ProteinMPNN does or does not fit well. We filtered the superfamilies for groupings with at least
500 proteins under a length of 100, yielding 11 families.
The results are shown in Figure \ref{fig:superfamily_T_bar}. 
We find that the model fit varies across families and the fit ranking
is largely maintained at different temperatures. With no temperature adjustment
($T=1.0$) the best fit superfamily (lowest $ \widehat{ \operatorname{ACMMD}}\vphantom{a}^2 $)
is the Homeodomain-like proteins (CATH ID:
1.10.10.60). These structures are largely dominated by helical bundles - a
class of proteins that ProteinMPNN has demonstrated success on designing
\citep{Dauparas2022pmpnn, Watson2023RFdiffusion, Bennett2023binderdesign}.
While the Immunoglobulin superfamily has the highest fit at lower sampling
temperatures, we note that most of an immunoglobulin structure consists of
the beta sandwich of the framework, while, for antibody design, engineers are
often most interested in the unstructured complementarity determining regions
(CDRs) of antibodies \citep{Kunik2013CDRloops, Liu2020Gifford, Jin2022HERN}. As
the criterion is calculated across the entire sequence, this may not reflect
that ProteinMPNN has learned the distribution of CDR loops well. Further work
will extend these tests to focus on subsequences of a domain to answer
specific questions of model fit on regions of interest.

\begin{figure}[htbp]
    \begin{center}
    \includegraphics[width=.43\textwidth]{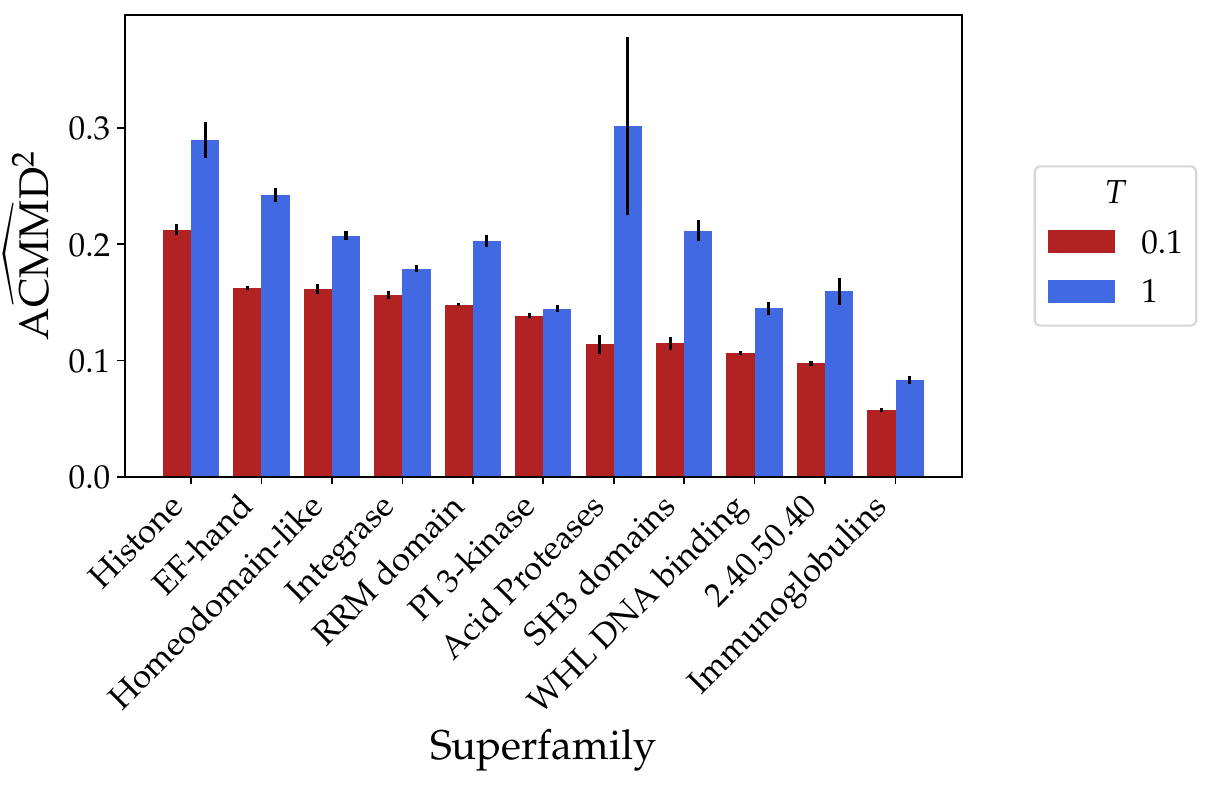}
    \end{center}
    \vspace{-2em}
    \caption{ Estimated value $ \widehat{ \operatorname{ACMMD} }\vphantom{a}^2 $ between ProteinMPNN and
        the CATH S60 reference dataset on a subset of 10 superfamilies for two 
        different temperatures $ T=1.0 $ and $ T=0.1 $.
    }
    \label{fig:superfamily_T_bar}
\end{figure}

\vspace{-1em}
\section{Discussion}
Advancing the computational evaluation of conditional sequence models is crucial for accelerating the development of these methods for protein engineering. Given the limitations of current evaluation methods, and leveraging recent advancements in kernel methods for designing tests of goodness-of-fit and calibration, we propose a criterion and its associated test to principledly evaluate protein sequence models for how well they have learned input-conditioned sequence distributions.
We discuss the statistical properties of our metrics and develop testing frameworks from them.
Finally, we leverage them to investigate the performance of inverse folding models under default and temperature-adjusted settings. We develop novel insights on ideal temperature settings for ProteinMPNN and discuss the trade-off between design accuracy and model calibration that our tests demonstrate for lower temperatures.
Future work can perform a more fine-grained evaluation, for example investigating which structures in particular cause the model to make unreliable predictions and what features of the model's predictions do not match the data through the use of witness functions, a by-product of MMDs \citep{Lloyd2015-vh}.
We also note that protein engineering goals may differ from pure modeling goals, and whether performance under our metrics reflect experimental design success rates requires further investigation to determine. Yet, barring orthogonal \textit{in silico} validation data or experimental testing, our methods offer a powerful framework to test conditional sequence models for desirable statistical properties.

\section* {Impact Statement}
The tools developed in this work assess the quality of sequences predictors. As such, they have the potential to influence various procedures in protein design, and, on longer timescales, healthcare. However, the conclusions that they provide are only statistical: while they are guaranteed to hold on average, they will not hold every time. Such tools should thus be used with caution, and in conjunction with external help from domain experts to ensure that the real-world actions they will influence remain beneficial to society.

\bibliography{bibpierre,bibsteffanie,smallbib}
\bibliographystyle{icml2024}

\newpage
\appendix
\onecolumn
\allowdisplaybreaks

\begin{center}
    \textbf{\Large Supplementary Material of the paper \emph{Kernel-Based Evaluation of Conditional Biological Sequence Models}}
\end{center}

\section{Proof of Lemma \ref{lemma:cgof_mmd}}\label{sec:proof_cgof_mmd}
Let us first re-state the lemma in its complete form.
\begin{lemma*}[Complete form of Lemma \ref{lemma:cgof_mmd}]
    Assume that $ \mathcal  X$ is locally-compact and second countable.
    Moreover, assume that $ k_{\mathcal  X \times \mathcal  Y} =
    k_{\mathcal  X} \otimes k_{\mathcal  Y}$, and that $ k_{\mathcal  X} $,
    $ k_{\mathcal Y} $ satisfy the integrability conditions $
    \mathbb{ E }\left \lbrack k_{\mathcal  X}(X, X)k_{\mathcal  Y}(Y, Y) \right \rbrack   < +\infty$ and
    $ \mathbb{ E } \left \lbrack k_{\mathcal  Y}(Y, Y) \right \rbrack  <
    +\infty$ (and similarly for $ \tilde{Y}$).  Then,
    \begin{equation*} 
    \begin{aligned}
    \operatorname{ACMMD}^2(\mathbb P_{|}, Q_{|}) =  \left \| T_{K_{\mathcal
    X}}(\mu_{\mathbb{ P }_{|}} - \mu_{Q_{|}}) \right \|^{2}_{\mathcal
    H_{\mathcal  X, \mathcal  H_\mathcal  Y}}
    \end{aligned}
    \end{equation*}
    Where $ \mu_{\mathbb{  P}_{|}} $  and $ \mu_{Q_{|}} $ are the conditional mean
    embeddings \cite{park2020measure} of $ \mathbb P_{|} $
    and $ Q_{|} $, given by:
    $ \mu_{\mathbb{ P }_{|}}: x  \longmapsto \mathbb{ E }_{y \sim \mathbb{ P }_{|x}}
    k_{\mathcal  Y}(y, \cdot) $ (and similarly for $ Q_{|} $), 
    $ K_{\mathcal  X}(x, x') \coloneqq
    k_{\mathcal X}(x, x') I_{\mathcal  H_{\mathcal  Y}} $ is an
    operator-valued kernel with associated vector-valued RKHS $ \mathcal
    H_{\mathcal X, \mathcal  H _\mathcal   Y} \subset L^{2}_{\mathbb{ P }_X}(\mathcal  X, \mathcal
    H_{\mathcal Y}) $, and $ T_{K_{\mathcal  X}} $ is its associated integral
    operator from $ L^{2}_{\mathbb{ P }_X}(\mathcal  X, \mathcal  H_{\mathcal  X, \mathcal  H_\mathcal Y}) $
    to $ \mathcal  H_{\mathcal  X, \mathcal H_{\mathcal  Y}} $, defined as
    \begin{equation*} 
        T_{K_{\mathcal  X}} f(x) = \int_{\mathcal  X} K_{\mathcal  X}(x, x') f(x') \mathbb P_X(\mathrm{d}x') \in \mathcal  H_{\mathcal  Y} \\
    \end{equation*}
    for all $ f \in L^2_{\mathbb{ P }_{X}}(\mathcal  X, \mathcal  H_{\mathcal Y}) $
    and $ x \in \mathcal  X $. Moreover, if $ k_{\mathcal  X} $ and $
    k_{\mathcal  Y} $ are $ C_0 $-universal \footnote{A kernel $ k $ is $ C_0
        $-universal if the associated RKHS $ \mathcal  H_k $ is dense in $
    C_0(\mathcal  X) $, the space of continuous functions on $ \mathcal  X $
    vanishing at infinity \cite{sriperumbudur2010relation} } 
    \begin{equation*}
        \operatorname{ACMMD}(\mathbb{ P }_{|}, Q_{|}) = 0 \iff \mu_{\mathbb{ P }_{|x}} = \mu_{Q_{|x}}, \quad  \mathbb{ P }_X \text{-a.e.}
    \end{equation*}
\end{lemma*}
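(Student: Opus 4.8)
The plan is to prove the two assertions of the complete form in sequence: first the norm identity, then the vanishing characterization. For the identity, I would begin from the standard fact that the MMD equals the distance between kernel mean embeddings, so that $\operatorname{ACMMD}^2(\mathbb{P}_{|}, Q_{|}) = \|\mu_1 - \mu_2\|^2_{\mathcal{H}_{\mathcal{X}} \otimes \mathcal{H}_{\mathcal{Y}}}$, where $\mu_1, \mu_2$ are the embeddings of $\mathbb{P}_X \otimes \mathbb{P}_{|}$ and $\mathbb{P}_X \otimes Q_{|}$ in the RKHS of the tensor-product kernel, which is $\mathcal{H}_{\mathcal{X}} \otimes \mathcal{H}_{\mathcal{Y}}$. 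The condition $\mathbb{E}[k_{\mathcal{X}}(X,X)k_{\mathcal{Y}}(Y,Y)] < \infty$ guarantees (via Bochner integrability) that $\mu_1, \mu_2$ exist, and $\mathbb{E}[k_{\mathcal{Y}}(Y,Y)] < \infty$ guarantees that the conditional mean embeddings define an element of $L^2_{\mathbb{P}_X}(\mathcal{X}, \mathcal{H}_{\mathcal{Y}})$.

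The key step is to factor the embeddings through the conditional mean embeddings using the tower property: since $k_{\mathcal{X}}(X,\cdot)$ is $X$-measurable it pulls out of the inner $Y$-expectation, giving $\mu_1 = \mathbb{E}_X[k_{\mathcal{X}}(X,\cdot) \otimes \mu_{\mathbb{P}_{|X}}]$ and likewise $\mu_2 = \mathbb{E}_X[k_{\mathcal{X}}(X,\cdot) \otimes \mu_{Q_{|X}}]$. Writing $\Delta \coloneqq \mu_{\mathbb{P}_{|}} - \mu_{Q_{|}} \in L^2_{\mathbb{P}_X}(\mathcal{X}, \mathcal{H}_{\mathcal{Y}})$, this yields $\mu_1 - \mu_2 = \mathbb{E}_X[k_{\mathcal{X}}(X,\cdot) \otimes \Delta(X)]$, and expanding the squared norm with the tensor-product inner product gives
\[
\|\mu_1 - \mu_2\|^2 = \mathbb{E}_{X,X'}\bigl[k_{\mathcal{X}}(X,X')\,\langle \Delta(X), \Delta(X')\rangle_{\mathcal{H}_{\mathcal{Y}}}\bigr].
\]
I would then match this with the right-hand side of the lemma via the general identity $\|T_{K_{\mathcal{X}}} f\|^2_{\mathcal{H}_{\mathcal{X},\mathcal{H}_{\mathcal{Y}}}} = \langle f, T_{K_{\mathcal{X}}} f\rangle_{L^2}$, which holds because $T_{K_{\mathcal{X}}}$ is adjoint to the inclusion $\mathcal{H}_{\mathcal{X},\mathcal{H}_{\mathcal{Y}}} \hookrightarrow L^2_{\mathbb{P}_X}(\mathcal{X},\mathcal{H}_{\mathcal{Y}})$; specializing to $K_{\mathcal{X}} = k_{\mathcal{X}} I_{\mathcal{H}_{\mathcal{Y}}}$ and $f = \Delta$ reproduces exactly the double integral above, establishing the identity.

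For the second assertion, I would show that $C_0$-universality of $k_{\mathcal{X}}$ forces injectivity of $T_{K_{\mathcal{X}}}$ up to $\mathbb{P}_X$-a.e. equality. Fixing an orthonormal basis $\{e_m\}$ of the separable space $\mathcal{H}_{\mathcal{Y}}$ and setting $\Delta_m(x) \coloneqq \langle \Delta(x), e_m\rangle_{\mathcal{H}_{\mathcal{Y}}}$, the double integral splits as $\sum_m \iint k_{\mathcal{X}}(x,x')\Delta_m(x)\Delta_m(x')\,d\mathbb{P}_X(x)\,d\mathbb{P}_X(x')$, a sum of nonnegative scalar terms. Hence $\operatorname{ACMMD} = 0$ forces $\bigl\|\int k_{\mathcal{X}}(\cdot,x)\Delta_m(x)\,d\mathbb{P}_X(x)\bigr\|^2_{\mathcal{H}_{\mathcal{X}}} = 0$ for every $m$. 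Viewing $d\nu_m \coloneqq \Delta_m\,d\mathbb{P}_X$ as a finite signed measure (finite since $\Delta_m \in L^2 \subset L^1$), this says its mean embedding vanishes; because a $C_0$-universal kernel is characteristic to all finite signed Borel measures, $\nu_m = 0$, so $\Delta_m = 0$ $\mathbb{P}_X$-a.e. Taking the countable union of null sets gives $\Delta = 0$ $\mathbb{P}_X$-a.e., i.e. $\mu_{\mathbb{P}_{|x}} = \mu_{Q_{|x}}$ a.e.; the converse is immediate, and universality of $k_{\mathcal{Y}}$ then upgrades equality of conditional mean embeddings to equality of the conditional distributions, recovering the main-text statement.

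The main obstacle is the functional-analytic bookkeeping in the first claim: rigorously justifying the factorization of $\mu_1, \mu_2$ through the conditional mean embeddings and verifying $\Delta \in L^2_{\mathbb{P}_X}(\mathcal{X}, \mathcal{H}_{\mathcal{Y}})$, which is precisely where the two integrability hypotheses enter, together with the adjoint relation defining $T_{K_{\mathcal{X}}}$ on the vector-valued RKHS. For the second claim, the delicate point is invoking the exact equivalence between $C_0$-universality and being characteristic to finite signed measures (together with the local compactness and second countability of $\mathcal{X}$), and reducing the vector-valued injectivity to countably many scalar problems via the orthonormal basis of $\mathcal{H}_{\mathcal{Y}}$.
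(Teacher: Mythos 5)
Your proposal is correct, and it reaches both conclusions by a route that differs from the paper's in two substantive ways. For the norm identity, the paper proves a stronger, element-level statement: it invokes the ``currying'' isometric isomorphism $\phi:\mathcal H_{\mathcal X}\otimes\mathcal H_{\mathcal Y}\to\mathcal H_{\mathcal X,\mathcal H_{\mathcal Y}}$ of \citet[Example~6]{carmeli2010vector}, shows $\phi$ commutes with Bochner integration, and identifies $\phi(\mu_{\mathbb P_X\otimes\mathbb P_{|}}-\mu_{\mathbb P_X\otimes Q_{|}})$ with $T_{K_{\mathcal X}}(\mu_{\mathbb P_{|}}-\mu_{Q_{|}})$ as elements of the vector-valued RKHS, the norm identity then following because $\phi$ is unitary. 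You instead stay at the level of squared norms: you expand $\|\mu_1-\mu_2\|^2$ into the double integral $\mathbb E_{X,X'}[k_{\mathcal X}(X,X')\langle\Delta(X),\Delta(X')\rangle]$ and match it to $\|T_{K_{\mathcal X}}\Delta\|^2$ via the adjoint relation between $T_{K_{\mathcal X}}$ and the inclusion of $\mathcal H_{\mathcal X,\mathcal H_{\mathcal Y}}$ into $L^2_{\mathbb P_X}$. This is lighter on machinery but yields only the scalar identity, whereas the paper's isometry argument gives the identification of the embeddings themselves. For the vanishing characterization, the paper cites \citet[Theorem~2]{carmeli2010vector} to get that $K_{\mathcal X}=k_{\mathcal X}I_{\mathcal H_{\mathcal Y}}$ is a $C_0$-universal operator-valued kernel and hence $T_{K_{\mathcal X}}$ is injective; you reprove this special case by hand, decomposing $\Delta$ along an orthonormal basis of $\mathcal H_{\mathcal Y}$ and reducing to the scalar fact that a $C_0$-universal kernel embeds finite signed measures injectively. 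That reduction is sound (the termwise nonnegativity, the finiteness of the signed measures $\nu_m$, and the countable union of null sets all check out, given separability of $\mathcal H_{\mathcal Y}$, which holds here since $\mathcal Y$ is countable), and it makes the argument self-contained at the cost of redoing a cited result. The only points you should make explicit in a full write-up are the Bochner-integrability justifications for pulling $k_{\mathcal X}(X,\cdot)$ out of the inner expectation and for interchanging the basis sum with the double integral (dominated convergence via Cauchy--Schwarz), and the precise hypotheses under which $T_{K_{\mathcal X}}$ is the adjoint of the inclusion; these are exactly the places where the stated integrability conditions are consumed, as you note.
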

\begin{proof}
    Let us introduce the notations used in this proof.
    Let $ (\Omega, \mathcal  F, \mathbb{ P }) $ be the sample space, $ X(\omega), Y(\omega), \tilde{Y}(\omega) $
    being random variables on $ \Omega $ corresponding to the input, target and the model.
    When clear, we will identify the measure $ \mathbb{ P } $ and the push-forwards
    $ Y_{\#}\mathbb{ P } $, $ \tilde{Y}_{\#}\mathbb{ P } $ and drop the dependence of 
    $Y, \tilde{Y} $ on $ \omega $. Given $ x \in \mathcal  X $, we write $ K_{x}  $
    the linear operator from $ \mathcal H_{\mathcal  Y}$ to  $ \mathcal  L(\mathcal  X, \mathcal  H_{\mathcal  Y}) $,
    the space of linear operators from $ \mathcal  X $ to $ \mathcal  H_{\mathcal  Y} $, such that
    $ (K_{x} f)(x') = K_{\mathcal  X}(x, x') f \in \mathcal  H_{\mathcal  Y} $ for all $ f \in \mathcal  H_{\mathcal  Y} $.
    When no confusion is possible, we may identify the notations $ k_{\mathcal  Y}(y, \cdot) $ and $ k_{y}  $.

    The existence of the conditional mean embeddings $ \mu_{\mathbb{ P }_{|}}$
    and $ \mu_{Q_{|}} $ is guaranteed by
    \citep[Definition~3.1]{park2020measure} under the integrability assumption
    $ \int_{  }^{  } k_{\mathcal  Y}(y, y) \text{d} \mathbb P(y) < +\infty $ and 
    $ \int_{  }^{  } k_{\mathcal  Y}(\tilde{y}, \tilde{y}) \text{d} \mathbb P(\tilde{y}) < +\infty $.
    The second integrability assumption $ \int_{  }^{  } k_{\mathcal  X}(x, x)k_{\mathcal  Y}(y, y)
    \text{d} (\mathbb P_X \otimes \mathbb{ P }_{|})(x, y) = \int_{  }^{ } k_{\mathcal  X} \otimes k_{\mathcal
    Y}((x, y), (x, y)) \text{d} (\mathbb P_X \otimes \mathbb{ P }_{|})(x, y) < +\infty $ guarantees the
    existence of the mean embedding $ \mu_{ \mathbb{  P}_{X} \otimes \mathbb{ P }_{|}} $, defined as:
    \begin{equation*} 
        \mint k_{\mathcal  X}(x, \cdot) \otimes k_{\mathcal  Y}(y, \cdot) 
            \text{d} (\mathbb P_X \otimes \mathbb{ P }_{|})(x, y) 
            \in \mathcal  H_{\mathcal  X} \otimes \mathcal  H_{\mathcal  Y}
    \end{equation*}
    by \citep[Lemma~3]{gretton2012kernel} (and respectively for $ Q_{|} $).
    Here, $ \mathcal  H_{\mathcal  X} \otimes \mathcal  H_{\mathcal  Y} $ is the tensor product
    Hilbert space of $ \mathcal  H_{\mathcal  X} $ and $ \mathcal  H_{\mathcal  Y} $, with 
    kernel $ k_{\mathcal  X} \otimes k_{\mathcal  Y} $.
    We actually prove a stronger form of the lemma, given by removing the norm
    from both hands of the equality and replacing it with a suitable isometric
    isomorphism $ \phi: \mathcal  H_{\mathcal  X} \otimes \mathcal  H_{\mathcal  Y} 
    \longmapsto \mathcal  H_{\mathcal  X, \mathcal  H_\mathcal  Y} $
    \begin{equation*} 
        \phi \left (\mint k_{\mathcal  X}(x, \cdot) \otimes k_{\mathcal  Y}(y, \cdot) 
            \text{d} (\mathbb{  P}_{X} \otimes \mathbb{ P }_{|})(x, y)\right) 
        = \mint K_{x} \mu_{\mathbb{  P}_{|}} \text{d} \mathbb P_X(x)
    \end{equation*}
    This isometric isomorphism is shown to exist in the ``Currying lemma'' of
    \citet[Example~6]{carmeli2010vector} regarding tensor product kernels (note
    that both $ \mathcal  X $ -- by assumption -- and $ \mathcal  Y $ are
    locally compact and second-countable). This lemma shows that the mapping:
    \begin{align}
        \phi: \mathcal  H_{\mathcal  X} \otimes \mathcal  H_{\mathcal  Y} &\longrightarrow \mathcal  F(\mathcal  X, \mathcal  H_{\mathcal  Y})  \\
        f \otimes g &\longmapsto \phi(f \otimes g ) = (x \in \mathcal  X  \longmapsto f(x) g \in \mathcal  H_{\mathcal Y})
    \end{align}
    is an isometric isomorphism between $ \mathcal  H_{\mathcal  X} \otimes
    \mathcal  H_{\mathcal  Y} $ and $ \mathcal  H_{\mathcal  X, \mathcal  H_\mathcal  Y} $.
    This lemma gives both a representation formula for elements of $ \mathcal
    H_{\mathcal  X, \mathcal  H_{\mathcal  Y}} $, and a way to formalize the currying operation,
    (e.g. the transformation of a function of two variables into a higher-order
    function of one variable and returning a function of one variable) on
    tensor-product spaces, since $ (f \otimes g)(x, y) = (\phi(f \otimes
    g)(x))(y) $. We refer to \citet[Example~6]{carmeli2010vector} for a proof.
    Proceeding with the proof of \cref{lemma:cgof_mmd}, when $ f $ and $ g $ are kernel
    functions $ k_{\mathcal  X}(x, \cdot) $ and $ k_{\mathcal  Y}(y, \cdot) $,
    the right-hand side of the equality can be related to $ K_{x} $ as
    \begin{align*}
        \phi(k_{\mathcal  X}(x, \cdot) \otimes k_{\mathcal  Y}(y, \cdot))(x')
                &= k_{\mathcal  X}(x, x') k_{\mathcal  Y}(y, \cdot) \\ 
                &= K_{x'}^{\star} K_{x} k_{\mathcal  Y}(y, \cdot) \\ 
                &= K_{x} k_{y}(x')
    \end{align*}
    where the second to last equality follows from the reproducing property of
    $ \mathcal  H_{\mathcal  X, \mathcal  H_{\mathcal  Y}} $. Since $ \phi $ is
    linear and unitary, it commutes with the mean embedding operation:
    \citep[Theorem~36]{dinculeanu2000vector}, yielding:
    \begin{align*}
        \phi \left (\mint (k_{\mathcal  X}(x, \cdot) \otimes k_{\mathcal  Y}(y, \cdot))  \text{d} (\mathbb P_{X} \otimes \mathbb{ P }_{|})(x, y) \right ) 
            &= \mint \phi(k_{\mathcal  X}(x, \cdot) \otimes k_{\mathcal  Y}(y, \cdot)) \text{d} (\mathbb P_{X} \otimes \mathbb{ P }_{|})(x, y) \\
            &= \mint K_{x}k_{y} \text{d} (\mathbb P_{X} \otimes \mathbb{ P }_{|})(x, y)
    \end{align*}
    To complete the proof, it remains to relate the right-hand side to the
    conditional mean embedding $ \mu_{\mathbb{ P }_{|}} $, using
    \begin{align*}
        \int_{}^{  } K_{x}k_{y} \text{d} (\mathbb P_{X} \otimes \mathbb{ P }_{|})(x, y) 
            &= \miint K_{x}k_{y} \text{d} \mathbb P_{X}(x) \text{d} \mathbb P_{|x}(y) \\
            &= \mint K_{x} \int_{  }^{  } k_{y} \text{d} \mathbb P_{|x}(y) \text{d} \mathbb P_{X}(x) \\
            &= \mint K_{x} \mu_{\mathbb{ P }_{|}}(x) \text{d} \mathbb P_X(x)
    \end{align*}
    as $ K_{x} $ is a bounded linear operator.
    We thus have that:
    \begin{equation*} 
    \phi \left (\mint k_{\mathcal  X}(x, \cdot) \otimes k_{\mathcal Y}(y, \cdot) \text{d} (\mathbb{ P }_{|X} \otimes \mathbb P_{|})(x, y) \right ) = \mint K_{x}
    \mu_{\mathbb{ P }_{|}} \text{d} \mathbb P_X(x)
    \end{equation*}
    Combining this with the analogue of this result holding for $ \mu_{Q_{|}} $ allows
    showing the stronger form of \cref{lemma:cgof_mmd}.
    Let us now prove the second part of the lemma. Assume
    $ \operatorname{ACMMD}(\mathbb{P}_{|}, Q_{|}) = 0 $, meaning
    \begin{equation*} 
    T_{K_{\mathcal  X}} (\mu_{\mathbb{ P }_{|}} - \mu_{Q_{|}}) = 0
    \end{equation*}
    By \citet[Theorem 2]{carmeli2010vector} $ K_{\mathcal X } $ is a $ C_0 $-universal
    operator-valued kernel, the operator $ T_{K_{\mathcal  X}} $ is injective.
    This implies that the conditional mean embeddings of $ \mathbb{ P }_{|x} $
    and $ Q_{|x} $ are equal $ \mathbb{ P }_{X} $--almost everywhere. By
    \citet[Theorem 5.2]{park2020measure} applied to the case where the
    marginals are equal, and since $ k_{\mathcal  X} \otimes k_{\mathcal  Y} $ is $ C_{0} $--universal,
    this implies that $ \mathbb{ P }_{|x} = Q_{|x} $, $ \mathbb{ P }_{X}
    $--almost everywhere, and in summary, $ \operatorname{ACMMD}(\mathbb{ P }_{|}, Q_{|}) = 0 $
    implies $ \mathbb{ P }_{|x} = Q_{|x} $, $ \mathbb{ P }_{X} $--almost
    everywhere. To prove the reverse direction, assume that $ \mathbb{ P }_{|x}
    = Q_{|x} $, $ \mathbb{ P }_{X} $--almost everywhere Since \citet[Theorem
    5.2]{park2020measure} also prove the reverse direction of the statement
    relied upon in the previous argument, we have that conversely
    $ \mu_{\mathbb{ P }_{|}}(x) = \mu_{Q_{|}}(x) $, $ \mathbb{ P }_X $--almost everywhere.
    By linearity of $ T_{K_{\mathcal  X}} $, we thus have that
    $ T_{K_{\mathcal  X}} (\mu_{\mathbb{ P }_{|}} - \mu_{Q_{|}}) = 0 $, and
    therefore $ \operatorname{ACMMD}(\mathbb{ P }_{|}, Q_{|}) = 0 $.
\end{proof}

\section{Asymptotic distribution of \texorpdfstring{$ \widehat{\operatorname{ACMMD}}\vphantom{a}^2 $}{ACMMD² estimates}}
\label{app-sec:cgof_mmd_estimator_asymptotic_distribution}
As discussed in the main text, it is possible to characterize the asymptotic distribution of $  N
\widehat{ \operatorname{ACMMD} }\vphantom{a}^2   $. When $ \mathbb{ P }_{|} = Q_{|} $,
and $ \sqrt {N} (\widehat{ \operatorname{ACMMD} }\vphantom{a}^2 - \operatorname{ACMMD}^2) $ when $ \mathbb{ P }_{|} \neq Q_{|} $. This characterization is given in the next lemma.
\begin{lemma}\label{lemma:cgof_mmd_estimator_asymptotic_distribution}
    Assume that the integrability assumptions of \cref{lemma:cgof_mmd} hold, and
    that $ \mathbb{E}_{ Z_1, Z_2 } h(Z_1, Z_2)^2 < +\infty $, and that 
    \begin{itemize}
        \item if $ \mathbb{ P }_{|x} = Q_{|x} $ $ \mathbb{ P }(X) $--a.s, then
            $ \mathbb{ E } \lbrack \widehat{\operatorname{ACMMD}} \rbrack = 0 $ and 
            \begin{equation*} 
                N \widehat{ \operatorname{ACMMD} }\vphantom{a}^2 \xrightarrow{ d } \sum\limits_{ j=1 }^{ \infty } \lambda_j (\chi_{1j}^2 - 1)
            \end{equation*}
            where $ \{ \chi^2_{1j} \}_{j=1}^{\infty}  $ are independent random $ \chi_1^2 $ variables, and 
            $ \lambda_j $ are the eigenvalues of the operator  defined as:
            \begin{equation*} 
            \phi  \longmapsto \int_{  }^{  } h(z, \cdot) \phi(z) \text{d} \mathbb P(z)
            \end{equation*}
        \item Assume moreover that $ k_{\mathcal  X} $ and $ k_{\mathcal  Y} $ are $ C_0 $-universal kernels, and that
            $ \sigma^2_h = 4 \mathbb{ V }_{Z_2} \lbrack \mathbb{ E }_{Z_1}h(Z_1, Z_2) \rbrack > 0  $. Then
            \begin{equation*} 
                \sqrt {N}  ( \widehat{ \operatorname{ACMMD} }\vphantom{a}^2 - \operatorname{ACMMD}\vphantom{a}^2  \rbrack ) \xrightarrow{ d } \mathcal{ N }(0, \sigma^2_{h})
            \end{equation*}
    \end{itemize}
\end{lemma}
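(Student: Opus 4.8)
The plan is to recognize $\widehat{\operatorname{ACMMD}}^2$ as a second-order U-statistic with symmetric kernel $h$ whose expectation equals $\operatorname{ACMMD}^2$ by \cref{lemma:cgof_mmd_double_expectation}, and then to invoke the classical asymptotic theory for U-statistics \citep[Chapter 5]{serfling2009approximation}. The behaviour of such a U-statistic is governed by its Hoeffding decomposition: writing $\theta = \operatorname{ACMMD}^2$, $h_1(z) = \mathbb{E}_{Z_2}[h(z, Z_2)] - \theta$, and $\sigma_1^2 = \mathbb{V}[h_1(Z_1)]$, the two regimes in the statement correspond exactly to the degenerate case $\sigma_1^2 = 0$ and the non-degenerate case $\sigma_1^2 > 0$ (note $\sigma_h^2 = 4\sigma_1^2$). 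The second-moment hypothesis $\mathbb{E}[h(Z_1, Z_2)^2] < \infty$ is what licenses both limit theorems.

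For the first (degenerate) case I would first show that under $H_0$ the kernel $h$ is completely degenerate, i.e. $\mathbb{E}_{Z_2}[h(z, Z_2)] = 0$ for $\mathbb{P}$-almost every $z$. This is the key algebraic step: fixing $z_1 = (x_1, y_1, \tilde{y}_1)$ and taking the expectation of $g$ conditionally on $X_2$, the hypothesis $\mathbb{P}_{|x} = Q_{|x}$ makes $Y_2$ and $\tilde{Y}_2$ conditionally identically distributed given $X_2$, so the four terms defining $g$ cancel in pairs (namely $\mathbb{E}[k_{\mathcal{Y}}(\tilde{y}_1, \tilde{Y}_2)] = \mathbb{E}[k_{\mathcal{Y}}(\tilde{y}_1, Y_2)]$ and likewise for the $y_1$ terms), whence $\mathbb{E}_{Z_2}[h(z_1, Z_2)] = 0$. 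In particular $\theta = 0$, giving the claimed $\mathbb{E}[\widehat{\operatorname{ACMMD}}^2] = 0$ by unbiasedness (\cref{lemma:cgof_mmd_estimator}), and $h_1 \equiv 0$, which for a degree-two kernel is full degeneracy. With degeneracy established, I would apply the standard spectral result for degenerate U-statistics \citep[Section 5.5.2]{serfling2009approximation}: expanding $h$ in the eigenbasis of the self-adjoint integral operator $T_h: \phi \mapsto \int h(z, \cdot)\phi(z)\,\mathrm{d}\mathbb{P}(z)$ on $L^2(\mathbb{P})$ yields $N\widehat{\operatorname{ACMMD}}^2 \xrightarrow{d} \sum_j \lambda_j(\chi_{1j}^2 - 1)$, where the $\lambda_j$ are its eigenvalues and the centering $-1$ reflects $\mathbb{E}[\chi_1^2] = 1$ together with the vanishing mean of the limit.

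For the second (non-degenerate) case, universality of $k_{\mathcal{X}}$ and $k_{\mathcal{Y}}$ ensures via \cref{lemma:cgof_mmd} that $\operatorname{ACMMD}^2 > 0$ whenever $\mathbb{P}_{|} \neq Q_{|}$, so the degenerate analysis no longer applies; the extra hypothesis $\sigma_h^2 > 0$ forces $\sigma_1^2 > 0$. Here I would invoke the central limit theorem for non-degenerate U-statistics \citep[Theorem 5.5.1 A]{serfling2009approximation}, whose argument projects $\widehat{\operatorname{ACMMD}}^2 - \theta$ onto its leading H\'ajek projection $\tfrac{2}{N}\sum_i h_1(Z_i)$, shows the remainder is $o_P(N^{-1/2})$ using $\mathbb{E}[h^2] < \infty$, and applies the ordinary CLT to the resulting i.i.d. sum. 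Since the H\'ajek projection has variance $4\sigma_1^2/N = \sigma_h^2/N$, this gives $\sqrt{N}(\widehat{\operatorname{ACMMD}}^2 - \operatorname{ACMMD}^2) \xrightarrow{d} \mathcal{N}(0, \sigma_h^2)$.

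The main obstacle is the degenerate case: beyond the cancellation identity, one must verify the analytic preconditions for the spectral expansion, namely that $T_h$ is a well-defined, compact, self-adjoint, trace-class operator on $L^2(\mathbb{P})$ so that the eigenvalues $\lambda_j$ exist, the series $\sum_j \lambda_j(\chi_{1j}^2 - 1)$ converges almost surely, and the stated convergence in distribution holds. This is where the integrability assumptions of \cref{lemma:cgof_mmd} together with $\mathbb{E}[h^2] < \infty$ are used in full; the non-degenerate case, by contrast, is a routine H\'ajek-projection argument.
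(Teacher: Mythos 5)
Your proposal is correct and follows essentially the same route as the paper: both reduce to Serfling's classical U-statistic theory, establish complete degeneracy under $H_0$ by showing $\mathbb{E}_{Z_2}[h(z,Z_2)]=0$ (the paper does this via the RKHS inner-product representation of $h$ and the vanishing of $\mathbb{E}[k((X_2,Y_2),\cdot)-k((X_2,\tilde Y_2),\cdot)]$, which is exactly your termwise cancellation of the four $k_{\mathcal Y}$ terms given $X_2$), and then cite the degenerate spectral limit and the non-degenerate CLT respectively. The only difference is cosmetic, and your write-up is if anything more explicit about the Hoeffding-decomposition bookkeeping than the paper's.
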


\begin{proof}
    Since $ \mathbb{ E }_{Z_1, Z_2} h(Z_1, Z_2)^2 < +\infty $ we have that $
    \mathbb{ V }_{Z_1, Z_2}\mathbb{ E }_{Z_1, Z_2} h(Z_1, Z_2)^2 < +\infty $. Let us define, as in 
    \citet[Section 5.1.5]{serfling2009approximation}, the function
    $ h(z) = \mathbb{ E }_{Z_2} h(z, Z_2) $, and define $ \zeta \coloneqq
    \mathbb{ V }_{z} h $. For the first point, we will show that if $ \mathbb{
    P }_{|} =  Q_{|} $, $ \mathbb{ P }$--a.s, then $ \zeta = 0 $, and the result
    will follow from \citet[Setion 5.5.2]{serfling2009approximation}. Indeed,
    noting $ k = k_{\mathcal  X} \otimes k_{\mathcal  Y} $,
    \begin{align*}
        h(z) &= \mathbb{ E }_{Z_2} h(z, Z_2) \\
             &= \mathbb{ E }_{Z_2} \left \langle k((x, y), \cdot) - k((x, y'), \cdot), k((X_2, Y_2), \cdot) - k(X_2, \tilde{Y}_2) \right \rangle  \\
             &= \left \langle k((x, y), \cdot), - k((x, y'), \cdot), \mathbb{ E }_{z_2}  \left \lbrack  k((X_2, Y_2), \cdot) - k((X_2, \tilde{Y}_2), \cdot)\right \rbrack  \right \rangle  \\
    \end{align*}
    Where we exchanged the order of integration and inner product, which is
    possible since $ h  \longmapsto \left \langle k((x, y), \cdot) - k((x,
    \tilde{y}), \cdot), h \right \rangle  $ is a bounded linear functional for all
    $ (x, y, \tilde{y}) $. Now,
     \begin{equation*} 
        \mathbb{ E }_{z_2} k((X_2, Y_2), \cdot) - k((X_2, \tilde{Y}_2), \cdot) 
        = \mathbb{ E }_{\mathbb{ P }_X} \left \lbrack 
            \mathbb{ E }_{\mathbb{ P }|} k((X_2, Y_2), \cdot ) - \mathbb{ E }_{Q_{|}} k((X_2, \tilde{Y}_2), \cdot)
          \right \rbrack  
        = 0
    \end{equation*}
    since $ \mathbb{ P }_{|x} = Q_{|x} $ $ \mathbb{ P }_X $--a.s.
    Thus, $ h(z) $ is a constant function, and $ \zeta = 0 $.
    The second case follows by assumption from \citet[Section
    5.1.1]{serfling2009approximation}.
\end{proof}

\subsection{Proof of Lemma \ref{lemma:cgof_mmd_double_expectation}}\label{sec:proof_cgof_mmd_double_expectation}
    Let $ \mathcal  H_{\mathcal  X \times \mathcal  Y} \coloneqq \mathcal  H_{\mathcal  X} \otimes \mathcal  H_{\mathcal  Y} $ be the tensor-product RKHS of functions from
    $ \mathcal  X \times \mathcal  Y $ with kernel $ k_{\mathcal  X} \otimes k_{\mathcal  Y} $.
    The result can be obtained by applying a ``coupling'' argument, and starting from the following object:
    \begin{equation} \label{eq:coupling}
    \begin{aligned}
        \mu_{\mathbb{ P }_{X}\otimes \mathbb{ P }_{|} - \mathbb{ P }_{X}\otimes \mathbb{ Q }_{|}} 
            &\coloneqq \int_{  }^{  } \left ( k((x(\omega), y(\omega)), \cdot) - k((x(\omega), \tilde{y}(\omega)), \cdot)\right ) \text{d} \mathbb P(\omega) \\ 
            &= \mathbb{E}_{ x, y, \tilde{y} }\left [ k((x, y), \cdot) - k((x, \tilde{y}), \cdot) \right ] 
    \end{aligned}
    \end{equation}
    We first show that $ \mu_{\mathbb{ P }_{X}\otimes \mathbb{ P }_{|} - \mathbb{ P }_{X}\otimes \mathbb{ Q }_{|}} $
    is a well-defined element of $ \mathcal  H_{\mathcal  X \times \mathcal  Y} $.
    Indeed, the following operator
    \begin{equation*} 
        T:  f \in \mathcal H \longmapsto \mathbb{ E  }_{z} f((x, y)) - f((x, \tilde{y}))
    \end{equation*}
    satisfies 
    \begin{align*}
        |Tf| &\leq \mathbb{E}_{  }\left [ |f(x, y)| + |f(x, \tilde{y})| \right ]  \\
             &\leq\left \|f \right \|_{\mathcal  H_{\mathcal  X \times \mathcal  Y}}( \mathbb{ E }\sqrt { k((x, y), (x, y))}  \\
             & \quad \quad + \mathbb{ E }\sqrt { k((x, \tilde{y}), (x, \tilde{y}))} ) \\
    \end{align*}
    and is bounded thanks to the integrability assumptions of \cref{lemma:cgof_mmd}. Applying the same argument as \citep[Lemma~3]{gretton2012kernel},
    it follows that the object in \cref{eq:coupling} is well-defined and belongs to $ \mathcal  H_{\mathcal  X \times \mathcal  Y} $.
    Furthermore, by linearity of integration, we have that:
    \begin{align*}
    &\int_{  }^{  } \left ( k((x(\omega), y(\omega)), \cdot) - k((x(\omega), \tilde{y}(\omega)), \cdot)\right ) \text{d} \mathbb P(\omega) \\ 
    &= \int_{  }^{  } k((x(\omega), y(\omega)), \cdot) \text{d} \mathbb P(\omega) - \int_{  }^{  } k((x(\omega'), \tilde{y}(\omega')), \cdot) \text{d} \mathbb P(\omega') \\
    &= \mu_{\mathbb{ P }_{X} \otimes \mathbb{ P }_{|}} - \mu_{\mathbb{ P }_{X} \otimes Q_{|}}
    \end{align*}
    To conclude, note that:
    \begin{align*}
    \operatorname{ACMMD}(\mathbb{ P }_{|}, Q_{|})^2  &= \left \| \mu_{\mathbb{ P }_{X} \otimes \mathbb{ P}_{|} - \mathbb{ P }_{X} \otimes Q_{|}} \right \|_{\mathcal  H_{\mathcal  X \times \mathcal  Y}}^2 \\ 
                                  &= \left \langle  \mu_{\mathbb{ P }_{X} \otimes \mathbb{ P}_{|} - \mathbb{ P }_{X} \otimes Q_{|}}, \mu_{\mathbb{ P }_{X} \otimes \mathbb{ P}_{|} - \mathbb{ P }_{X} \otimes Q_{|}} \right \rangle_{\mathcal  H_{\mathcal  X \times \mathcal  Y}} \\
                                  &= \left \langle \mathbb{ E }_{x, y, \tilde{y}} \left [ k((x, y), \cdot) - k((x, \tilde{y}), \cdot) \right ],  \mathbb{ E }_{x, y, \tilde{y}} \left [ k((x, y), \cdot) - k((x, \tilde{y}), \cdot) \right ] \right \rangle_{\mathcal  H_{\mathcal  X \times \mathcal  Y}} \\
                                  &= \mathbb{ E }_{x_1, y_1, \tilde{y}_1} \mathbb{ E }_{x_2, y_2, \tilde{y}_2} h((x_1, y_1, \tilde{y}_1), (x_2, y_2, \tilde{y}_2)) \\
    \end{align*}
    Where the last equality was obtained by exchanging the order of integration
    and dot product, possible thanks to the integrability assumptions of
    \cref{lemma:cgof_mmd}, by using the bilinearity of the inner product
    and the reproducing property of the kernel $ k $. The symmetry
    of $ h $ in $ (Z_1, Z_2) $ follows from the symmetry of $ k_{\mathcal  X} \otimes k_{\mathcal  Y} $.

\subsection{Proof of Lemma \ref{lemma:cgof_mmd_estimator}}\label{lemma:proof_cgof_mmd_estimator}
\begin{proof}
    The proof of the unbiasedness of $ \widehat{ \operatorname{ACMMD} }\vphantom{a}^2 $ follows by linearity of the expectation,
    and that each $h((X_{i}, Y_{i}, \tilde{Y}_{i}), (X_{j}, Y_{j}, \tilde{Y}_{j}))$ is an unbiased estimator
    of $ \operatorname{ACMMD}\vphantom{a}^2 (\mathbb{ P }_{|}, Q_{|}) $.
\end{proof}

\section{Type-I error control of the ACMMD test}\label{app-sec:type_I_error_control}
The goal of this section is to show that the ACMMD test is guaranteed to control its type-I error rate at level $ \alpha $.

\subsection{Quantile estimation and Decision Rule} \label{sec:decision_rule}
We first fully specify the way we compute our quantile estimate  $ \widehat{ q }_{1-\alpha} $.
Let $b_{\alpha} \coloneqq \lceil { (1 - \alpha) (B+1) } \rceil $.
Given $ B $ bootstrap samples $ \{ \widetilde{ \mathrm{ACMMD} }\vphantom{a}^{2}_{b} \}_{b=1}^{B} $
and an $ \widehat{ \operatorname{ACMMD} }\vphantom{a}^{2} $ estimate, we order them in increasing order
in a sequence of size $ B+1 $, with ties broken arbitrarily.
Let
$ m = \min_{  } \{ b \in [\![1, B+1 ]\!] \, | \, \widetilde{ \mathrm{ACMMD} }\vphantom{a}^{2}_{b} = \widetilde{ \mathrm{ACMMD} }\vphantom{a}^2_{b_{\alpha}} \} $, and
$ M = \max_{  } \{ b \in [\![1, B+1 ]\!]  \, | \, \widetilde{ \mathrm{ACMMD} }\vphantom{a}^{2}_{b} = \widetilde{ \mathrm{ACMMD} }\vphantom{a}^{2}_{b_{\alpha}} \} $.
We set $ \widehat{q}_{1-\alpha} $
to be the $ (m - 1) $-th element
with probability $ (b_{\alpha} - (1 - \alpha) (B+1)) / (M - m + 1)$
(with the convention that the $ 0 $-th element is $ -\infty $),
and the $ b_{\alpha} $-th element otherwise
The decision rule is then to reject the null hypothesis if $ \widehat{ \mathrm{ACMMD} }\vphantom{a}^{2} >  q_{1-\alpha} $.

\subsection{Wild-bootstrap and permutation-based approaches are equivalent in the ACMMD test}\label{sec:proof-wild-bootstrap-acmmd}
To show that the ACMMD test is guaranteed to control its type-I error rate at level $ \alpha $,
we show that the use of a wild bootstrap procedure in the ACMMD test can be cast as a
computationally efficient way to approximate the quantiles of the random variable $ \widehat{ \operatorname{ACMMD} }\vphantom{a}^2 $
when $ \mathbb{ P }_{|x} = Q_{|x} $ $ \mathbb{ P }_X $--a.e.
\begin{lemma}\label{lemma:wild-bootstrap}
    Let  $ \{ Z_{i} \}_{i=1}^{N}  $ be i.i.d realizations of $ \mathbb{ P }_X
    \otimes \mathbb{ P }_{|} \otimes Q_{|} $, and let $ \{ W_{i}^{b} \}^{b=1 \dots B}_{i=1
    \dots N}$ be i.i.d. Rademacher random variables independent
of the data. 
    Given a function $ \sigma: [\![ 1, N]\!]  \longmapsto \{-1, 1\}  $,
    define $\{Z^{\sigma}_{i} \}_{i=1}^{N} \coloneqq \{ X_{i}, Y_{i}^{\sigma}, \tilde{Y}^{\sigma}_{i} \}_{i=1}^{N}  $,
    where $ (Y^{\sigma}_{i}, \tilde{Y}^{\sigma}_{i}) = (Y_{i}, \tilde{Y}_{i}) $ if $ \sigma(i) = 1$, and $ (\tilde{Y}_{i}, Y_{i}) $ otherwise.
    Then we have:
    \begin{equation*} 
    \begin{aligned}
        \widetilde{\operatorname{ACMMD}}\vphantom{a}^{2}_b 
        = \frac{2}{N(N-1)} \sum\limits_{ \substack{ i, j = 1 \\ i < j } }^{ N } 
            h(Z_i^{\sigma_b}, Z_j^{\sigma_b})
        \coloneqq \widehat{ \operatorname{ACMMD} }\vphantom{a}^{2}_{\sigma_b}
    \end{aligned}
    \end{equation*}
    for $ \sigma_b(i) \coloneqq W^b_i$.
\end{lemma}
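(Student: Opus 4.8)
The plan is to prove the identity at the level of each summand and then sum. Since the sign flips $\sigma_b$ act only on the $(Y_i, \tilde Y_i)$ coordinates and leave every $X_i$ untouched, the factor $k_{\mathcal X}(X_i, X_j)$ appearing in $h$ is unaffected by the reassignment $Z_i \mapsto Z_i^{\sigma}$. It therefore suffices to establish the corresponding pointwise identity for $g$: for every pair $(i,j)$, with $\sigma = \sigma_b$ and $\sigma_b(i) = W_i^b$,
\[
W_i^b W_j^b\, g\big((Y_i, \tilde Y_i), (Y_j, \tilde Y_j)\big) = g\big((Y_i^{\sigma}, \tilde Y_i^{\sigma}), (Y_j^{\sigma}, \tilde Y_j^{\sigma})\big).
\]

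The key observation I would exploit is that $g$ admits a compact inner-product representation in $\mathcal H_{\mathcal Y}$. Introducing the difference embedding $\delta_i \coloneqq k_{\mathcal Y}(Y_i, \cdot) - k_{\mathcal Y}(\tilde Y_i, \cdot) \in \mathcal H_{\mathcal Y}$, one expands $\langle \delta_i, \delta_j \rangle_{\mathcal H_{\mathcal Y}}$ via the reproducing property into its four cross terms and checks that these reproduce exactly the four-term expression defining $g$, so that $g\big((Y_i, \tilde Y_i), (Y_j, \tilde Y_j)\big) = \langle \delta_i, \delta_j \rangle_{\mathcal H_{\mathcal Y}}$. With this representation, the effect of a sign flip is transparent: when $\sigma(i) = 1$ the pair is preserved and $\delta_i^{\sigma} = \delta_i$, whereas when $\sigma(i) = -1$ it is swapped to $(\tilde Y_i, Y_i)$, giving $\delta_i^{\sigma} = k_{\mathcal Y}(\tilde Y_i, \cdot) - k_{\mathcal Y}(Y_i, \cdot) = -\delta_i$; in both cases $\delta_i^{\sigma} = W_i^b \delta_i$. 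Bilinearity of the inner product then yields $\langle \delta_i^{\sigma}, \delta_j^{\sigma} \rangle = W_i^b W_j^b \langle \delta_i, \delta_j \rangle$, which is precisely the pointwise identity.

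To finish, I would multiply through by the invariant factor $k_{\mathcal X}(X_i, X_j)$ to recover $W_i^b W_j^b h(Z_i, Z_j) = h(Z_i^{\sigma_b}, Z_j^{\sigma_b})$, sum over $1 \le i < j \le N$, and scale by $2/(N(N-1))$, obtaining $\widetilde{\operatorname{ACMMD}}^2_b = \widehat{\operatorname{ACMMD}}^2_{\sigma_b}$.

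I expect the only genuine obstacle to be the bookkeeping in verifying the embedding representation, i.e. matching the signs of the four cross terms of $\langle \delta_i, \delta_j \rangle$ against those in the definition of $g$. Once this is in place, the crucial sign-flip structure of $g$ (invariant under swapping both slots, antisymmetric under swapping a single slot) is manifest and the remainder is routine. An alternative that avoids the embedding entirely is a direct case analysis over the four values of $(W_i^b, W_j^b) \in \{\pm 1\}^2$, but the inner-product viewpoint makes the antisymmetry explicit and sidesteps the enumeration.
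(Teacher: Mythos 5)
Your proof is correct. It differs from the paper's argument in how the pointwise identity $W_i^b W_j^b\, g((Y_i,\tilde Y_i),(Y_j,\tilde Y_j)) = g((Y_i^{\sigma},\tilde Y_i^{\sigma}),(Y_j^{\sigma},\tilde Y_j^{\sigma}))$ is established: the paper fixes $i=1$, $j=2$ and enumerates the sign combinations $(W_1,W_2)\in\{\pm1\}^2$, verifying the four-term kernel expression case by case, whereas you factor $g$ through the RKHS as $g = \langle \delta_i,\delta_j\rangle_{\mathcal H_{\mathcal Y}}$ with $\delta_i = k_{\mathcal Y}(Y_i,\cdot)-k_{\mathcal Y}(\tilde Y_i,\cdot)$, observe that the swap acts as $\delta_i^{\sigma} = W_i^b\,\delta_i$, and conclude by bilinearity. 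Both share the initial reduction (the $k_{\mathcal X}(X_i,X_j)$ factor is untouched by $\sigma$, so only $g$ matters) and then sum over pairs. Your route is arguably preferable: it makes the antisymmetry of $g$ under a single-slot swap structurally evident rather than a coincidence of sign bookkeeping, it is consistent with the coupling representation the paper already uses in the proof of Lemma \ref{lemma:cgof_mmd_double_expectation}, and it avoids the term-by-term transcription where the paper's own case $W_1=W_2=-1$ in fact contains a typographical slip in the displayed expansion. The case analysis, for its part, requires no appeal to the reproducing property and is fully elementary; but since $k_{\mathcal Y}$ is positive definite by assumption, the inner-product representation is always available and nothing is lost.
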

The $ W^{b}_i $ should be understood as elements of a random swap $ \sigma_b $, which for each $ i $,
swaps $ Y_{i} $ and $ \tilde{Y}_{i} $ with probability $ 1/2 $.
\begin{proof}
    Without loss of generality, we fix $ i = 1 $ and $ j = 2 $, and fix $ b $, dropping the $ b $ index.
    Note that $ h(Z_1, Z_2)$ and $ h(Z^{\sigma}_1, Z^{\sigma}_2) $
    share the same $ k_{\mathcal  X}(X_1, X_2) $. The only differing term is
    \begin{equation*}  
    \begin{aligned}
        g((Y_1, \tilde{Y}_1), (Y_2, \tilde{Y}_2)) 
        \coloneqq
        k_{\mathcal  Y}(Y_1, Y_2)) + k_{\mathcal  Y}(\tilde{Y}_1, \tilde{Y}_2) 
            - k_{\mathcal  Y}(Y_1, \tilde{Y}_2) - k_{\mathcal  Y}(\tilde{Y}^1, Y^2)
    \end{aligned}
    \end{equation*}
    and we only need to show that
    $ W_1W_2g((Y_1, \tilde{Y}_1), (Y_2, \tilde{Y}_2)) = g((Y_1^{\sigma}, \tilde{Y}^{\sigma}_1), (Y_2^{\sigma}, \tilde{Y}^{\sigma}_2)) $.
    \paragraph{Case $ W^1 = W^2 = 1 $}
    In that case,
    $ Z_1 = Z^{\sigma}_1 $ and $ Z_2 = Z^{\sigma}_2 $, and $ W_1W_2h(Z_1, Z_2) = h(Z_1, Z_2)= h(Z^{\sigma}_1, Z_2^{\sigma}) $
    by definition of $ \sigma $.
    \paragraph{Case $ W_1 = W_2 = -1 $}
    In that case, we have:
    \begin{equation*} 
        g((Y_1^{\sigma}, \tilde{Y}^{\sigma}_1), (Y_2^{\sigma}, \tilde{Y}^{\sigma}_2)) = 
        k(\tilde{Y}_1, \tilde{Y}_2) + k(Y_1, Y_2) k(\tilde{Y}_1, Y_2) - k(Y_1, \tilde{Y}_2))  = h(Z_1, Z_2)
    \end{equation*}
    implying again $ W_1W_2h(Z_1, Z_2) = h(Z_1, Z_2) = h(Z^{\sigma}_1, Z^{\sigma}_2) $.

    \paragraph{Case $ W_1 = 1 $ and $ W_2 = -1 $}
    In that case, we have:
    \begin{align*}
        h(Z^{\sigma}_1, Z^{\sigma}_2) 
        &= k((X_1, Y_1), (X_2, \tilde{Y}_2)) + k((X_1, \tilde{Y}_1), (X_2, Y_2)) 
            - k((X_1, Y_1), (X_2, Y_2)) - k((X_1, \tilde{Y}_1), (X_2, \tilde{Y}_2)) \\ 
        & =-h(Z_1, Z_2)
    \end{align*}
    meaning again $ W_1W_2h(Z_1, Z_2) = -h(Z_1, Z_2) = h(Z^{\sigma}_1, Z^{\sigma}_2) $,
    and the last case is proved similarly.
\end{proof}

\subsection{Level of the ACMMD test} \label{sec:level_acmmd}
We now show that the ACMMD test has the desired type-I error rate.
\begin{lemma}\label{lemma:level_acmmd}
    Assume that $ \mathbb{ P }_{|x} = Q_{|x} $ $ \mathbb{ P }_X $--a.s.
    Then  the probability that the ACMMD test rejects the null hypothesis is exactly $ \alpha $.
\end{lemma}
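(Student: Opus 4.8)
The plan is to recognize the ACMMD test as a randomization (sign-flip) test and to invoke the classical exact-level guarantee for such tests, so that the only real work is verifying a group-invariance property of the data under the null. First I would recall from \cref{lemma:wild-bootstrap} that each wild-bootstrap sample equals $\widehat{\operatorname{ACMMD}}^2_{\sigma_b}$, i.e. the estimator recomputed on the data with $Y_i$ and $\tilde Y_i$ swapped according to $\sigma_b(i)=W^b_i$, while the observed statistic is $\widehat{\operatorname{ACMMD}}^2=\widehat{\operatorname{ACMMD}}^2_{\sigma_0}$ for $\sigma_0\equiv 1$ (no swaps). Thus, writing $D=\{Z_i\}_{i=1}^N$ and letting $G=\{-1,1\}^N$ act on $D$ by coordinatewise swaps $D\mapsto D^\sigma$, the test compares a fixed functional $T(D^\sigma):=\widehat{\operatorname{ACMMD}}^2_{\sigma}$ evaluated across the orbit of $D$ under $G$.

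The core probabilistic step is to show that, under $H_0$, $D$ is $G$-invariant, namely $D^\sigma\overset{d}{=}D$ for every fixed $\sigma\in G$. This holds because the law $\mathbb{P}_X\otimes\mathbb{P}_{|}\otimes Q_{|}$ makes $(Y_i,\tilde Y_i)$ conditionally independent given $X_i$, with $Y_i\sim\mathbb{P}_{|X_i}$ and $\tilde Y_i\sim Q_{|X_i}$; when $\mathbb{P}_{|x}=Q_{|x}$ $\mathbb{P}_X$-a.e.\ these two conditionals coincide, so $(Y_i,\tilde Y_i)$ is conditionally exchangeable given $X_i$, and swapping any subset of coordinates leaves the joint law unchanged.

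Next I would upgrade this to the joint exchangeability of the test statistics. Letting $\tau$ be uniform on $G$ and independent of everything, $G$-invariance gives $(D,\sigma_0,\dots,\sigma_B)\overset{d}{=}(D^\tau,\sigma_0,\dots,\sigma_B)$, and since applying $\sigma_b$ to $D^\tau$ yields $D^{\tau\sigma_b}$ (with $\tau\sigma_b$ the coordinatewise product), evaluating $T$ on both sides gives
\[
\bigl(\widehat{\operatorname{ACMMD}}^2_{\sigma_0},\dots,\widehat{\operatorname{ACMMD}}^2_{\sigma_B}\bigr)\overset{d}{=}\bigl(\widehat{\operatorname{ACMMD}}^2_{\tau\sigma_0},\dots,\widehat{\operatorname{ACMMD}}^2_{\tau\sigma_B}\bigr).
\]
Because $\sigma_0\equiv 1$ and $\sigma_1,\dots,\sigma_B$ are i.i.d.\ uniform on $G$ independent of $\tau$, the labels $\tau\sigma_0,\dots,\tau\sigma_B$ are i.i.d.\ uniform on $G$, so the right-hand vector is a symmetric function of i.i.d.\ labels and hence exchangeable. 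This shows $(\widehat{\operatorname{ACMMD}}^2,\widetilde{\operatorname{ACMMD}}^2_1,\dots,\widetilde{\operatorname{ACMMD}}^2_B)$ is exchangeable under $H_0$.

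Finally I would conclude with the standard exact-level argument for randomization tests: for an exchangeable $(B+1)$-tuple the rank of the observed statistic is uniform up to ties, and the randomized quantile $\widehat q_{1-\alpha}$ defined in \cref{sec:decision_rule}, whose boundary tie-breaking is calibrated precisely for this purpose, yields $\mathbb{P}(\widehat{\operatorname{ACMMD}}^2>\widehat q_{1-\alpha})=\alpha$ exactly. I expect the main obstacle to be the bookkeeping in the exchangeability upgrade — the uniform-$\tau$ trick and checking that the randomized quantile makes the level \emph{exactly}, not merely at most, $\alpha$ in the presence of ties — since the underlying $G$-invariance is immediate once conditional exchangeability under the null is observed.
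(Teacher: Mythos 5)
Your proposal is correct and takes essentially the same route as the paper: invoke \cref{lemma:wild-bootstrap} to recast the wild bootstrap as recomputation of the statistic on sign-swapped data, observe that under $H_0$ the pairs $(Y_i,\tilde Y_i)$ are conditionally i.i.d.\ given $X_i$ so the $(B+1)$-tuple $(\widehat{\operatorname{ACMMD}}\vphantom{a}^2_{\sigma_b})_{b=0}^{B}$ is exchangeable, and then conclude by the rank argument with the randomized tie-breaking of \cref{sec:decision_rule}. Your uniform-$\tau$ upgrade makes explicit the exchangeability step that the paper asserts more tersely, and the only piece you defer --- the computation $\mathbb{P}[Q>b_\alpha]+\mathbb{P}[Q=b_\alpha]\,\mathbb{P}[\text{Reject}\mid Q=b_\alpha]=\alpha$ --- is exactly the bookkeeping the paper carries out.
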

The proof consists in 2 steps. First, we show that the decision rule is equivalent to a simpler one.
Then, we analyze the latter decision rule.
\paragraph{An equivalent decision rule}
This decision rule is equivalent
to the one rejecting $ H_0 $ if the position $ Q $ (with ties broken uniformly at random)
of  $ \widehat{ \mathrm{ACMMD} }\vphantom{a}^{2} $ in that sequence satisfies
$ Q > b_{\alpha} $, accepting it if   $ Q < b_{\alpha} $,
and rejecting it with probability $ b_{\alpha} - (1 - \alpha) (B + 1) $ if $ Q = b_{\alpha} $:
Indeed, 
$ Q > M \iff \widehat{ \mathrm{ACMMD} }\vphantom{a}^{2} > q_{1-\alpha} $ (we always reject),
$ Q < m \iff \widehat{ \mathrm{ACMMD} }\vphantom{a}^{2} \leq q_{1-\alpha}$ (we never reject), and
for both rules, when the random position $ Q$ is in  $ [\![m, M]\!]$, the null is rejected 
with probability $ (b_{\alpha} - \alpha (B+1)) / (M - m + 1) $.

\paragraph{Analysis of the decision rule}
We derive the type-I error of our decision rule by analyzing the equivalent, latter one. 
Our analysis follows a similar argument, in flavor, as \citet[Appendix C]{Domingo-Enrich2023-eu}.
Now, recall that from \cref{lemma:wild-bootstrap}, the wild bootstrap quantile estimation
are draws of $ \widehat{ \mathrm{ACMMD} } $ on swapped samples $ Z^{\sigma} $, e.g.
$ \{ X_i, Y_i^{\sigma}, \tilde{Y}^{\sigma}_i \}_{i=1}^{N}  $ parameterized by
$ \sigma: [\![ 1, N]\!]  \longmapsto \{-1, 1\}  $
where $ Y_i^{\sigma} = Y_i$ if $ \sigma(i) \coloneqq w_i $ and $ Y_i^{\sigma} = \tilde{Y}_i $ otherwise:
\begin{equation*} 
    (\widetilde{\operatorname{ACMMD}}\vphantom{a}^{2}_b)_{b=1}^{B} = (\widehat{ \operatorname{ACMMD} }\vphantom{a}^{2}_{\sigma_b})_{b=1}^{B}
\end{equation*}
using the notation of \cref{lemma:wild-bootstrap}.
Note that $ \sigma$ is a random swap operator such that $ \sigma(i) = 1 $ with probability $ 0.5 $, and
$ \sigma(i) = -1 $ with probability $ 0.5 $.
If $ \mathbb{ P }_{|x} = Q_{|x} $ a.e., then
since the $ B $ swap maps $ \sigma_1, \dots, \sigma_B  $ are i.i.d.
let us note $ \widehat{ \mathrm{ACMMD} }\vphantom{a}^{2}_{\sigma_0} = \widehat{ \mathrm{ACMMD} }\vphantom{a}^{2} $, e.g. $ \sigma_0(i) = 1 $.
Then the random sequence $ (\widehat{ \mathrm{ACMMD} }\vphantom{a}^{2}_{\sigma_b})_{b=0}^{B}  $ is exchangeable.
Since Q is the position of $ \widehat{\operatorname{ACMMD}}\vphantom{a}^{2} $
within that sorted sequence, and that all positions are equally likely under exchangeability,
we have:
\begin{align*}
& \mathbb{P}\left[Q < m \right]=1 /(B+1) \\
& \mathbb{P}\left[Q>b_\alpha\right]=\left(B+1-b_\alpha\right) /(B+1) \\
& \mathbb{P}\left[Q<b_\alpha\right]=\left(b_\alpha-1\right) /(B+1)
\end{align*}

Noting $ \Delta((X^{i}, Y^{i}, \tilde{Y}^{i})_{i=1}^{N}) $ the event that the null hypothesis is rejected, we have:
\begin{align*}
    \mathbb{P}\left[\Delta((X^{i}, Y^{i}, \tilde{Y}^{i})_{i=1}^{N})  \right] & =\mathbb{P}\left[Q>b_\alpha\right]+\mathbb{P}\left[Q=b_\alpha\right] \mathbb{ P }[\text{Reject} | Q = b_{\alpha}] \\
& =\left(B+1-b_\alpha\right) /(B+1)+\left(b_\alpha-(1-\alpha)(B+1)\right) /(B+1)=\alpha,
\end{align*}

thus showing that the ACMMD test has the desired type-I error rate. \qed

\section{Proofs related to ACMMD--Rel}

\subsection{Differences between the SKCE U-statistics and the ACMMD U-statistic}\label{app-sec:skce-vs-acmmd}

We recall the definition of the SKCE U-statistics estimator from \citep[Lemma 2]{widmann2022calibration}:
\begin{equation} \label{eq:SKCE-u-statistics-estimator}
    \widehat{\operatorname{SKCE}} = \frac{2}{N(N-1)} \sum\limits_{1 \leq i < j \leq N}G((Q_{|X_i}, Y_i), (Q_{|X_j}, Y_j))
\end{equation}
where
\begin{equation}\label{eq:G}
\begin{aligned}
    G((q, y), (q', y')) 
        &\coloneqq  k_{\mathcal  P(\mathcal  Y) \times \mathcal  Y}((q, y), (q', y')) 
            - \mathbb E_{Y \sim q} k_{\mathcal  P(\mathcal  Y) \times \mathcal  Y}((q, Y), (q', y'))\\
        &
            - \mathbb E_{Y' \sim q'} k_{\mathcal  P(\mathcal  Y) \times \mathcal  Y}((q, y), (q', Y')) 
            + \mathbb E_{Y \sim q} \mathbb E_{Y' \sim q'} k_{\mathcal  P(\mathcal  Y) \times \mathcal  Y}((q, Y), (q', Y')). \\
        &= k_{\mathcal  P(\mathcal  Y)}(q, q') \times \left (
             k_{ \mathcal  Y}(y, y')- \mathbb E_{Y \sim q} k_{\mathcal   Y}(Y, y') 
             - \mathbb E_{Y' \sim q'} k_{\mathcal  Y}(y, Y') 
             + \mathbb E_{Y \sim q} \mathbb E_{Y' \sim q'} k_{\mathcal  Y}(Y, Y') 
           \right ) \\
\end{aligned}
\end{equation}
Where the second equality holds when focusing on tensor product kernels.
Comparing \cref{eq:SKCE-u-statistics-estimator} and \cref{eq:G} with the
expression of the ACMMD U-statistics estimator given in
\cref{eq:cgof_mmd_estimator} and \cref{lemma:cgof_mmd_double_expectation}, we
see that the SKCE population criterion equals the ACMMD. However, the SKCE
U-statistics estimator is different from the ACMMD U-statistics estimator:
while the ACMMD U-statistics only requires samples the conditional
distributions $Q_{|X}$, the SKCE U-statistics contains expectations over the
conditional distributions $Q_{|X}$, which are rarely available in practice.

\subsection{Proof of Proposition \ref{prop:universal_kernel_on_PYS}}\label{proof:universal_kernel_on_PYS}
\begin{proof}
    We will show that the image of $q\mapsto \mu_{q}$ is compact, and the
    result will follow from \cite{Christmann2010-ox}. Let $\mathcal M(\mathcal
    Y)$ be the Banach space of measures of sequences endowed with the total
    variation norm:
    \begin{equation*} 
    \|q\|_{\mathrm{TV}} \coloneqq  q_+(\mathcal  Y) + q_{-}(\mathcal  Y)
    \end{equation*}
    We recall that by the Riesz-Markov theorem, ${(\mathcal M(\mathcal  Y),\|\cdot\|_{\textrm{TV}})}$
    can be identified with the topological dual  of
    $C_0(\mathcal  Y)$, $(C_0(\mathcal  S)^\star, \|\cdot\|_{\textrm{op}})$  through an isometric
    isomorphism $q \in \mathcal M(\mathcal  Y)  \longmapsto \tilde q \in C_0(\mathcal  Y)^\star
    $, and for which the following holds:
    \begin{equation*} 
    \tilde q(f) = \int_{\mathcal  Y} f \mathrm{d} q, \quad \forall f \in C_0(\mathcal  Y).
    \end{equation*}
    Let $B\coloneqq\{q\in\mathcal M(\mathcal  Y)\ |\ \Vert q\Vert_{\mathrm{TV}}\leq 1\}$.
    As a unit ball, by the Banach-Alaoglu theorem, $B$ is compact under the
    weak--$\star$ topology and contains all distributions on sequences. We
    will show that $q\mapsto \mu_{q}$ is continuous on $B$ and the result
    will follow. Given that this mapping is linear, it is sufficient to show
    continuity at 0. Moreover, since $\{B(0_\mathcal H, r)\}_{r>0}$ is a
    neighborhood basis of $(\mathcal H, \|\cdot\|_\mathcal H)$, it suffices to
    show that there is a neighborhood $\mathcal V$ of the null measure in the
    weak-$\star$ topology such that $\|\int k_{\mathcal  Y}(y, \cdot) dq(y)\|^2_{\mathcal H} = \int k_{\mathcal  Y}(y, y') \mathrm{d} (q \otimes q)(y, y') < 1 $ for all $q$ in $\mathcal V$.  Since the family
    \begin{equation*} 
    \{q \in \mathcal M(\mathcal  Y), \int f_i (x) \mathrm{d} q(x) < \epsilon,\,\, i\in 1, \dots k, f_i \in C_0(\mathcal  Y)\}
    \end{equation*}
    form a neighborhood basis of the weak-$\star$ topology, we can consider
    candidates of this form for $\mathcal V$. In particular, let us set
    $\{f_i\} = \{x \longmapsto \sqrt{k_{\mathcal  Y}(x, x)}  \in \mathcal C_0(\mathcal  Y)\}$,
    since $ k_{\mathcal  Y} \in C_0(\mathcal  Y \times \mathcal  Y)$, and
    $\epsilon = 0.5$. On this neighborhood, we have:
    \begin{equation*}
    \int k(y, y') \mathrm{d} q(y) \mathrm{d} q(y') \leq \int \sqrt{k(y, y) \times k(y', y')} \mathrm{d} q(y) \mathrm{d} q(y') \leq 0.5^2 < 1,
    \quad \forall q \in \mathcal V
        \end{equation*}
    showing the
    continuity of the map in question.
    As a consequence, the image of $ B_\mathcal M(S)(0, 1) $ by the map $q \longmapsto \mu_{q}$
    is compact, implying from \cite{Christmann2010-ox} that the kernel 
    \begin{equation*}
        \tilde{k}(f, g) \coloneqq \exp(-\frac{1}{2 \sigma^2}\|f - g\|_{{\mathcal  H}}^2)
    \end{equation*}
    is universal on that set.
    Thus, we have shown that $\tilde{k}$ is universal on $\mathcal  H$ under the strong topology (e.g. the norm topology in $\mathcal H$).
    This is equivalent to the $ \mathrm{TV} $ topology of $\mathcal P(S)$ since $k$ has discrete masses by proposition 9 of \cite{Amin2023-er},
    and thus $ k_{\mathcal  P(\mathcal  Y)} $ is universal on $ \mathcal  P(\mathcal  Y) $.
\end{proof}

\subsection{Proofs regarding the impact of approximate kernels}\label{sec:proofs_approx_kernels}

To prove the convergence of the ACMMD--Rel estimator and the validity of its test,
we rely on an augmented U-statistics formulation. Let:
\begin{equation*} 
 U \coloneqq (Q_{|X}, \tilde{Y}^{1}, \dots, \tilde{Y}^{R}, \tilde{Y}, Y) 
    \sim \mathbb{ P }_{Q_{|X}} \otimes \mathbb{ Q }_{|}^{\otimes r} \otimes \mathbb{ Q }_{|} \otimes \mathbb{ P }^{Q}_{|} 
 \coloneqq \mathbb{ U }
\end{equation*}
$ U $ is the random variable which, for each model $ Q_{|X} $, concatenates
the synthetic samples $ (\tilde{Y}^{1}, \dots, \tilde{Y}^{R}) $ used to perform the kernel
approximation $ \widehat{ k }_{\mathcal  P(\mathcal  Y)}(q, q') $,
the synthetic sample $ \tilde{Y} $ used to evaluate $ h $, and $ \tilde{Y} $,
a sample from $ \mathbb{ P }^{Q}_{|} $ the conditional distribution of $ Y $ given $ Q_{|X} $.
Then, given $ N $ realizations $ \{  U_i \}_{i=1}^{N}  $, of $ U $, the estimator
$ \widehat{ \operatorname{ACMMD} }\operatorname{--Rel}\vphantom{a}^2$ can be written as a U-statistics
on the $ \{ U_i \}_{i=1}^{N} $
\begin{equation*} 
\widehat{ \operatorname{ACMMD} }\operatorname{--Rel}\vphantom{a}^{2} = \frac{2}{N(N-1)} \sum_{1 \leq i < j \leq N} h_a(U_i, U_j)
\end{equation*}
where
\begin{equation*} 
    h_a(U_i, U_j) \coloneqq 
        \widehat{ k }(\{ \tilde{Y}^r_i \}_{r=1}^{R}, \{ \tilde{Y}^r_j \}_{r=1}^{R}) 
        \times (
            k_{\mathcal  Y}(Y_i, Y_j) + 
            k_{\mathcal  Y}(\tilde{Y}_i, \tilde{Y}_j) 
            - k_{\mathcal  Y}(Y_i, \tilde{Y}_j) 
            - k_{\mathcal  Y}(\tilde{Y}_i, Y_j)
        )
\end{equation*}
We also will note 
\begin{equation*} 
\operatorname{ACMMD}\vphantom{a}^{2}_\mathrm{a} \coloneqq \mathbb{ E }_{U_1, U_2 \sim \mathbb{ U } \otimes \mathbb{ U }}\,\, h_a(U_1, U_2)
\end{equation*}

\subsubsection{Proof of Proposition \ref{prop:acmmd-rel-test-validity}}\label{sec:proof_approx-test-validity}
\begin{algorithm}[tb]
   \caption{ACMMD--Rel Conditional Goodness of fit Test}
   \label{alg:acmmd-rel-test}
\begin{algorithmic}
   \STATE {\bfseries Input:} $\{X_i, Y_i,
   \tilde{Y}_i\}_{i=1}^N \stackrel{\text{i.i.d.}}{\sim} \mathbb P_{X} \otimes
   \mathbb{ P }_{|} \otimes Q_{|}$
   \STATE \textbf{Parameters:} Level $\alpha$,
   kernel $k_{\mathcal  X}$, kernel $ k_{\mathcal  Y}$
   \STATE \texttt{// Estimate $\operatorname{ACMMD--Rel}$ using Algorithm \ref{alg:ACMMD-Rel}} and collect the $ \widehat{ h }(Z_i, Z_j) $ of \cref{eq:acmmd-rel-estimator}
   \STATE $\widehat{\operatorname{ACMMD}}\operatorname{--Rel}\vphantom{a}^2, \{ h(Z_i, Z_j) \}_{1\leq i < j \leq N} \leftarrow \texttt{estimate\_acmmd\_rel}(\{X_i, Y_i, \tilde{Y}_i\}_{i=1}^{N}, Q_{|X_i})$
   \STATE $ \lbrack W^b_i \sim \text{Rademacher} \texttt{ for } i \in 1, \dots, N \texttt{ for } b \in 1, \dots, B \rbrack $
   \STATE $ \lbrack
   \widetilde{\operatorname{ACMMD}}\vphantom{a}^2_b  \leftarrow  \frac{2}{N(N-1)} \sum\limits_{1 \leq i < j \leq N} W^{b}_i W^{b}_j \widehat{ h }(Z_i, Z_j),
    \texttt{ for } b  \texttt{ in } 1, \dots, B \rbrack $
    \STATE \texttt{// See \cref{sec:decision_rule} for how to compute $ \widehat{q}_{1-\alpha} $}
    \STATE $\widehat{q}_{1-\alpha} \leftarrow$ approx.  $(1- \alpha)$-quantile 
    of $\{ \widetilde{\operatorname{ACMMD}}\vphantom{a}^2_b \}_{b=1}^{B}$
    \IF {$ \widehat{ \operatorname{ACMMD} }\vphantom{a}^2 \leq \widehat{q}_{1-\alpha}$}
        \STATE {Fail to reject $H_0$}
    \ELSE
        \STATE {Reject $H_0$}
    \ENDIF
\end{algorithmic}
\end{algorithm}
With this formalism, we now prove that the ACMMD--Rel test has the specified type-I error rate of $ \alpha \in (0, 1) $,
e.g. rejects $ H_0 $ when $ \mathbb{ P }_{|x} = Q_{|x} $ with probability $ \alpha $.
Indeed, straightforward adaptations of the arguments in \cref{sec:proof-wild-bootstrap-acmmd} show that that doing a wild 
bootstrap using the $ \widehat{ h }(Z_i, Z_j) $ is equivalent to estimating 
\begin{equation*} 
\frac{2}{N(N-1)}\sum\limits_{ 1\leq i < j \leq N} h_a(U^{\sigma}_i, U^{\sigma}_j) \coloneqq \widehat{ \operatorname{ACMMD} }\operatorname{--Rel}\vphantom{a}^2_{\sigma_b}
\end{equation*}
where $ U_i^{\sigma} \coloneqq (Q_{|X_i}, \tilde{Y}^1_i, \dots, \tilde{Y}^R_i, \tilde{Y}^{\sigma}_i, Y_i^{\sigma})$,
where $(Y_i^{\sigma}, \tilde{Y}^{\sigma}_i) = (Y_i, \tilde{Y}_i) $ if $ \sigma(i) = 1 $ and $ (\tilde{Y}_i, Y_i) $ otherwise.
The same argument to show that ACMMD test has the desired type-I error rate follows in this case too:
Under $ \mathbb{ P }_{|x} = Q_{|x} $, the sequence
$ \{ \widehat{ \operatorname{ACMMD} }\operatorname{--Rel}\vphantom{a}^2_{\sigma_b} \}_{b=0}^{B}  $ is exchangeable
(noting $ \widehat{\operatorname{ACMMD} }\operatorname{--Rel}\vphantom{a}^2_{\sigma_0} = \widehat{\operatorname{ACMMD} }\operatorname{--Rel}\vphantom{a}^2 $, 
e.g. $ \sigma_0(i) = 1 $ for all $ 1 \leq i \leq N$), and we can repeat the derivations of the proof 
of \cref{lemma:level_acmmd} to show that the ACMMD--Rel test has the desired type-I error rate.

\subsubsection{Proof of Proposition \ref{prop:acmmd-rel-estimator-consistency}}\label{sec:proof_acmmd-rel-estimator-consistency}
We prove a slightly more general version of the proposition, for
kernels of the form $ \phi(d(q, q')^2) $, where $ \phi $ is a Lipschitz
function and $ d $ is a distance on $ \mathcal P(\mathcal Y) $. Setting $ \phi
= e^{- \frac{ \cdot }{ \sigma^2 }} $, we recover the kernels of
\cref{prop:acmmd-rel-estimator-consistency}, which include the exponentiated MMD kernel.
\begin{proposition}
    Assume that $ k_{\mathcal  Y} $ and $ k_{\mathcal
    P(\mathcal  Y)} $ is a kernel of the form $k_{\mathcal P(\mathcal  Y)}(q,
    q') = \phi(d(q, q'))$, for a Lipschitz function $ \phi $ and a function $
    d(q, q') $  admitting an unbiased estimator  of the form $
    \widehat{ d }(\{y_{1}^{r}\}_{r=1}^{R}, \{y_{2}^{r}\}_{r=1}^{R} ) $ where
    $ \{y_{1}^{r}\}_{r=1}^{r} $ and $ \{y_{2}^{r}\}_{r=1}^{r} $ are $ R $ i.i.d
    samples of $ q $ and $ q' $ respectively, with variance $ O(\frac{1}{R}) $
    (the bound in uniform in $ q $ and $ q' $).
    Then, assuming $ R \equiv R(N) $, with $ \lim\limits_{ N  \to \infty }R(N) = +\infty $,
    $ \widehat{\operatorname{ACMMD}}\operatorname{--Rel}\vphantom{a}^2 $
    converges in probability to $
    \operatorname{ACMMD--Rel}\vphantom{a}^2 $ as  $ N  \to \infty $.
\end{proposition}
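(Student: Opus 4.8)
The plan is to split the estimation error into a \emph{bias} contribution, coming from replacing the exact kernel $k_{\mathcal P(\mathcal Y)}$ by its Monte-Carlo approximation $\widehat k = \phi(\widehat d)$, and a \emph{fluctuation} contribution, coming from approximating an expectation by a $U$-statistic. Using the augmented $U$-statistic formulation of \cref{sec:proofs_approx_kernels}, I would write
\begin{equation*}
\widehat{\operatorname{ACMMD}}\operatorname{--Rel}\vphantom{a}^2 - \operatorname{ACMMD--Rel}\vphantom{a}^2
= \bigl(\widehat{\operatorname{ACMMD}}\operatorname{--Rel}\vphantom{a}^2 - \operatorname{ACMMD}\vphantom{a}^2_\mathrm{a}\bigr)
+ \bigl(\operatorname{ACMMD}\vphantom{a}^2_\mathrm{a} - \operatorname{ACMMD--Rel}\vphantom{a}^2\bigr),
\end{equation*}
where $\operatorname{ACMMD}\vphantom{a}^2_\mathrm{a} = \mathbb{E}_{U_1, U_2}[h_a(U_1, U_2)]$ is, for the current value of $R$, exactly the mean of the $U$-statistic, and then show that both brackets tend to $0$ as $N \to \infty$.

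For the bias bracket, I would condition on the pair of predictions $(Q_{|X_1}, Q_{|X_2})$. Given these predictions, the samples $\{\tilde Y^r\}$ that enter the kernel approximation $\widehat k = \phi(\widehat d)$ are independent of the samples $(Y, \tilde Y)$ entering $g$, so the conditional expectation factorizes and the bias reduces to controlling $\mathbb{E}[\phi(\widehat d) \mid Q_{|X_1}, Q_{|X_2}] - \phi(d)$. Since $\phi$ is Lipschitz and $\widehat d$ is unbiased for $d$, Jensen's inequality gives $|\mathbb{E}[\phi(\widehat d)\mid \cdot] - \phi(d)| \le L\,\mathbb{E}[|\widehat d - d|\mid\cdot] \le L\sqrt{\operatorname{Var}(\widehat d)} = O(1/\sqrt R)$, uniformly in the predictions by hypothesis. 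Multiplying by $g$ --- which is bounded since $k_{\mathcal Y}$ is bounded --- and taking the outer expectation yields $|\operatorname{ACMMD}\vphantom{a}^2_\mathrm{a} - \operatorname{ACMMD--Rel}\vphantom{a}^2| = O(1/\sqrt{R(N)}) \to 0$.

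For the fluctuation bracket I would argue in $L^2$. Since $\mathbb{E}[\widehat{\operatorname{ACMMD}}\operatorname{--Rel}\vphantom{a}^2] = \operatorname{ACMMD}\vphantom{a}^2_\mathrm{a}$, it suffices to bound the variance of the $U$-statistic, which by the Hoeffding decomposition is $O(\zeta_1/N) + O(\zeta_2/N^2)$ with $\zeta_1, \zeta_2 \le \mathbb{E}[h_a^2]$. The decisive step is a bound on $\mathbb{E}[h_a^2]$ that is \emph{uniform in $R$}: conditioning again on $(Q_{|X_1}, Q_{|X_2})$ and using the same factorization, $\mathbb{E}[\widehat k^2 \mid \cdot] = \mathbb{E}[\phi(\widehat d)^2 \mid \cdot] \le 2\phi(0)^2 + 2L^2(d^2 + \operatorname{Var}(\widehat d))$, which is uniformly bounded because $d = \operatorname{MMD}^2$ and its estimator are bounded when $k_{\mathcal Y}$ is bounded; combined with the boundedness of $g$ this gives $\mathbb{E}[h_a^2] \le C$ with $C$ independent of $R$. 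Hence $\operatorname{Var}(\widehat{\operatorname{ACMMD}}\operatorname{--Rel}\vphantom{a}^2) \le C'/N \to 0$, so the fluctuation bracket vanishes in $L^2$, hence in probability; Chebyshev's inequality then combines the two brackets to conclude.

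The main obstacle is that this is \emph{not} a standard $U$-statistic: its summand $h_a$ depends on $R = R(N)$, so the kernel changes with $N$ and off-the-shelf $U$-statistic consistency theorems (such as the strong law invoked for $\widehat{\operatorname{ACMMD}}\vphantom{a}^2$) do not apply directly. Everything hinges on obtaining the kernel-approximation bias and the second moment $\mathbb{E}[h_a^2]$ with constants \emph{uniform in $R$}, which is exactly where the boundedness of $k_{\mathcal Y}$ (forcing $\operatorname{MMD}^2$ and its unbiased estimator to be bounded) and the $O(1/R)$ variance of $\widehat d$ are used. A minor care point is that $\phi(x) = e^{-x/(2\sigma^2)}$ is only Lipschitz on the range of values actually attained by $\widehat d$; since $\widehat{\operatorname{MMD}}\vphantom{a}^2$ takes values in a bounded interval, the relevant Lipschitz constant can be taken uniform.
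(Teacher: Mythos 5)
Your proposal is correct and follows essentially the same route as the paper: the same decomposition into a kernel-approximation bias term, controlled via the Lipschitz property of $\phi$, Jensen's inequality, and the uniform $O(1/R)$ variance of $\widehat{d}$, plus a $U$-statistic fluctuation term controlled uniformly in $R$ through the boundedness of $k_{\mathcal Y}$. The only (immaterial) difference is that you bound the fluctuation via the $U$-statistic's variance and Chebyshev, whereas the paper invokes Hoeffding's concentration inequality for bounded $U$-statistics; both hinge on the same uniform-in-$R$ control of $h_a$ and yield convergence in probability.
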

\begin{proof}
    As discussed above, the estimator $ \widehat{ \operatorname{ACMMD} }\operatorname{--Rel}\vphantom{a}^2 $ can be written as a U-statistics
    on the $ \{ U_i \}_{i=1}^{N}  $, where $ U_i = (Q_{|X_i}, \tilde{Y}^{1}_i, \dots, \tilde{Y}^R_i, \tilde{Y}_i, Y_i)$,
    and using the kernel $ h_a $ defined as (accounting approximating $ k_{\mathcal  Y} $ through $ d $ directly)
    \begin{equation*} 
    h_a(U_i, U_j) \coloneqq \phi( \widehat{d}(\{ \tilde{Y}^r_i \}_{r=1}^{R}, \{ \tilde{Y}^r_j \}_{r=1}^{R})) \times (
        k_{\mathcal  Y}(Y_i, Y_j) + k_{\mathcal  Y}(\tilde{Y}_i, \tilde{Y}_j) 
        - k_{\mathcal  Y}(Y_i, \tilde{Y}_j) - k_{\mathcal  Y}(\tilde{Y}_i, Y_j)
        )
    \end{equation*}
    e.g. 
    \begin{equation*} 
    \begin{aligned}
        \widehat{\operatorname{ACMMD}}\operatorname{--Rel}\vphantom{a}^2 = \frac{2}{N(N-1)}\sum\limits_{ i < j }^{  } h_a(U_i, U_j)
    \end{aligned}
    \end{equation*}
    To study the convergence in probability
    of $ \widehat{ \operatorname{ACMMD} }\vphantom{a}^2_a $ to $ \mathrm{\operatorname{ACMMD}}\vphantom{a}^2_a $,
    we use finite-sample bounds on $ U $-statistcs \citet{hoeffding1994sequences}:
    \begin{equation*} 
    \begin{aligned}
        P\left ( \lvert \widehat{ \operatorname{ACMMD} }\vphantom{a}^2_a - \operatorname{ACMMD}\vphantom{a}^2_a \rvert > \left \| h_a \right \|_{\infty} \sqrt { \frac{ \log(2/\delta) }{ 2 \lfloor N / 2 \rfloor } } \right ) \leq \delta
    \end{aligned}
    \end{equation*}
    for all $ \delta > 0 $, where, by assumption, $ k_{\mathcal  Y} $ bounded, and $ \widehat{ k
    }_{\mathcal  P(\mathcal  Y)}  $ is of the form $ \phi( \widehat{ d }(\{ y_1^r \}_{r=1}^R, \{ y_2^r \}_{r=1}^R)) $ for some
    bounded function $ \phi $, implying that $ h_a $ is bounded.
    To show the dependence in $ R $, we bound the difference $
    \operatorname{ACMMD}_a $ and $ \operatorname{ACMMD} $.
    \begin{align*}
    \lvert \operatorname{ACMMD}\vphantom{a}^2_{a} - \operatorname{ACMMD}\vphantom{a}^2 \rvert 
        &= \mathbb{ E }_{\mathbb{ U }, \mathbb{ U }}  \left \lbrack   ( 
                \widehat{ k }_{\mathcal  P(\mathcal Y)}(\{ Y_1^r \}_{r=1}^{R}, \{ Y_2^r\}_{r=1}^{R} ) - k_{\mathcal  P(\mathcal  Y)}(Q_{|X_1}, Q_{X_2})
            ) \times (k_{\mathcal  Y}(Y_1, Y_2) + k_{\mathcal  Y}(\tilde{Y}_1, \tilde{Y}_2) \ \right .  \\
        & \quad \quad \quad \quad   \left . 
            - k_{\mathcal  Y}(Y_1, \tilde{Y}_2) - k_{\mathcal  Y}(\tilde{Y}_1, Y_2))\right \rbrack  \\
        &\leq 4 \left \|k_{\mathcal  Y}\right \|_{\infty} \lvert \mathbb{ E }_{\mathbb{ P }_{Q_{|}} \otimes \mathbb{ Q }^{\otimes r} \times \mathbb{ P }_{Q_{|}} \otimes \mathbb{ Q }^{\otimes r}}
            \widehat{ k }_{\mathcal  P(\mathcal  Y)}(\{ Y_1^r \}_{r=1}^{R}, \{ Y_2^r \}_{r=1}^{R} ) - k_{\mathcal P(\mathcal  Y)}(Q_{|X_1}, Q_{X_2}) \rvert \\
        &\leq  4 \left \|k_{\mathcal  Y}\right \|_{\infty} \left \| \phi \right \|_{\mathrm{Lip}} 
            \mathbb{ E }_{\mathbb{ P }_{Q_{|}} \otimes \mathbb{ Q }^{\otimes r} \times \mathbb{ P }_{Q_{|}} \otimes \mathbb{ Q }^{\otimes r}} 
            \lvert \widehat{ d }(\{ Y_1^r \}_{r=1}^{R}, \{ Y_2^r \}_{r=1}^{R} ) - d(Q_{|X_1}, Q_{X_2}) \rvert \\
        &\leq  4 \left \|k_{\mathcal  Y}\right \|_{\infty} \left \| \phi \right \|_{\mathrm{Lip}} \times \\ 
        & \quad \quad 
            \mathbb{ E }_{\mathbb{ P }_{Q_{|}} \times \mathbb{ P }_{Q_{|}}} \left \lbrack  
            \mathbb{ E }_{\mathbb{ Q }^{\otimes r} \times \mathbb{ Q }^{\otimes r}} 
            \lvert \widehat{ d }(\{ Y_1^r \}_{r=1}^{R}, \{ Y_2^r \}_{r=1}^{R} ) - d(Q_{|X_1}, Q_{X_2}) \rvert \,\, \Big |\,\,  Q_{|X_1}, Q_{X_2} 
        \right \rbrack \\
        &\leq  4 \left \|k_{\mathcal  Y}\right \|_{\infty} \left \| \phi \right \|_{\mathrm{Lip}} \mathbb{ E }_{\mathbb{ P }_{Q_{|}} \times \mathbb{ P }_{Q_{|}}}
            \left \lbrack  \sqrt { \mathbb{ V }_{\mathbb{ Q }^{\otimes r} \times \mathbb{ Q }^{\otimes r}} \widehat{ d }(\{ Y_1^r \}_{r=1}^{R}, \{ Y_2^r \}_{r=1}^{R} )}  
                            \Big |\,\,  Q_{|X_1}, Q_{X_2} \right \rbrack
    \end{align*}
    Where the last inequality follows from Jensen's inequality and the
    unbiasedness of $ \widehat{ d } $. The result follows 
    by applying the assumption on the variance of $ \widehat{ d } $
    (a bound which we assume is uniform in $ Q_{|X){1}}, Q_{|X_{2}} $).
\end{proof}
The term $ \mathbb{ V }_{\mathbb{ Q }^{\otimes r} \times
\mathbb{ Q }^{\otimes r}} \left \lbrack   \widehat{ d }(\{ Y_1^{r} \}_{r=1}^{R}, \{
Y_2^{r} \}_{r=1}^{R} ) | Q_{|X_1}, Q_{|X_2}\right \rbrack  $ can be more precisely characterized depending on $
\widehat{ d } $. For instance, we have, when $ \widehat{ d } $ is a
U-statistics (for instance, using the MMD estimator of \citet[Lemma 6, Equation 4]{gretton2012kernel}),
that \citep[section 5.2.1]{serfling2009approximation} $ \mathbb{
V }_{\mathbb{ Q }^{\otimes r} \times \mathbb{ Q }^{\otimes r}} \widehat{ d
}(\{ Y_1^{r} \}_{r=1}^{R}, \{ Y_2^{r} \}_{r=1}^{R})<  \zeta(Q_1,
Q_{|X_2})/R $, where $ \zeta(Q_{|X_1}, Q_{|X_2}) \coloneqq \mathbb{ V }_{(Y_1, \tilde{Y}_1), (Y_2, \tilde{Y}_2) \sim Q_{|X_1} \otimes Q_{|X_2} }(\tilde{h}((Y_1, \tilde{Y}_1), (Y_2, \tilde{Y}_2)) $
and
$ \tilde{h}((Y_1, \tilde{Y}_1), (Y_2, \tilde{Y}_2)) = k_{\mathcal  Y}(Y_1, Y_2) + k_{\mathcal  Y}(\tilde{Y}_1, \tilde{Y}_2) - k_{\mathcal  Y}(Y_1, \tilde{Y}_2) - k_{\mathcal  Y}(\tilde{Y}_1, Y_2) $, which is uniformly bounded by $ 4 \left \|k_{\mathcal  Y}\right \|_{\infty} $ for bounded kernels.
Putting the two parts
together, we thus have that:
\begin{equation*} 
    P \left ( \left \{ \widehat{ \operatorname{ACMMD} }\vphantom{a}^2_{a} - \mathrm{ACMMD}\vphantom{a}^2 \right \} > 
    4 \left \| k_{\mathcal  Y} \right \|_{\infty} \sqrt { \frac{ \log(2/\delta) }{ 2 \lfloor N / 2 \rfloor }}
     + \frac{16 \left \|k_{\mathcal  Y} \right \|_{\infty}^{2} \left \| \phi \right \|_{\mathrm{Lip}}}{\sqrt {R} }  \right ) \leq \delta 
\end{equation*}
for all $ \delta > 0 $, showing the convergence in probability of $ \widehat{ \operatorname{ACMMD} }_{a} $ to $ \mathrm{ACMMD} $.

\subsection{Additional Details for ACMMD and ACMMD--Rel in the synthetic example}\label{sec:acmmd-proofs-synthetic}
\subsubsection{Derivations of ACMMD in the synthetic example}
We first prove that $ \operatorname{ACMMD} $ is proportional to  $ \Delta p $.
\begin{lemma}
    In the setting described in \cref{sec:toy-synthetic}, we have
\begin{equation*} 
        \mathrm{ACMMD}^2(\mathbb{ P }_{|},  Q_{|}) = C \times \Delta p^{2}
\end{equation*}

for 
\begin{equation*} 
        C \coloneqq \iint_{  }^{  }  k_{\mathcal  X}(p, p')2(1 - e^{-\lambda})
        \frac{ (1-2p)(1-2p')}{ 1 - 4pp'(1 + e^{-\lambda}) / 2 } 
         \left (
		  \frac{2p'e^{-\lambda}}{ 1 - 2p'e^{-\lambda} }
		  + \frac{2pe^{-\lambda}}{ 1 - 2pe^{-\lambda} }
		  + 1
		  \right ) \mathrm{d} \mathbb{ P }_X(p) \mathrm{d} \mathbb{ P }_X(p')
\end{equation*}
\end{lemma}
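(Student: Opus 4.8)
The plan is to start from the double-expectation representation of $\operatorname{ACMMD}^2$ established in \cref{lemma:cgof_mmd_double_expectation}. Conditioning on $X_1 = p$ and $X_2 = p'$ and exploiting the tensor-product structure of the kernel, one obtains
\begin{equation*}
\operatorname{ACMMD}^2(\mathbb{P}_{|}, Q_{|}) = \iint k_{\mathcal{X}}(p, p')\, \mathbb{E}\!\left[g \mid p, p'\right]\, \mathrm{d}\mathbb{P}_X(p)\,\mathrm{d}\mathbb{P}_X(p'),
\end{equation*}
so that the task reduces to evaluating the inner conditional expectation of $g$, which is an alternating sum of four expected kernel values $\mathbb{E}[e^{-\lambda d_H(U, V)}]$ between two \emph{independent} sequences $U, V$ drawn from the data/model processes at parameters $p$ and $p'$.

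To compute such an expectation, I would first make $d_H$ well-defined on variable-length sequences by padding both with $\mathrm{STOP}$ after termination; since $e^{-\lambda d_H(U,V)} = \prod_i e^{-\lambda \mathbf{1}[u_i \neq v_i]}$ and matched $\mathrm{STOP}$ tokens contribute a factor of one, the product is finite and only positions where at least one sequence is still emitting matter. A first-step analysis on the pair process — whose state records which of the two sequences remains active — then yields closed forms via convergent geometric series:
\begin{equation*}
F_U = \frac{1 - 2p}{1 - 2p\, e^{-\lambda}}, \qquad F_{BB} = \frac{(1-2p)(1-2p')}{1 - 2pp'(1+e^{-\lambda})}\left(1 + \frac{2p\, e^{-\lambda}}{1 - 2p\, e^{-\lambda}} + \frac{2p'\, e^{-\lambda}}{1 - 2p'\, e^{-\lambda}}\right),
\end{equation*}
with $F_V$ analogous, where $F_{BB}$ (resp. $F_U$, $F_V$) denotes the expected product of per-position factors starting from a state in which both sequences (resp. only $U$, only $V$) are active. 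Convergence of these series is guaranteed because $p, p' \in (0.3, 0.5)$ forces $2p\, e^{-\lambda} < 1$ and $2pp'(1+e^{-\lambda}) < 1$.

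The crucial step is to separate the first position from the remainder, since the model and the data differ \emph{only} in the first-token distribution. The tail of the process (position two onward) is identical across all four terms of $g$, so each term is governed by the same $F_{BB}, F_U, F_V$, which depend only on $p, p'$. Moreover, the first-token $\mathrm{STOP}$ probability $1-2p$ and the total first-token mass on $\{A, B\}$, namely $2p$, coincide between $\mathbb{P}_{|p}$ and $Q_{|p}$ (and likewise at $p'$). Consequently every contribution in which at least one first token is $\mathrm{STOP}$ is identical across the four terms and cancels in the alternating sum; only the contribution where both first tokens lie in $\{A, B\}$ survives, and it equals $F_{BB}\,\Phi$ with $\Phi \coloneqq a_U a_V + b_U b_V + e^{-\lambda}(a_U b_V + b_U a_V)$, where $(a,b)$ are the first-token $A/B$ probabilities.

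To finish, I would substitute the four first-token laws into $\Phi$ and form the alternating sum: the $pp'$ contributions cancel and one is left with $2\Delta p^{2}(1 - e^{-\lambda})$, independent of $p, p'$, so that $\mathbb{E}[g \mid p, p'] = 2(1-e^{-\lambda})\,\Delta p^{2}\, F_{BB}$. Reinserting $F_{BB}$ and integrating against $k_{\mathcal{X}}$ reproduces exactly $C\,\Delta p^{2}$, with $C$ as stated (noting $1 - 4pp'(1+e^{-\lambda})/2 = 1 - 2pp'(1+e^{-\lambda})$). I expect the main obstacle to be twofold: correctly setting up the per-position transfer recursion for sequences of random length (via $\mathrm{STOP}$-padding), and recognizing the cancellation structure — that all shared tail factors and all $\mathrm{STOP}$-involving first-token terms drop out, leaving only the ``both-active-first-token'' contribution. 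Once this is identified, the surviving algebra isolating the $\Delta p^{2}$ factor is routine.
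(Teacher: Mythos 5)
Your proposal is correct and follows essentially the same route as the paper's proof: both factorize the exponentiated Hamming kernel over positions, observe that only the first-token joint law distinguishes the four terms of $g$ (so every contribution involving a shared tail or a first-token $\mathrm{STOP}$ cancels in the alternating sum $T_{11}+T_{22}-2T_{12}$), and evaluate the surviving common factor by geometric series, which isolates $2(1-e^{-\lambda})\Delta p^{2}$ times the stated integrand. The only difference is bookkeeping: you organize the tail computation as a first-step recursion on the pair process (your $F_U$, $F_V$, $F_{BB}$), whereas the paper explicitly enumerates length configurations $(m,\delta m)$ and sums the resulting series — both yield the identical closed form for $C$.
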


\begin{proof}

Recall that we have
\begin{align*}
	\operatorname{ACMMD}^2 &= \operatorname{MMD}^2(\mathbb{ P }_X \otimes \mathbb{ P }_{|}, \mathbb{ P }_X  \otimes Q_{|})^2
			       &= \int_{  }^{  }k_{\mathcal  X}(p, p')\left ( T_{11} + T_{22} - 2 T_{12} \right ) \text{d} \mathbb P_X(p) \text{d} \mathbb P_X(p')
\end{align*}
\noindent
where
\begin{equation*} 
	T_{12} = \int_{   }^{  }k_{\mathcal  Y}(y, y') p(y | p) q(y' | p')  \mathrm{d}(y) \text{d}(y')
\end{equation*}
and $ T_{22} $ and $ T_{12} $ are defined similarly. For a sequence $ y $, we
define the function $ \mathrm{len}$ given by $\mathrm{len}(y) \coloneqq
\min_{  } \left \{ i \in \mathbb{ N } | y_i = \text{STOP} \right \} $,
which intuitively returns the length of the sequence.
\paragraph{Computing $ T_{ij} $} 
As we will see, a lot of the computations are agnostic to whether we are computing $ T_{11}, T_{22} $ or $ T_{12} $.
Note that the exponentiated hamming distance kernel on $ \mathcal  Y $ writes as a product
\begin{equation*} 
	k_{\mathcal  Y}(y, y') = e^{-\lambda d_H(y, y')} = e^{-\lambda_y \sum_{ i=0 }^{ \infty } \delta(y_i \ne y'_i)} = \prod_{ i=0 }^{ \infty } e^{-\lambda \delta(y_i \ne y'_i)} 
		= \prod\limits_{ i=0 }^{ \max_{  }(\text{len}(y), \text{len}(y'))} e^{-\lambda \delta(y_i \ne y'_i)}
\end{equation*}
let us define the following events
\begin{align*}
	F(m) &\coloneqq \left \{ \min_{ }(\text{len}(y), \text{len}(y')) = m \right \} \\
	G(m, \delta m) &\coloneqq \left \{ \max_{ }(\text{len}(y), \text{len}(y')) = m + \delta m \right \} \\
\end{align*}
which we further break down as 
\begin{align*}
	F_1(m) &= \left \{  \mathrm{len}(y) = m \right \} \cap  \left \{  \text{len}(y') > m \right \} \\
	F_2(m) &= \left \{   \text{len}(y) > m \right \}\cap \left \{ \mathrm{len}(y') = m \right \}  \\
	F_3(m) &= \left \{  \mathrm{len}(y) = m \right \} \cap  \left \{  \mathrm{len}(y') = m \right \} \\
	\implies F(m) &= F_1(m) \cup F_2(m) \cup F_3(m)
\end{align*}
For which the following probabilities hold:
\begin{align*}
P(F_1(m)) &=  P(\text{len}(y) = m) \times P(\text{len}(y') > m) = \left ( (2p)^m \times (1 - 2p) \right) \times (2p')^{m + 1} \\
P(F_2(m)) &=  P(\text{len}(y') = m) \times P(\text{len}(y) > m) = \left ( (2p')^m \times (1 - 2p') \right) \times (2p)^{m + 1} \\
P(F_3(m)) &=  P(\text{len}(y') = m) \times P(\text{len}(y) = m) = \left ( (2p')^m \times (1 - 2p') \right) \times \left ( (2p)^m \times (1 - 2p) \right) \\
P(G(m, \delta m) | F_1(m)) &= P(\text{len}(y') = m + \delta m | \text{len}(y) = m, \text{len}(y') > m) = (2p')^{\delta m - 1} \times (1 - 2p') \delta_{(\delta m \geq  1)}  \\
P(G(m, \delta m) | F_2(m)) &= P(\text{len}(y) = m + \delta m | \text{len}(y') = m, \text{len}(y) > m) = (2p)^{\delta m - 1} \times (1 - 2p) \delta_{(\delta m \geq  1)}  \\
P(G(m, \delta m) | F_3(m)) &= \delta(\delta m = 0)
\end{align*}
Let us note 
\begin{equation*} 
	E(m, \delta m, i) \coloneqq  F_i(m) \cap G(m, \delta m)
\end{equation*}
We have that $ E(m, \delta_m, i) \cap E(m', \delta m', j) = \emptyset $ if $(m, \delta m, i) \ne (m', \delta m', j) $.
\begin{equation*} 
	\Omega = \bigcup_{ m=0 }^{ +\infty }\bigcup_{i=1}^{3}\bigcup_{ \delta m = 0 }^{ +\infty } E(m, \delta m, i)
\end{equation*}
Using the law of total probability, we have that Thus, using the law of total probability:
\begin{align*}
	T_{ij}(p, p') 
            &= \sum\limits_{ m=0 }^{ +\infty }\sum\limits_{ i=1 }^{ 3 }\sum\limits_{ \delta m=0 }^{ +\infty }\mathbb P(E(m, \delta m, i))  \mathbb{ E }(e^{-\lambda d_H(y, y')}  | E(m, \delta m, i)) \\
	    & = \sum\limits_{ m=0 }^{ +\infty }\sum\limits_{ i=1 }^{ 3 }\sum\limits_{ \delta m=0 }^{ +\infty }\mathbb P(F_i(m)\cap G(m, \delta m))  \mathbb{ E }(e^{-\lambda d_H(y, y')}  | E(m, \delta m, i)) \\
            &= \sum\limits_{ m=0 }^{ +\infty }\sum\limits_{ i=1 }^{ 3 }\mathbb P(F_i(m))  \sum\limits_{ \delta m = 0 }^{ +\infty }P(G(m, \delta m) | F_i(m)) \mathbb{ E }(e^{- \lambda d_H(y, y')}  | E(m, \delta m, i)) \\
            &= \sum\limits_{ m=0 }^{ +\infty }\sum\limits_{ i=1 }^{ 3 }\mathbb P(F_i(m))\mathbb{ E }(e^{-\lambda d_H(y_{:m}, y_{:m}')} | F_i(m))   \\ 
            &       
                \times \sum\limits_{ \delta m = 0 }^{ +\infty }P(G(m, \delta m) | F_i(m)) \mathbb{ E }(e^{-  \lambda d_H(y_{m+1:m+\max_{  }(\delta m, 1)}, y'_{m+1:m+\max_{  }(\delta m, 1))}}  | E(m, \delta m, i), p, p') \\
            &= \sum\limits_{ m=0 }^{ +\infty }\sum\limits_{ i=1 }^{ 3 }\mathbb P(F_i(m))\mathbb{  E }(e^{-\lambda d_H(y_{:m}, y_{:m}')} | F_i(m)) \times 
                    \sum\limits_{ \delta m = 0 }^{ +\infty }P(G(m, \delta m) | F_i(m)) e^{-\lambda (\max_{  }(0, \delta m - 1) + \delta(m > 0))} \\
            & = \sum\limits_{m=0}^{ +\infty } \sum\limits_{ i=1 }^{ 3 }\mathbb P(F_i(m))\mathbb{  E }(e^{-\lambda d_H(y_{:m}, y_{:m}')} | F_i(m)) 
                    \left (\prod\limits_{ i=1 }^{ \max(m - 1, 1) } \mathbb{ E }(e^{-\lambda \delta(y_i \ne y'_i)}  |  F_i(m))) \right)^{\delta(m \geq  2)} \\ 
            & 
                \times \sum\limits_{ \delta m = 0 }^{ +\infty }P(G(m, \delta m) | F_i(m)) e^{-\lambda  (\max_{  }(0, \delta m - 1) + \delta(m > 0)) } \\
\end{align*}
where we break down the factorized hamming distance over the sequence into the
sum of the hamming distances over each coordinate, and made use of the fact
that 
\begin{equation*} 
	d_H(y_{m:m+\delta m}, y_{m:m+\delta m}')  = \max_{  }(0, \delta m - 1) + \delta(m > 0)
\end{equation*}
conditioned on $ F_i(m) $ and
$ G(m, \delta m) $. The disjunction of cases is necessary in order to not
count the term $ 0^{th} $ term twice in the event when $ m = 0 $.
This representation is convenient since whenever $ m \geq  2 $, for any $  1 \leq  i \leq m - 1 $, 
\begin{equation*} 
	P(\delta(y_i, y_i') = 1 |  F_i(m)) = \frac{(p p') + (p p')}{(p + p) \times (p' + p')} = \frac{1}{2}  = P(\delta(y_i, y_i') = 0 | F_i(m))
\end{equation*}
meaning we have
\begin{align*}
T_{ij}(p, p') & = \sum\limits_{m=0}^{ +\infty } \sum\limits_{ i=1 }^{ 3 } \mathbb{ E}(e^{-\lambda \delta(y_0 \ne y'_0)} |F_i(m)) \mathbb{ P }(F_i(m)) \left ( \frac{ 1 + e^{-\lambda} }{ 2 } \right )^{\max_{  }(m-1, 0) } \\ 
	      & \times \sum\limits_{ \delta m = 0 }^{ +\infty }P(G(m, \delta m) | F_i(m)) e^{-\lambda (\max_{  }(0, \delta m - 1) + \delta(m > 0)) } \\
\end{align*}
Inserting the relevant event probabilities into the expression for $ T_{ij} $, we have
\begin{align*}
	T_{ij}(p, p') &= \sum\limits_{ m=0 }^{ +\infty } \left ( \frac{ 1 + e^{-\lambda} }{ 2 } \right )^{\max_{  }(m-1, 0)}  \\
		  & \times \left ( \mathbb{ E }(e^{-\lambda \delta(y_0 \ne y'_0)} | F_1(m))(2p)^{m} \times (1 - 2p) (2p')^{m+1}\times (1 - 2p') e^{-\lambda \delta(m>0)}
                    \sum\limits_{ \delta m = 1 }^{ +\infty } e^{- \lambda  (\delta m - 1)} (2p')^{\delta m - 1}  \right .\\
		  & + \left .      \mathbb{ E }(e^{-\lambda \delta(y_0 \ne y'_0)} | F_2(m))(2p')^{m} \times (1 - 2p') (2p)^{m+1} \times (1 - 2p)e^{-\lambda \delta(m>0)}
                    \sum\limits_{ \delta m = 1 }^{ +\infty } e^{- \lambda (\delta m - 1)} (2p)^{\delta m - 1}  \right . \\
		  & + \Big .       \mathbb{ E }(e^{-\lambda \delta(y_0 \ne y'_0)} | F_3(m))  (2p')^{m} \times (1 - 2p') (2p)^{m}(1-2p) \Bigg ) \\
                  &= \sum\limits_{ m=0 }^{ +\infty }  \left ( \frac{ 1 + e^{-\lambda} }{ 2 } \right )^{\max_{  }(m-1, 0)}  \\
		  & \times \left ( \mathbb{ E }(e^{-\lambda \delta(y_0 \ne y'_0)} | F_1(m)) (2p)^{m} \times 
                    (1 - 2p) (2p')^{m+1}\times (1 - 2p') e^{-\lambda \delta(m>0)}\sum\limits_{ \delta m = 0 }^{ +\infty } e^{- \lambda \delta m} (2p')^{\delta m}  \right .\\
		  & 
                    + \left .      \mathbb{ E }(e^{-\lambda \delta(y_0 \ne y'_0)} | F_2(m))(2p')^{m} \times 
                        (1 - 2p') (2p)^{m+1} \times (1 - 2p) e^{-\lambda \delta(m>0)}\sum\limits_{ \delta m = 0 }^{ +\infty } e^{- \lambda \delta m} (2p)^{\delta m}  \right . \\
		  & + \Big .       \mathbb{ E }(e^{-\lambda \delta(y_0 \ne y'_0)} | F_3(m))	(2p')^{m} \times (1 - 2p') (2p)^{m}(1-2p) \Bigg ) \\
		  &= \sum\limits_{ m=0 }^{ +\infty } \left ( \frac{ 1 + e^{-\lambda} }{ 2 } \right )^{\max_{  }(m-1, 0)}  \\
		  & \times \left ( \mathbb{ E }(e^{-\lambda \delta(y_0 \ne y'_0)} | F_1(m))(2p)^{m} \times (1 - 2p) (2p')^{m+1}\times (1 - 2p') 
                    \times \frac{e^{-\lambda  \delta(m > 0) }}{1 - 2p'e^{-\lambda} } \right .\\
		  & + \left .      \mathbb{ E }(e^{-\lambda \delta(y_0 \ne y'_0)} | F_2(m))(2p')^{m} \times (1 - 2p') (2p)^{m+1} \times (1 - 2p) 
                    \times \frac{e^{-\lambda \delta(m > 0) }}{1 - 2pe^{-\lambda} } \right . \\
		  & + \Big .       \mathbb{ E }(e^{-\lambda \delta(y_0 \ne y'_0)} | F_3(m))(2p')^{m} \times (1 - 2p') (2p)^{m}(1-2p) \Bigg ) \\
\end{align*}
Now, some simplifications arise when $ m \geq  1 $. Indeed, in that case, $
\mathbb{ E }(e^{-\lambda \delta(y_0, y'_0)} | F_i(m))$ is independent of $ i $.
Noting $ T_{ij}^{1}(p, p') $ the sum of the terms for $ m \geq 1 $, we thus have
\begin{align*}
T_{ij}^{1}(p, p') 
		  &= \sum\limits_{ m=1 }^{ +\infty }  \mathbb{ E }(e^{-\lambda \delta(y_0 \ne y'_0)} | F(m))\left ( \frac{ 1 + e^{-\lambda} }{ 2 } \right )^{m - 1}  \\
		  & \times \left ( (2p)^{m} \times (1 - 2p) (2p')^{m+1}\times (1 - 2p') \times \frac{e^{-\lambda}}{1 - 2p'e^{-\lambda} } \right .\\
		  & + \left . (2p')^{m} \times (1 - 2p') (2p)^{m+1} \times (1 - 2p) \times \frac{e^{-\lambda}}{1 - 2pe^{-\lambda} } \right . \\
		  & + \Big . (2p')^{m} \times (1 - 2p') (2p)^{m}(1-2p) \Bigg ) \\
\end{align*}
Noting $ A_{ij} $ the term  $ \mathbb{ E } (e^{-\lambda \delta(y_0 \ne y'_0)} | F(m)) $, which is constant for all $ m \geq 1  $
\begin{align*}
T_{ij}^{1}(p, p') 
		  &= A_{ij} \sum\limits_{ m=1 }^{ +\infty }  \left ( \frac{ 1 + e^{-\lambda} }{ 2 } \right )^{m - 1}  \\
		  & \times \left ( (2p)^{m} \times (1 - 2p) (2p')^{m+1}\times (1 - 2p') \times \frac{e^{-\lambda}}{1 - 2p'e^{-\lambda} } \right .\\
		  & + \left . (2p')^{m} \times (1 - 2p') (2p)^{m+1} \times (1 - 2p) \times \frac{e^{-\lambda}}{1 - 2pe^{-\lambda} } \right . \\
		  & + \Big . (2p')^{m} \times (1 - 2p') (2p)^{m}(1-2p) \Bigg ) \\
		  &= A_{ij}(1 - 2p) (1 - 2p') 4pp'(
		  \frac{2p'e^{-\lambda}   }{ 1 - 2p'e^{-\lambda} }
		  + \frac{2p e^{-\lambda}  }{ 1 - 2pe^{-\lambda} }
		  + 1
		  )\sum\limits_{ m =0 }^{ +\infty } (4pp'(1 + e^{-\lambda})/{2})^m \\
		  &= A_{ij} \times \frac{ (1-2p)(1-2p')4pp' }{ 1 - 4pp'(1 + e^{-\lambda}) / 2 } \left (
		  \frac{2p'e^{-\lambda}}{ 1 - 2p'e^{-\lambda} }
		  + \frac{2pe^{-\lambda}}{ 1 - 2pe^{-\lambda} }
		  + 1
		  \right ) \\
		  &= C \times A_{ij}
\end{align*}
where 
\begin{equation*} 
	C(p, p') = \frac{ (1-2p)(1-2p')4pp' }{ 1 - 4pp'(1 + e^{-\lambda}) / 2 } \left (
		  \frac{2p'e^{-\lambda}}{ 1 - 2p'e^{-\lambda} }
		  + \frac{2pe^{-\lambda}}{ 1 - 2pe^{-\lambda} }
		  + 1
		  \right )
\end{equation*}
is a constant that does not depend on $ i, j $.
We compute the $ m=0 $ sum, noted $ T_{ij}^{0}(p, p') $. We have
 \begin{align*}
	 T_{ij}^{0}(p, p')
		  & = \left ( \mathbb{ E }(e^{-\lambda \delta(y_0 \ne y'_0)} | F_1(0)) \times (1 - 2p) (2p')\times (1 - 2p') \times \frac{1}{1 - 2p'e^{-\lambda} } \right .\\
		  & \quad \quad + \left .     \mathbb{ E }(e^{-\lambda \delta(y_0 \ne y'_0)} | F_2(0)) \times (1 - 2p') (2p) \times (1 - 2p) \times \frac{1}{1 - 2pe^{-\lambda} } \right . \\
		  & \quad \quad + \Big .      \mathbb{ E }(e^{-\lambda \delta(y_0 \ne y'_0)} | F_3(0)) \times (1 - 2p') (1-2p) \Bigg ) \\
 \end{align*}
 And we need to compute the terms $ \mathbb{ E }(e^{-\lambda \delta(y_0 \ne y'_0)} | F_i(0)) $ individually.
\paragraph{i=1, i=2} 
For $ i=1 $, we must have $ y_0 \ne y'_0 $, since $ y_0 = \mathrm{STOP} $, and
$ \mathrm{len}(y') > 0 $. Thus, $ \mathbb{ E }(e^{-\lambda \delta(y_0 \ne y'_0)} |
F_1(0)) = e^{-\lambda} $. Similarly, $ \mathbb{ E }(e^{-\lambda \delta(y_0 \ne y'_0)} |
F_2(0)) = e^{-\lambda} $.
\paragraph{i=3}
In that case, we must have $ y_0 = y'_0 = \text{STOP} $, since $ \text{len}(y) = \text{len}(y') = 0 $.
Thus, $ \mathbb{ E }(e^{-\lambda \delta(y_0 \ne y'_0)} | F_3(0)) = 1 $.

\noindent
Putting this together, we have
\begin{equation*} 
        T_{ij}^{0}(p, p')
       	   = (1-2p)(1-2p')\left (   \frac{2p'e^{-\lambda}}{1 - 2p'e^{-\lambda} } + \frac{2pe^{-\lambda}}{1 - 2pe^{-\lambda} } + 1 \right )
\end{equation*}
With that notation, we have:
\begin{align*}
	\operatorname{ACMMD}^2(\mathbb{ P }_{|}, \mathbb{ Q }_{|}) &= \int_{  }^{  } k_{\mathcal  X}(p, p')C(p, p')(A_{11} + A_{22} - 2A_{12}) \text{d} \mathbb P_X(p) \text{d} \mathbb P_X(p') \\
							   & + \int_{  }^{  } k_{\mathcal  X}(p, p')(T^{0}_{11}(p, p') + T^{0}_{22}(p, p') - 2T^{0}_{12}(p, p')) \text{d} \mathbb P_X(p) \text{d} \mathbb P_X(p') \\
	                                                   &= \int_{  }^{  } k_{\mathcal  X}(p, p')C(p, p')(A_{11} + A_{22} - 2A_{12}) \text{d} \mathbb P_X(p) \text{d} \mathbb P_X(p') \\
\end{align*}
since $ T^{0}_{ij} $ does not depend on $ i, j $.
We can narrow the variation down even further: by noting
$ p_{ij}^{A} = P(\delta(y_i \ne y'_i) = 0 | F(m)) $ (resp $ p_{ij}^{B} = P(\delta(y_i \ne y'_i) = 0 | F(0)) $),
since $ \mathbb{  E }({e^{-\lambda \epsilon }}) = p(\epsilon = 0)(1 -e^{-\lambda}) + e^{-\lambda} $ if $ \epsilon $ is a Bernoulli random variable,
\begin{equation*} 
\operatorname{ACMMD}^2(\mathbb{ P }_{|}, \mathbb{ Q }_{|}) = \int_{  }^{  } k_{\mathcal  X}(p, p')C(p, p')(1 - e^{-\lambda})(p_{11}^{A} + p_{22}^{A} - 2p_{12}^{A}) \text{d} \mathbb P_X(p) \text{d} \mathbb P_X(p') \\
\end{equation*}
We now compute the probabilities $ p_{ij}^{A} $ for $ i, j \in \left \{ 1, 2 \right \} $.
In every case, such $ p_{ij}^{A} $ can be written as:

\begin{align*}
	p_{ij}^A &= \frac{ P(y_0 = y_0' = A) + P(y_0 = y_0' = B) }{ P(\{y_0 \in \left \{ A, B \right \}\}\cap \{y_0' \in \left \{ A, B \right \}\} ) }
		 &\frac{ P(y_0 = y_0' = A) + P(y_0 = y_0' = B) }{ 4pp'}
\end{align*}
and we have
\begin{align*}
p_{11}^{A} &= \frac{ pp' + pp' }{ 4pp' } = \frac{1}{2} \\
p_{22}^A &= \frac{ (p + \Delta p)(p' + \Delta p)  + (p - \Delta p)(p' - \Delta p) }{ 4pp'} = \frac{2pp' + 2 \Delta p^2}{4pp'} \\
p_{12}^A  &= \frac{ (p)(p' + \Delta p)  + (p)(p' - \Delta p) }{ 4pp'} = \frac{1}{2} \\
\implies p_{11}^{A} + p_{22}^{A} - 2p_{12}^{A} &= \frac{2pp' + 2 \Delta p^2}{4pp'} - \frac{1}{2} = \frac{2 \Delta p^2}{4pp'}
\end{align*}

\paragraph{Putting it together} 
We thus have
\begin{equation*} 
\mathrm{ACMMD}(\mathbb{ P }_{|}, \mathbb{ Q }_{|})  = \int_{  }^{  } C(p, p') k(p, p')(1 - e^{-\lambda})\frac{2 \Delta p^2}{4pp'}\text{d} \mathbb P_X(p) \text{d} \mathbb P_X(p') \\
\end{equation*}
Recalling that
\begin{equation*} 
	C(p, p') = \frac{ (1-2p)(1-2p')4pp' }{ 1 - 4pp'(1 + e^{-\lambda}) / 2 } \left (
		  \frac{2p'e^{-\lambda}}{ 1 - 2p'e^{-\lambda} }
		  + \frac{2pe^{-\lambda}}{ 1 - 2pe^{-\lambda} }
		  + 1
		  \right )
\end{equation*}
yields the desired result.
\end{proof}

\subsubsection{Closed-form \texorpdfstring{$\operatorname{ACMMD--Rel}$}{ACMMD-Rel} evaluation}

Assuming the same model, it is also possible to evaluate $ \operatorname{ACMMD--Rel}(\mathbb{ P }_{|},  Q_{|}) $ in closed form.
Indeed, $ \operatorname{ACMMD--Rel} $ becomes a special case of the ACMMD formula given above,
with the conditioned variable $ X $ set to be the models $ Q_{|X} $. It is thus possible to show:

\begin{lemma} We have
\begin{equation*} 
        \operatorname{ACMMD--Rel}^2(\mathbb{ P }_{|},  Q_{|}) = C \times \Delta p^{2}
\end{equation*}
for 
\begin{equation*} 
	C = \iint  k_{\mathcal  P(\mathcal  Y)}(q_{|p}, q_{|p'})2 (1 - e^{-\lambda}) \frac{ (1-2p)(1-2p')}{ 1 - 4pp'(1 + e^{-\lambda}) / 2 } \left (
		  \frac{2p'e^{-\lambda}}{ 1 - 2p'e^{-\lambda} }
		  + \frac{2pe^{-\lambda}}{ 1 - 2pe^{-\lambda} }
		  + 1
		  \right ) \mathrm{d} \mathbb P_X(p) \mathrm{d} \mathbb P_X(p')
\end{equation*}
\end{lemma}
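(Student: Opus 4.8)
The plan is to exploit the structural identity, already noted in the main text, that $\operatorname{ACMMD--Rel}$ is nothing but the $\operatorname{ACMMD}$ applied with the conditioning variable $X$ replaced by the prediction $Q_{|X}$, the outer kernel $k_{\mathcal{X}}$ replaced by $k_{\mathcal{P}(\mathcal{Y})}$, and the two compared conditionals set to $\mathbb{P}^{Q}_{|}$ and $Q^{\mathrm{Rel}}_{|}$. My strategy is therefore to show that, in this synthetic model, these replacements leave every quantity appearing in the $\operatorname{ACMMD}$ formula derived in the preceding subsection untouched \emph{except} the outer kernel, so that the result follows by merely substituting $k_{\mathcal{P}(\mathcal{Y})}(q_{|p}, q_{|p'})$ for $k_{\mathcal{X}}(p, p')$ in the final closed form.

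First I would establish that the prediction map $p \longmapsto Q_{|p}$ is injective on the support $(0.3, 0.5)$ of $\mathbb{P}_X$. This holds because the probability that a sample from $Q_{|p}$ terminates immediately (i.e. $y_0 = \mathrm{STOP}$) equals $1 - 2p$, which is strictly decreasing in $p$; hence distinct values of $p$ yield distinct model distributions on $\mathcal{Y}$. Consequently, conditioning on $Q_{|X} = q_{|p}$ is equivalent to conditioning on $X = p$, which identifies $\mathbb{P}^{Q}_{|q_{|p}} = \mathbb{P}(Y \in \cdot \mid X = p) = \mathbb{P}_{|p}$ and $Q^{\mathrm{Rel}}_{|q_{|p}} = q_{|p} = Q_{|p}$. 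In words, the pair of inner conditionals compared by $\operatorname{ACMMD--Rel}$ is exactly the original data/model pair $(\mathbb{P}_{|p}, Q_{|p})$ of the goodness-of-fit computation.

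Given this identification, the two joint distributions entering the reliability MMD, namely $\mathbb{P}_{|Q_{|X}} \otimes \mathbb{P}^{Q}_{|}$ and $\mathbb{P}_{|Q_{|X}} \otimes Q^{\mathrm{Rel}}_{|}$, coincide with the joints $\mathbb{P}_X \otimes \mathbb{P}_{|}$ and $\mathbb{P}_X \otimes Q_{|}$ of the $\operatorname{ACMMD}$ derivation, up to relabelling the index $p$ by its image $q_{|p}$ and pushing $\mathbb{P}_X$ forward along $p \longmapsto q_{|p}$. In particular the inner terms $T_{ij}(p, p')$ computed in the $\operatorname{ACMMD}$ lemma depend only on $k_{\mathcal{Y}}$ and on the conditionals $\mathbb{P}_{|p}, Q_{|p}$, and are therefore unchanged. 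I would then replay that computation verbatim, carrying the factor $k_{\mathcal{P}(\mathcal{Y})}(q_{|p}, q_{|p'})$ through the outer double integral in place of $k_{\mathcal{X}}(p, p')$; the cancellation of the $4pp'$ factors together with the identity $p_{11}^A + p_{22}^A - 2 p_{12}^A = 2\Delta p^2 / (4pp')$ then produce the stated closed form with the claimed constant $C$.

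The only genuinely new step, and hence the main thing to get right, is the injectivity of $p \longmapsto Q_{|p}$ together with the resulting identification of $\mathbb{P}^{Q}_{|q_{|p}}$ with $\mathbb{P}_{|p}$: this is what guarantees that the reliability problem collapses onto the already-solved goodness-of-fit computation rather than introducing genuinely new mixtures over $p$. Once this reduction is justified, everything else is inherited from the $\operatorname{ACMMD}$ computation and requires no further analysis.
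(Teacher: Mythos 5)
Your proposal is correct and follows essentially the same route as the paper, which likewise obtains the result by viewing $\operatorname{ACMMD\text{--}Rel}$ as the $\operatorname{ACMMD}$ with the conditioning variable replaced by $Q_{|X}$ and substituting $k_{\mathcal{P}(\mathcal{Y})}(q_{|p}, q_{|p'})$ for $k_{\mathcal{X}}(p,p')$ in the already-derived closed form. Your explicit injectivity argument for $p \longmapsto Q_{|p}$ (via the strictly monotone stopping probability $1-2p$) is a step the paper leaves implicit, and it is precisely the right justification for identifying $\mathbb{P}^{Q}_{|q_{|p}}$ with $\mathbb{P}_{|p}$.
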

\noindent 
The above lemma leaves the choice of the kernel $ k_{\mathcal  P(\mathcal  Y)} $ open: the tractability
of this expression will follow only if such kernel  can be tractably computed. In the next lemma,
we derive a closed form solution for $ k_{\mathcal  P(\mathcal  Y)}(q, q')$ when 
$ k_{\mathcal  P(\mathcal  Y)}(q, q') = e^{-  \frac{ \text{MMD}^2(q, q') }{ 2 \sigma^2 } } $,
where the MMD is computed with an Exponentiated Hamming kernel on $ \mathcal Y $.
\begin{lemma}
	We have
\begin{equation*} 
	\operatorname{MMD}^2(q_{|p}, q_{|p'})  = T(p, p) + T(p', p') - 2T(p, p')
\end{equation*}
Where
\begin{align*}
	T(p, p') &= C(p, p') A(p, p') + T^{0}(p, p') \\
	C(p, p') &= \frac{ (1-2p)(1-2p')4pp' }{ 1 - 4pp'(1 + e^{-\lambda}) / 2 } \left ( \frac{2p'e^{-\lambda}}{ 1 - 2p'e^{-\lambda} } + \frac{2pe^{-\lambda}}{ 1 - 2pe^{-\lambda} } + 1 \right ) \\
	A(p, p') &= \frac{ 2pp' + 2 \Delta p^2}{4pp'} \times (1 - e^{-\lambda}) + e^{-\lambda} \\
        T^{0}(p, p') & = (1-2p)(1-2p')\left (   \frac{2p'e^{-\lambda}}{1 - 2p'e^{-\lambda} } + \frac{2pe^{-\lambda}}{1 - 2pe^{-\lambda} } + 1 \right )
\end{align*}
\end{lemma}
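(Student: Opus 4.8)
The plan is to reduce the squared MMD to a single scalar integral and then recycle the coordinate-wise bookkeeping already carried out for $T_{ij}$ in the derivation of the synthetic ACMMD. First I would expand the MMD into its canonical three terms,
\begin{equation*}
\operatorname{MMD}^2(q_{|p}, q_{|p'}) = T(p, p) + T(p', p') - 2 T(p, p'),
\end{equation*}
where $T(p, p') \coloneqq \mathbb{E}_{y \sim q_{|p},\, y' \sim q_{|p'}}\, e^{-\lambda d_H(y, y')}$ is the cross-expectation of the Exponentiated Hamming kernel under the two model conditionals. This is exactly the structural identity claimed, so the whole problem collapses to producing a closed form for the single cross-term $T(p, p')$.

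The key observation is that $T(p, p')$ has precisely the same combinatorial structure as the quantity $T_{ij}(p, p')$ computed in \cref{sec:acmmd-proofs-synthetic} for the data-versus-model ACMMD: the $\operatorname{STOP}$ probability of the model equals that of the data at every position (it is left unperturbed at $1 - 2p$), so the length events $F_i(m)$, $G(m, \delta m)$ and $E(m, \delta m, i)$, together with all their probabilities, carry over verbatim, as does the geometric summation over $m$ and $\delta m$ that produces the factor $C(p, p')$. Moreover, the only place in which the perturbation $\Delta p$ enters the two model conditionals is the marginal of the first token $y_0$; every subsequent coordinate $y_i$ with $i \geq 1$ is drawn from the unperturbed $(A : p,\, B : p)$ law, so its per-coordinate contribution is still $\tfrac{1 + e^{-\lambda}}{2}$, exactly as in the data case. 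Consequently I can reuse the split $T = T^{1} + T^{0}$ into the $m \geq 1$ and $m = 0$ parts: in the $m = 0$ part at least one of $y_0, y_0'$ equals $\operatorname{STOP}$, so the $A$-versus-$B$ split is irrelevant and $T^0(p, p')$ is unchanged from the data computation, whereas the $m \geq 1$ part factorizes as $C(p, p')$ times a single first-token kernel value.

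The only genuinely new computation is therefore the first-token factor $A(p, p') \coloneqq \mathbb{E}\!\left[ e^{-\lambda \delta(y_0 \neq y_0')} \mid F(m) \right]$ for $m \geq 1$, i.e.\ conditioned on both first tokens being non-$\operatorname{STOP}$. Writing $P(y_0 = y_0') = \big[(p - \Delta p)(p' - \Delta p) + (p + \Delta p)(p' + \Delta p)\big] / (4 p p') = (2 p p' + 2 \Delta p^2)/(4 p p')$ and using $\mathbb{E}[e^{-\lambda \delta(y_0 \neq y_0')}] = P(y_0 = y_0')(1 - e^{-\lambda}) + e^{-\lambda}$ yields precisely the stated $A(p, p')$; assembling $T(p, p') = C(p, p') A(p, p') + T^0(p, p')$ and substituting into the MMD identity finishes the proof. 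The main obstacle is not any single calculation but rather justifying the reduction cleanly: one must verify that both conditionals $q_{|p}$ and $q_{|p'}$ share the identical $\operatorname{STOP}$ structure so that the length-event machinery and the geometric series from the earlier lemma transfer without change, and that the perturbation is genuinely confined to coordinate $0$, so that it modifies only the constant $A(p, p')$ and leaves $C(p, p')$ and $T^0(p, p')$ intact.
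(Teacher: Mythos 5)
Your proposal is correct and matches the intended argument: the paper states this lemma without a separate proof precisely because $T(p,p')$ is the cross-term $T_{22}(p,p')$ from the preceding ACMMD derivation (both marginals being the perturbed model), so the length-event decomposition, the geometric series yielding $C(p,p')$, and the $m=0$ term $T^{0}(p,p')$ transfer verbatim, with only the first-token factor replaced by $A_{22} = \tfrac{2pp'+2\Delta p^2}{4pp'}(1-e^{-\lambda})+e^{-\lambda}$. Your computation of that factor and the three-term MMD expansion are both correct, so nothing is missing.
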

\noindent
Combining the two lemmas allows us to obtain a computable expression for $ \operatorname{ACMMD--Rel}(\mathbb{ P }_{|},  Q_{|}) $.


%
\begin{figure}[htpb]
    \includegraphics[width=0.5\textwidth]{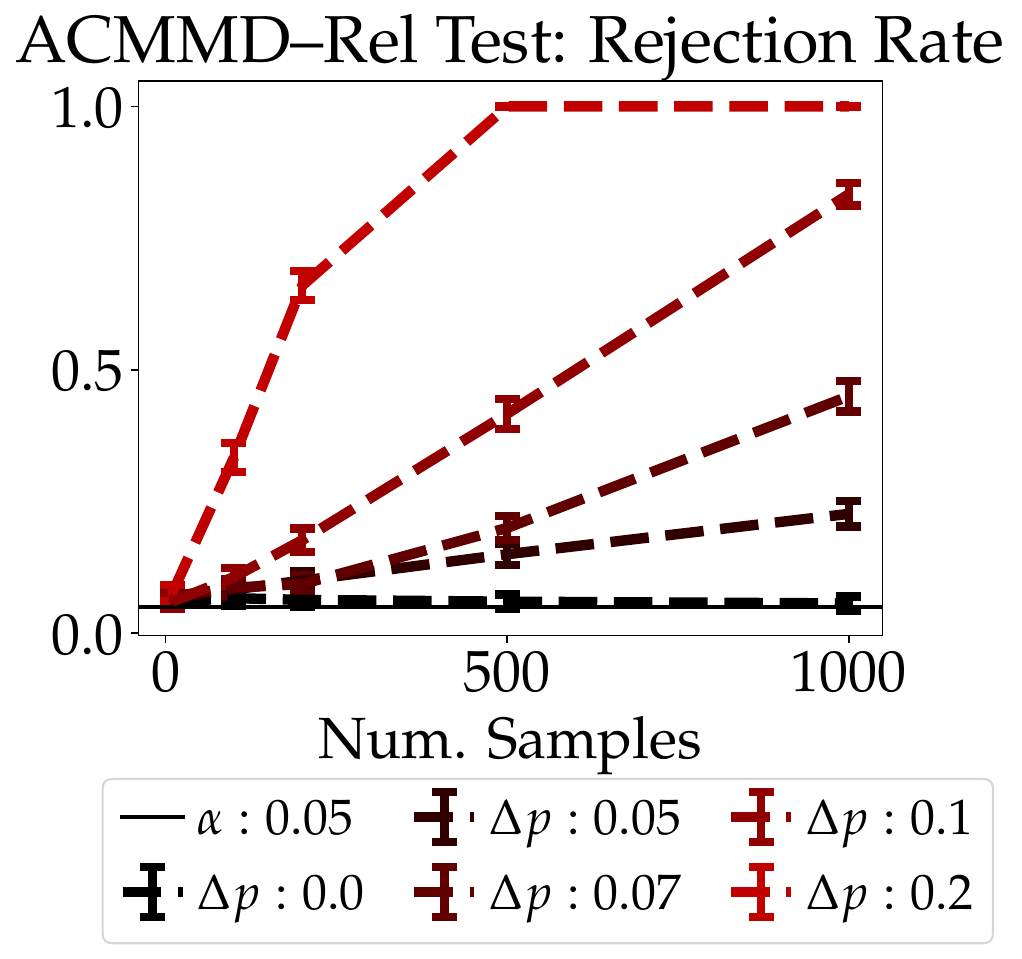}
    \includegraphics[width=0.46\textwidth]{synthetic_exps_plots_rebuttals/synthetic_mmd_cgof_power_cgof_composite.pdf}
    \includegraphics[width=0.5\textwidth]{synthetic_exps_plots_rebuttals/synthetic_mmd_cgof_unbiasedness_cgof_composite.pdf}
    \includegraphics[width=0.5\textwidth]{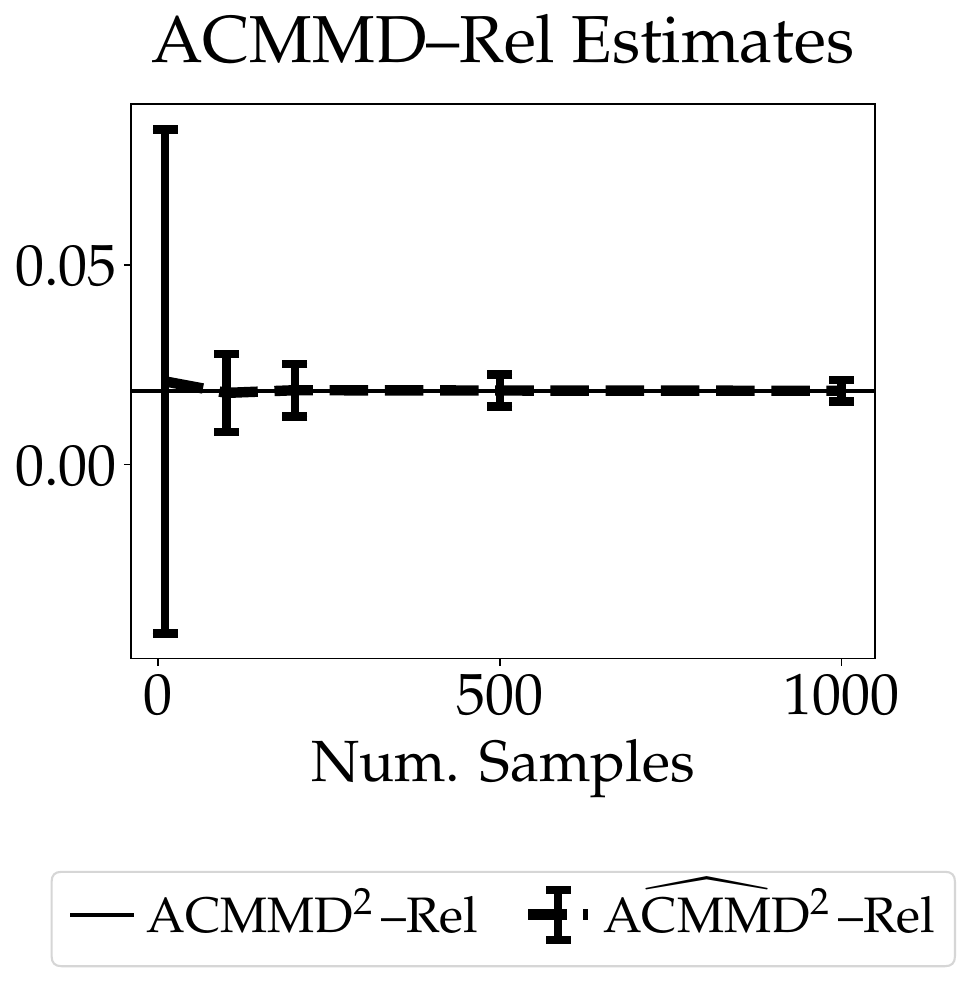}
    \caption{Top left: Rejection Rate of the ACMMD test as a function of the dataset size, for different values of $ \Delta p $. Top right: Rejection Rate of the ACMMD test as a function of $ \Delta p $, for different dataset sizes.
Bottom left: Estimated ACMMD as a function of the dataset size. Bottom right: Estimated ACMMD--Rel as a function of $ \Delta p $.
To compute these estimates, we use dataset sizes of
$ \left \{ 10, 100, 200, 500, 1000 \right \}  $, used $ m=5 $
atoms for the prior on $ p $ between $ p_1=0.3 $, $ p_2=0.45
$, used $ \lambda = 1 $, $ \Delta p = 0.25 $, and average over $ 300 $ runs.
In addition, we plot the true value  $ \operatorname{ACMMD}(\mathbb{ P }_{|}, Q_{|}) $
using the closed-form expression derived above. 
    }
    \label{fig:toy-rejection_rate-and-estimate}
\end{figure}

\newpage

\section{Additional Experiments}
\subsection{Additional Experiments for the semisynthetic ProteinMPNN data}

In addition to the figures of \cref{sec:discriminative}, which use $ T=0.1 $ to plot
the estimates and rejection rates of ACMMD and ACMMD--Rel on the ProteinMPNN
synthetic data, we provide here the same plots for $ T=1.0$ the value used to train ProteinMPNN.
We notice that detecting a given change in temperature is slightly simpler for $ T=1.0 $ than
for $ T=0.1 $.
\begin{figure}[htpb]
    \includegraphics[width=0.5\textwidth]{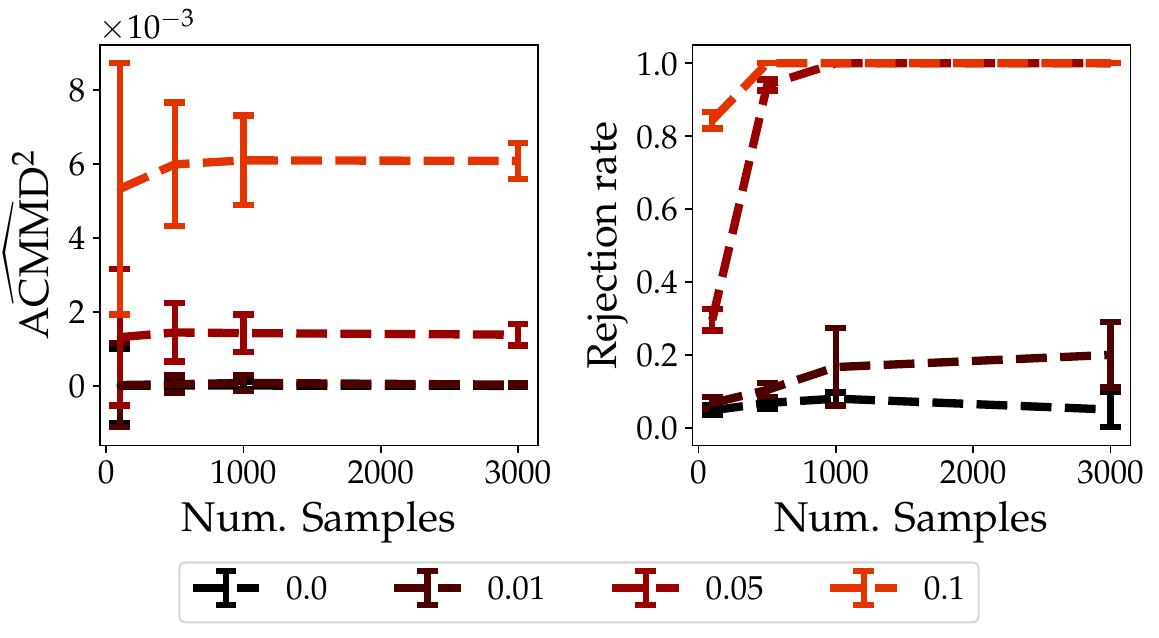}
    \includegraphics[width=0.5\textwidth]{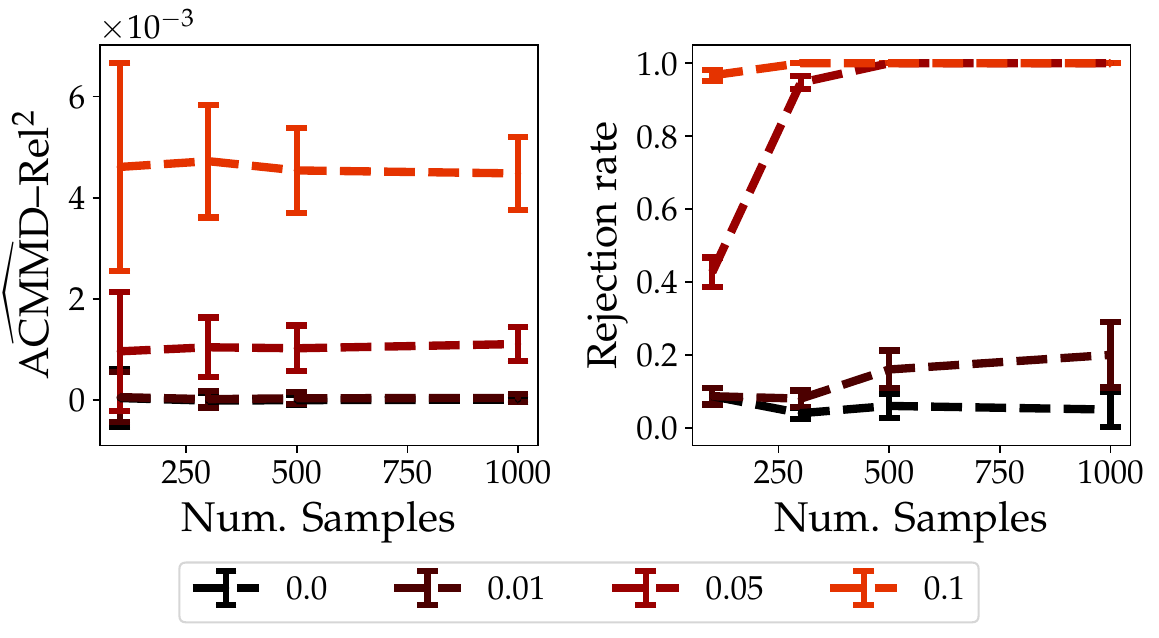}
    \includegraphics[width=0.5\textwidth]{figures/synthetic_exps_plots/temperature_0.1_type_cgof.pdf}
    \includegraphics[width=0.5\textwidth]{figures/synthetic_exps_plots/temperature_0.1_type_calibration.pdf}
    \caption{ACMMD and ACMMD--Rel estimates and rejection rate in the semisynthetic setting 
    of \cref{sec:discriminative}. The different lines indicate a different temperature shift
    between the two MPNN models. Top panel shows uses a base temperature of 
    $ T=1.0 $, while the bottom panel uses $ T=0.1 $.
    }
\end{figure}

\subsection{Additional Experiments for the structural superfamily evaluation}
We include in \cref{fig:superfamily_T_bar_cgof_calibration}
the values of $ \widehat{ \operatorname{ACMMD} }\operatorname{--Rel}\vphantom{a}^2 $
for different superfamilies (which was not included in \cref{sec:superfamily-evaluation}),
and compare it with the values of $ \widehat{ \operatorname{ACMMD} }\vphantom{a}^2 $.
In line with the hyperparameter tuning results of \cref{sec:proteinmpnn-whole-data-evaluation},
we notice that high temperature are highly detrimental from a reliability perspective.
Intuitively, increasing the temperature of ProteinMPNN makes the model
``underconfident''. Since a reliable model is neither over-- nor underconfident,
this decrease of confidence is penalized by ACMMD--Rel. This also shows that increasing
the temperature of a model does not make the model fallback to its prior distribution
(otherwise the model would be more reliable). Instead, it just increases the uncertainty
of the model in a detrimental fashion.
\begin{figure}[htbp]
    \includegraphics[width=.48\textwidth]{figures/superfamily_plots/superfamily_T_bar_cgof.pdf}
    \includegraphics[width=.48\textwidth]{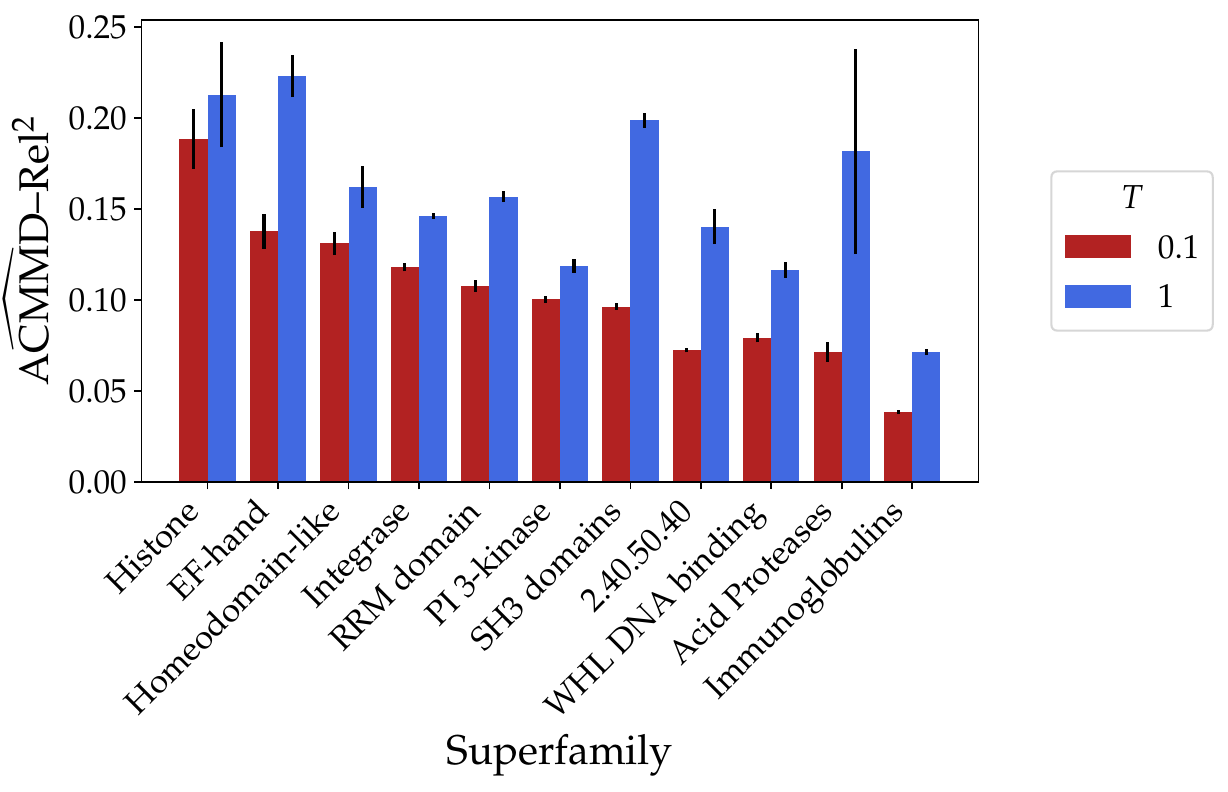}
    \caption{ Value of 
        $ \widehat{ \operatorname{ACMMD} }\vphantom{a}^2 $ (left)  and
        $ \widehat{ \operatorname{ACMMD} }\operatorname{--Rel}\vphantom{a}^2 $ (right)
        between ProteinMPNN and the CATH S60 reference dataset on a subset of 10 superfamilies for two 
        different temperatures $ T=1.0 $ and $ T=0.1 $.
    }
    \label{fig:superfamily_T_bar_cgof_calibration}
\end{figure}

\section{Known Kernels for protein sequences and structures} In the context of
inverse folding, computing the ACMMD requires a kernel on sequences $k_\mathcal
Y$ and a kernel on protein structures. This section contains a brief overview
of non neural-network based, known kernels for protein sequences and
structures. The main desiderata to achieve when choosing kernels for computing
goodness-of-fit criterion is to find kernels that are able to detect (up to
statistical noise) any deviation from a perfect fit between the model and the
data. Such kernels are referred to as \emph{universal}.

\paragraph{The protein formalism}

The most general formalism for the space protein structures is the set of
equivalence classes of graphs where the equivalence relationship is defined to
the existence of a graph isomorphism. The need for equivalence classes stems
from the fact that different labelling policies exist for a given protein,
meaning that a single protein can be associated to multiple graphs. However,
this labelling will in practice not be completely arbitrary: first, the
set of candidate labelling can be restricted to the ones consistent with
covalent bounds. But in the inverse folding problem, the setting is even
simpler: the protein structure is restricted to its backbone, which is
sequential by nature. This limits the set of covalent-bound consistent
labelling policies to two (the forward and the backward one), and my vague
understanding is that there is a terminal atom in protein, which suggests the
existence of a canonical direction: thus, only one labelling policy remain, and
protein structures can thus be associated to the set of atom locations
$\bigcup_{i=1}^{+\infty} \mathbb R^i$. This set differs from the set of protein
sequences $\bigcup_{i=1}^{+\infty} \mathcal A$ in that the ``alphabet'' is the
real line instead of a finite set of symbols. The restriction from the space of
graphs to the space of variable-length
sequences since there it is known that no graph kernels commonly in use are
even characteristic \cite{Kriege2020-xc}. The space $\bigcup_{i=1}^{+\infty}
\mathcal X$ (for arbitrary $\mathcal X$ have been investigated by the
time series community), which have developed a set of kernels to carry out data
analysis on it. We provide some background on such kernels below.

\paragraph{Background: alignment kernels for real-valued sequences of arbitrary length}
Alignment kernels \cite{cuturi2007kernel, cuturi2017soft, saigo2004protein,vert2004local} refer to a diverse set of variety of kernels
constitute a family of kernels on $\bigcup_{i=1}^{+\infty} \mathcal X^i$ that are computed based on aggregating the similarities between all possible ``alignment candidates'' between two inputs $x_1$ and $x_2$. There are two main subfamilies of alignment kernels, which both use slightly different alignment definitions: local alignment kernels, and global alignment kernels.

\paragraph{Local alignment kernels}
Local alignment kernels \cite{saigo2004protein, vert2004local} are kernels of the form
\begin{equation}
    k_{\text{LA}}(x, y) = \sum_{\pi \in \Pi(x, y)} \exp(\beta s(x, y, \pi))
\end{equation}
Where 
$$s(x, y, \pi) = \sum_{i=1}^{|\pi|}s(x_1^{(\pi_1(i))}, x_2^{(\pi_2(i))}) + \sum_{i=1}^{|\pi|-1} g(\pi_1(i+1) - \pi_1(i)) + g(\pi_2(i+1) - \pi_2(i))$$
and $ \Pi(x, y)$ is the set of all possible \emph{alignments} of $x$ and $y$, e.g. the set of all 2-tuple of $p$-long sequences
\begin{equation*}
\pi \coloneqq ((\pi_1(1), \dots, \pi_1(p)), (\pi_2(1), \dots, \pi_2(p))
\end{equation*}
where
\begin{equation*}
1 \leq \pi_1(1) < \pi_1(2) \dots < \pi_2(p) \leq n \\
1 \leq \pi_2(1) < \pi_1(2) \dots < \pi_2(p) \leq m \\
\end{equation*}
Importantly, local alignment kernels involve a gap function, and thus give a specific status to insertions and deletions, unlike global alignment kernels, as we will see below. The local alignment kernel can be seen as computing the (soft) minimum
of a discrepancy within the set of all possible alignments. The use of a soft minimum (and not a hard one) is crucial to ensure positive definiteness. Local alignment kernels seem to have been designed initially for finite alphabets target. When $g=0$, the necessary and sufficient condition on $s$ to ensure that $k_{\text{LA}}$ is a positive definite is for $s$ to be a conditionally positive definite kernel \footnote{A kernel is c.p.d if $\sum_{i, j=1}^{n} c_i c_j s(x^{(i)}, x^{(j)}) \geq 0 \forall c_1, \dots, c_n, c_1 + \dots + c_n = 0$.}.
This is in particular verified if $(s(x^i, y^i))_{1 \leq i, j \leq |\mathcal A|}$ is positive definite. I need further reading to investigate whether the case of infinite $\mathcal X$ was studied.

\paragraph{Global alignment kernels}

Global alignment kernels \cite{cuturi2007kernel, cuturi2017soft} also perform a softmin over alignment, but do not incorporate gaps in their score, and use a slightly different notion of alignment, namely:
\begin{equation*}
\pi \coloneqq ((\pi_1(1), \pi_2(1), \dots, (\pi_1(p), \pi_2(p))
\end{equation*}
where now, the constraints on $\pi_1$ and $\pi_2$ are
\begin{equation*}
\begin{aligned}
1 = \pi_1(1) < \pi_1(2) \dots < \pi_2(p) = n \\
1 = \pi_2(1) < \pi_1(2) \dots < \pi_2(p) = m \\
\pi_1(i+1) \leq \pi_1(i) + 1 \quad \text{unitary increments} \\
(\pi_1(i+1) - \pi_1(i))  + (\pi_2(i+1) - \pi_2(i)) \geq 1 \quad \text{no repetitions}
\end{aligned}
\end{equation*}
Unlike the previous alignment definition, this one explicitly maps each item in each sequence with another item in the other sequence, and does not try to account for potential gaps. Let us call $\mathcal A$ the set of all alignment. The final definition for a global alignment kernel is then:

\begin{equation}\label{eq:roto-translation-invariance}
    k_{\text{GA}}(x, y) = \sum_{\pi \in \mathcal A(x, y)} \exp(\sum_{i=1}^{\pi} s(x_1^{(\pi_1(i))}, x_2^{(\pi_2(i))})
\end{equation}
As stated in \citet[Theorem 1]{cuturi2007kernel}, $k_{\text{GA}}$ will be positive definite if $k(x, y) \coloneqq \exp(s(x, y))$ is a positive definite kernel such that $\frac{k}{(1 - k)}$ is positive definite.

\end{document}